\documentclass{article}

\usepackage[accepted]{icml2026}

\usepackage[utf8]{inputenc}
\usepackage[T1]{fontenc}
\usepackage{hyperref}
\usepackage{url}
\usepackage{booktabs}
\usepackage{amsfonts}
\usepackage{nicefrac}
\usepackage{microtype}
\usepackage{xcolor}
\usepackage{pifont}

\usepackage{amsmath}
\usepackage{amsthm}
\usepackage{algorithm}
\usepackage[T1]{fontenc}
\usepackage{subcaption}
\usepackage{wrapfig}
\usepackage{dsfont}
\usepackage{multirow}
\usepackage{mathrsfs}
\usepackage{graphicx}
\usepackage{hyperref}

\usepackage[capitalise]{cleveref}

\Crefname{assumptionH}{\textbf{H}\hspace{-3pt}}{\textbf{H}\hspace{-3pt}}
\crefname{assumptionH}{\textbf{H}}{\textbf{H}}

\newtheorem{assumptionA}{\textbf{A}\hspace{-2pt}}
\Crefname{assumptionA}{\textbf{A}\hspace{-3pt}}{\textbf{H}\hspace{-3pt}}
\crefname{assumptionA}{\textbf{A}}{\textbf{A}}

\Crefname{assumptionB}{\textbf{B}\hspace{-3pt}}{\textbf{H}\hspace{-3pt}}
\crefname{assumptionB}{\textbf{B}}{\textbf{B}}

\usepackage[acronym]{glossaries-extra}
\setabbreviationstyle[acronym]{long-short}

\newacronym{SOSMC}{{{\textsc{\small SOSMC}}}}{stochastic optimisation via sequential Monte Carlo}
\newacronym{SMC}{{{\textsc{\small SMC}}}}{sequential Monte Carlo}
\newacronym{LVM}{{{\textsc{\small LVM}}}}{latent variable model}
\newacronym{ML}{{{\textsc{\small ML}}}}{maximum likelihood}
\newacronym{MMLE}{{{\textsc{\small MMLE}}}}{maximum marginal likelihood estimation}
\newacronym{MCMC}{{{\textsc{\small MCMC}}}}{Markov chain Monte Carlo}
\newacronym{LMC}{{{\textsc{\small LMC}}}}{Langevin Monte Carlo}
\newacronym{JALA}{{{\textsc{\small JALA}}}}{Jarzynski adjusted Langevin algorithm}
\newacronym{MALA}{{{\textsc{\small MALA}}}}{Metropolis adjusted Langevin algorithm}
\newacronym{RWM}{{{\textsc{\small RWM}}}}{random-walk Metropolis}
\newacronym{HMC}{{{\textsc{\small HMC}}}}{Hamiltonian Monte Carlo}
\newacronym{PGD}{{{\textsc{\small PGD}}}}{particle gradient descent}
\newacronym{SOUL}{{{\textsc{\small SOUL}}}}{stochastic optimisation via unadjusted Langevin algorithm}
\newacronym{EM}{{{\textsc{\small EM}}}}{Expectation-Maximisation}
\newacronym{SGD}{{{\textsc{\small SGD}}}}{stochastic gradient descent}
\newacronym{JALA-EM}{{{\textsc{\small JALA-EM}}}}{Jarzynski adjusted Langevin algorithm for EM}
\newacronym{ULA}{{{\textsc{\small ULA}}}}{unadjusted Langevin algorithm}
\newacronym{EBM}{{{\textsc{\small EBM}}}}{energy-based model}
\newacronym{LEBM}{{{\textsc{\small LEBM}}}}{latent energy-based model}
\newacronym{IS}{{{\textsc{\small IS}}}}{importance sampling}
\newacronym{MSE}{{{\textsc{\small MSE}}}}{mean-squared error}
\newacronym{ESS}{{{\textsc{\small ESS}}}}{effective sample size}
\newacronym{PL}{{{\textsc{\small P\L}}}}{Polyak-\L ojasiewicz}
\newacronym{SFLA}{{{\textsc{\small SFLA}}}}{slow-fast Langevin algorithm}
\newacronym{IPLA}{{{\textsc{\small IPLA}}}}{interacting particle Langevin algorithm}
\newacronym{CD}{{{\textsc{\small CD}}}}{contrastive divergence}
\newacronym{PCD}{{{\textsc{\small PCD}}}}{persistent contrastive divergence}
\newacronym{ImpDiff}{{{\textsc{\small ImpDiff}}}}{implicit diffusion}
\newacronym{MY}{{{\textsc{\small MY}}}}{Moreau-Yosida}
\newacronym{MYPGD}{{{\textsc{\small MYPGD}}}}{Moreau-Yosida particle gradient descent}
\newacronym{MYULA}{{{\textsc{\small MYULA}}}}{Moreau-Yosida unadjusted Langevin algorithm}

\theoremstyle{plain}
\newtheorem{theorem}{Theorem}
\newtheorem{proposition}{Proposition}
\newtheorem{lemma}{Lemma}

\theoremstyle{definition}

\newtheorem{remark}{Remark}

\def\md{{\mathrm d}}

\usepackage[textwidth=1.8cm, textsize=scriptsize]{todonotes}

\icmltitlerunning{Efficient Stochastic Optimisation via Sequential Monte Carlo}

\begin{document}

\twocolumn[
\icmltitle{Efficient Stochastic Optimisation via Sequential Monte Carlo}

  \icmlsetsymbol{equal}{*}

  \begin{icmlauthorlist}
    \icmlauthor{James Cuin}{imperial}
    \icmlauthor{Davide Carbone}{davide}
    \icmlauthor{Yanbo Tang}{imperial}
    \icmlauthor{O. Deniz Akyildiz}{imperial}
  \end{icmlauthorlist}

  \icmlaffiliation{imperial}{Department of Mathematics, Imperial College London, London, United Kingdom}
  \icmlaffiliation{davide}{Laboratoire de Physique de l'Ecole Normale Supérieure, ENS Université PSL, CNRS, Sorbonne Université, Université de Paris, Paris, France}

  \icmlcorrespondingauthor{James Cuin}{jamie.cuin23@imperial.ac.uk}

  \icmlkeywords{Machine Learning, ICML}

  \vskip 0.3in
]

\printAffiliationsAndNotice{}

\begin{abstract}
The problem of optimising functions with intractable gradients frequently arises in machine learning and statistics, ranging from maximum marginal likelihood estimation procedures to fine-tuning of generative models. Stochastic approximation methods for this class of problems typically require inner sampling loops to obtain (biased) stochastic gradient estimates, which rapidly becomes computationally expensive. In this work, we develop sequential Monte Carlo (SMC) samplers for optimisation of functions with intractable gradients. Our approach replaces expensive inner sampling methods with efficient SMC approximations, which can result in significant computational gains. We establish convergence results for the basic recursions defined by our methodology which SMC samplers approximate. We demonstrate the effectiveness of our approach on the reward-tuning of energy-based models within various settings.
\end{abstract}

\section{Introduction}
In machine learning and statistics, we often encounter minimisation problems for a loss function $\ell(\theta): \mathbb{R}^{d_\theta} \to \mathbb{R}$, where the gradient can be expressed as an expectation over a parameter-dependent (intractable) distribution $\pi_\theta$, i.e., $\nabla_\theta \ell(\theta) = \mathbb{E}_{X \sim \pi_\theta} \left[ H_\theta(X) \right]$, for some measurable function $H_\theta: \mathbb{R}^{d_x} \to \mathbb{R}^{d_\theta}$. Developing efficient gradient estimators in this setting is critical as it arises in many applications, including maximum marginal likelihood estimation \citep{atchade2017perturbed,de2021efficient, akyildiz2025interacting, gruffaz2024stochastic}, training energy-based \citep{song2021train, carbone2024efficient,oliva2025uniform} and latent energy-based models \citep{pang2020learning, marks2025learning}, and fine-tuning of generative models \citep{marion2025implicit}. The main challenge is the dependence of the distribution $\pi_\theta$ on the parameter $\theta$, which is often intractable. Consequently, obtaining estimates of the gradient $\nabla_\theta \ell(\theta)$ requires inner sampling loops using \gls*{MCMC} methods \citep{atchade2017perturbed,de2021efficient}, which can be expensive and slow to converge.

In this work, we propose a novel approach to this class of problems using \gls*{SMC} \citep{del2006sequential}. We replace the inner sampling loops with efficient \gls*{SMC} approximations, allowing us to develop a framework that is both computationally efficient and theoretically sound. Our main contributions are as follows:

\textbf{(C1)} We develop a general and flexible \gls*{SMC}-based framework for optimising functions with intractable gradients. We show that some existing algorithms can be seen as special cases of our framework. The resulting class of \gls*{SMC} algorithms are efficient and general, and can be adapted to various problem settings.

\textbf{(C2)} We propose a high-level theoretical analysis of the iterations which our \gls*{SMC}-based algorithms approximate. We provide convergence rates under standard assumptions for idealised versions of our algorithms, indicating their effectiveness in practice. We further provide a theoretical discussion on the behaviour of the \gls*{ESS} under our setting.

\textbf{(C3)} We demonstrate the effectiveness of our approach through extensive experiments on fine-tuning energy-based models, showcasing significant improvements in both computational efficiency and optimisation performance.

\subsection{Notation}
Fix a measurable space $({E}, \mathcal{E})$. Given a measurable function $f: {E} \to \mathbb{R}^d$ and a measure $\mu$ on $(E, \mathcal{E})$, we write $\mu(f) := \int_{{E}} f(x) \mu(\mathrm{d}x) \in \mathbb{R}^d$. We use $\|\cdot\|_2$ to denote the Euclidean norm, and we use $B_x(y)$ to denote an $L^2$ ball centred at $x$ with radius $y$. A Markov kernel $\mathsf{K}$ from a measurable space $(E,\mathcal{E})$ to $(E',\mathcal{E}')$ is a mapping $\mathsf{K}: E \times \mathcal{E}' \to [0,1]$ such that for each $x \in E$, $\mathsf{K}(x,\cdot)$ is a probability measure on $\mathcal{E}'$, and for each $A \in \mathcal{E}'$, the map $x \mapsto \mathsf{K}(x,A)$ is $\mathcal{E}$-measurable. We denote the density of a Markov kernel $\mathsf{K}$ with respect to the Lebesgue measure by the same symbol.

\section{Background}
\label{sec:background}
\subsection{Stochastic Optimisation with Intractable Gradients}
We are interested in gradient-based optimisation for a loss $\ell:\mathbb{R}^{d_\theta} \to \mathbb{R}$ where the gradient is given by an expectation over a parameter-dependent distribution $\pi_\theta$, for $\theta \in \mathbb{R}^{d_\theta}$. We assume that the distribution $\pi_\theta$ has a density with respect to the Lebesgue measure on $\mathbb{R}^{d_x}$ given by
\begin{align}\label{eq:pi_theta}
\pi_\theta(x) = e^{-U_\theta(x)}/Z_\theta, \quad Z_\theta = \int_{\mathbb{R}^{d_x}} e^{-U_\theta(x)} \md x,
\end{align}
where $U_\theta:\mathbb{R}^{d_x} \to \mathbb{R}$ is the potential function and $Z_\theta$ is the intractable normalising constant. We further assume that there exists a measurable function $H_\theta:\mathbb{R}^{d_x} \to \mathbb{R}^{d_\theta}$ such that the gradient of the loss $\ell$ can be expressed as
\begin{align}\label{eq:gradient_form}
\nabla_\theta \ell(\theta) = \mathbb{E}_{X \sim \pi_\theta} \left[ H_\theta(X) \right],
\end{align}
where $\pi_\theta$ is defined in \eqref{eq:pi_theta}. This setting is of significant interest in the literature and includes a wide range of applications, some of which are summarised in the next section.

\subsection{Related Literature}\label{sec:related_work}
Below we summarize some of the key applications that fit the setting of \eqref{eq:gradient_form} and highlight the key existing work from a methodological perspective. We omit application-oriented works as there are simply too many works using the frameworks developed for estimating the gradient \eqref{eq:gradient_form}.

\noindent\textbf{\Gls*{MMLE}}: Consider a latent variable model with observed data $Y$ and latent variables $X$, where the joint distribution is given by $p_\theta(x, y)$. The marginal likelihood is $p_\theta(y) = \int p_\theta(x, y) dx$. The gradient of the negative log-marginal likelihood can be expressed as $\nabla_\theta \ell(\theta) = -\mathbb{E}_{X \sim p_\theta(x|y)}[\nabla_\theta \log p_\theta(x, y)]$. This fits the form of \eqref{eq:gradient_form} with $\pi_\theta = p_\theta(\cdot | y)$ and $H_\theta(\cdot) = - \nabla_\theta \log p_\theta(\cdot, y)$. This setting is related to the \gls*{EM} algorithm \citep{dempster1977maximum} and its stochastic variants \citep{fort2003convergence,gruffaz2024stochastic}. A number of methods that use (approximate) \gls*{MCMC} methods appeared recently in the literature, including \citet{atchade2017perturbed,de2021efficient,gruffaz2024stochastic}. An \gls*{SMC}-based approach was proposed by \citet{cuin2025learning}, which is a special case of our framework, see Remark \ref{Remark:1}. We also note that a number of other interacting particle-based approaches were also proposed for this problem \citep{kuntz2023particlealgorithmsmaximumlikelihood,akyildiz2025interacting}.

\noindent\textbf{Training \glspl*{EBM}}: For a data distribution $p_{\mathrm{data}}$, the training objective for training \glspl*{EBM} is to minimise the negative log-likelihood $\ell(\theta) = - \mathbb{E}_{Y \sim p_{\mathrm{data}}}[\log p_\theta(Y)]$, for $\pi_\theta(\cdot) \propto \exp(-E_\theta(\cdot))$. The gradient of this loss can be expressed as $\nabla_\theta \ell(\theta) = \mathbb{E}_{X \sim \pi_\theta}[\nabla_\theta E_\theta(X)] - \mathbb{E}_{Y \sim p_{\mathrm{data}}}[\nabla_\theta E_\theta(Y)]$, so that, $H_\theta(\cdot) = \nabla_\theta E_\theta(\cdot) - \mathbb{E}_{Y \sim p_{\mathrm{data}}}[\nabla_\theta E_\theta(Y)]$. Standard \gls*{CD} \citep{hinton2002training,song2021train} uses (approximate) \gls*{MCMC} steps to sample from $\pi_\theta$. Instead of fresh \gls*{MCMC} runs, \gls*{PCD} algorithms \citep{tieleman2008training} maintain a set of unweighted particles that approximate the gradient (see \citet{oliva2025uniform} for an analysis of this method). Improving on this particle-based approach, \citet{carbone2024efficient} develops an \gls*{SMC}-based approach to estimate the intractable gradient (which will be also shown to be a special case of our general method, see Remark \ref{Remark:1}).

\noindent\textbf{Reward tuning}:
The alignment of a (pre-trained) parameterised generative model $\pi_{0}$ with a downstream objective, encoded through the potentially non-differentiable reward function $R$, is defined by the $\mathrm{KL}$-regularised objective, $\ell(\theta) = - \mathbb{E}_{\pi_{\theta}} [R(X)] + \beta_{\mathrm{KL}} \mathrm{D}_{\mathrm{KL}}(\cdot \Vert \cdot)$, where the forward $\mathrm{KL}$ variant is natural when direct access to a reference distribution is available (see Appendix \ref{appdx:exp-langevin-process-details}), whilst the reverse $\mathrm{KL}$ variant is preferred when $\pi_{0}$ is specified implicitly (see Appendix \ref{appdx:exp-ebm-2d-details}). Indeed, the respective gradient fits the form of \eqref{eq:gradient_form} (see Appendix \ref{app:form-of-H-reward-tuning}). Optimising such an objective requires estimates under the evolving model $\pi_{\theta}$, whose $\theta$ dependence is typically implicit through a stochastic sampling procedure. Standard training approaches based on \gls*{MCMC}, mentioned above, are typically employed. As an alternative, \citet{marion2025implicit} develops a single-loop scheme that update parameters and particles in a joint manner.

\textbf{Sequential Monte Carlo methods for optimisation.} In the special case of \gls*{MMLE}, the most related works to ours are \citet{johansen2008particle} and \citet{crucinio2025mirror} which proposed \gls*{SMC}-based \gls*{MMLE} methods. \citet{johansen2008particle} propose a method based on \gls*{SMC} which concentrates on the minimisers of the marginal likelihood as $N$ grows. \citet{crucinio2025mirror} develops an approach based on mirror descent for updating parameters while \gls*{SMC} is used for posterior sampling. A similar method is also studied in \citet{cuin2025learning} for \gls*{MMLE} where the approach is developed with \gls*{ULA}-based proposals and general optimisers. While derived for different settings, we also mention a number of \gls*{SMC} methods for optimisation here. In \citet{miguez2009sequential,miguez2010sequential}, authors introduce an \gls*{SMC} method for cost functions that can be sequentially optimised in space (rather than time). Gradient-free stochastic optimisation methods have also been developed based off \gls*{SMC} methods, see, e.g. \citet{akyildiz2020parallel,gerber2022global}. A related line of work is the class of \gls*{EM} algorithms for state-space models \citep{kantas2015particle,chopin2020introduction, yildirim2012monte}, which use \gls*{SMC} methods to estimate gradients of the marginal log-likelihood. While the latter \gls*{SMC}-\gls*{EM} methods can be seen as special cases of our framework, the former optimisation methods operate in different settings.

\section{Sequential Monte Carlo for Optimisation}\label{sec:sosmc}
Consider a generic first-order stochastic optimiser update:
\begin{align}
\theta_{k+1} = \textsf{OPT}(\theta_k, g_k, \gamma_k), \label{eq:generic_first_order}
\end{align}
where $g_k$ is the output of a (possibly stochastic) gradient estimator and $(\gamma_k)_{k\geq 1}$ is the step-size sequence. In our setting, $g_k \approx \nabla_\theta \ell(\theta_k) = \mathbb{E}_{X\sim \pi_{\theta_k}}[H_{\theta_k}(X)]$ is an intractable gradient of the loss function $\ell(\theta)$, and $\pi_{\theta_k}$ is a probability distribution that depends on the current parameter $\theta_k$. In this section, we are interested in developing \gls*{SMC} methods to sequentially sample from the sequence of distributions $(\pi_{\theta_k})_{k\geq 0}$, in order to construct an estimator for $\nabla_\theta \ell(\theta_k)$ at each iteration of a first order optimiser. As opposed to standard methods which use \gls*{MCMC} kernels to approximately sample from each $\pi_{\theta_k}$ independently, we propose to leverage the scalability of \gls*{SMC} methods to sequentially sample from the sequence $(\pi_{\theta_k})_{k\geq 0}$, reusing samples from previous iterations to improve the efficiency of the sampling procedure.

\subsection{Feynman-Kac flows for gradient estimation}

We follow the \gls*{SMC} samplers methodology \citep{del2006sequential}. Consider the target distribution at time $k$, $\pi_{\theta_k}$, and let $\mathsf{L_{n-1}}: E \times \mathcal{E} \rightarrow [0,1]$ be a sequence of backward Markov kernels with densities $\mathsf{L_{n-1}}(x_n, x_{n-1})$ for $1 \leq n \leq k$. We then define the following sequence of unnormalised probability distributions on the product space $E^{k+1}$:
\begin{align}
\Pi_{\theta_{0:k}}(x_{0:k}) := \Pi_{\theta_k}(x_k) \prod_{n=1}^k \mathsf{L_{n-1}}(x_n, x_{n-1}), \label{eq:smc_target}
\end{align}
where $\Pi_\theta(x) = e^{-U_\theta(x)}$ is the unnormalised density associated to $\pi_\theta$. Let $\pi_{\theta_{0:k}} \propto \Pi_{\theta_{0:k}}$ be the associated probability distribution on $E^{k+1}$. It can be shown that the $x_k$-marginal is indeed $\pi_{\theta_k}$. The idea of \gls*{SMC} samplers is then to construct a sequence of approximations to the distributions $(\pi_{0:k})_{k\geq 0}$ on the product spaces $(E^{k+1})_{k\geq 0}$, using a collection of weighted particles. Marginalising these approximations to the last coordinate $k$ then provides approximations to the sequence of target distributions $(\pi_{\theta_k})_{k\geq 0}$.

To define a sequential importance sampling scheme, we also need to define a proposal. Let $\mathsf{K_n}: E \times \mathcal{E} \rightarrow [0,1]$ be a sequence of forward Markov kernels with densities $\mathsf{K_n}(x_{n-1}, x_n)$ for $1 \leq n \leq k$. Define the following sequence of proposal distributions on the product space $E^{k+1}$:
\begin{align}
q_{0:k}(x_{0:k}) := q_0(x_0) \prod_{n=1}^k \mathsf{K_n}(x_{n-1}, x_n), \label{eq:smc_proposal}
\end{align}
where $q_0$ is an initial distribution on $E$. The unnormalised importance weights associated to the target distribution \eqref{eq:smc_target} and the proposal distribution \eqref{eq:smc_proposal} can then be computed as
\begin{align}
W_k(x_{0:k}) &:= \frac{\Pi_{\theta_{0:k}}(x_{0:k})}{q_{0:k}(x_{0:k})} \\
&= W_{k-1}(x_{0:k-1}) G_k(x_{k-1}, x_k), \label{eq:smc_weights}
\end{align}
where
\begin{align}
G_k(x_{k-1}, x_k) := \frac{\Pi_{\theta_k}(x_k) \mathsf{L_{k-1}}(x_k, x_{k-1})}{\Pi_{\theta_{k-1}}(x_{k-1}) \mathsf{K_{k}}(x_{k-1}, x_k)}. \label{eq:smc_incremental_weight}
\end{align}
Define a test function $\varphi: E \rightarrow \mathbb{R}^{d_\theta}$. Then, we have the following Feynman-Kac formulae.
\begin{theorem}[\citealt{del2004feynman}]\label{thm:feynman_kac}
For any $k \geq 1$, define $G_0(x_0):=\Pi_{\theta_0}(x_0)/q_0(x_0)$. If $X_{0:k} \sim q_{0:k}$, then
\begin{align}\label{eq:feynman-kac-identity}
\Pi_{\theta_k}(\varphi) := \mathbb{E}\left[ \varphi(X_k) \prod_{n=0}^k G_n(X_{n-1}, X_n) \right].
\end{align}
and $\pi_{\theta_k}(\varphi) = \Pi_{\theta_k}(\varphi) / \Pi_{\theta_k}(1)$ where $G_0(X_{-1}, X_0):= G_0(X_0)$.
\end{theorem}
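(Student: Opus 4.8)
The statement is the standard Feynman--Kac / importance-sampling identity for \gls*{SMC} samplers \citep{del2004feynman,del2006sequential}, and the plan is to prove it by a direct manipulation of the integral defining the expectation in \eqref{eq:feynman-kac-identity}. There are two things to establish: that the unnormalised integral $\Pi_{\theta_k}(\varphi)=\int_E \varphi(x_k)\,\Pi_{\theta_k}(x_k)\,\md x_k$ admits the stated representation as an expectation over the proposal $q_{0:k}$, and the self-normalised identity $\pi_{\theta_k}(\varphi)=\Pi_{\theta_k}(\varphi)/\Pi_{\theta_k}(1)$, the latter being immediate once the former is in hand.

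First I would expand the product of incremental weights using \eqref{eq:smc_incremental_weight}: the ratios $\Pi_{\theta_n}(x_n)/\Pi_{\theta_{n-1}}(x_{n-1})$ telescope, giving
\begin{align*}
\prod_{n=1}^k G_n(x_{n-1},x_n)=\frac{\Pi_{\theta_k}(x_k)\prod_{n=1}^k \mathsf{L_{n-1}}(x_n,x_{n-1})}{\Pi_{\theta_0}(x_0)\prod_{n=1}^k \mathsf{K_n}(x_{n-1},x_n)},
\end{align*}
which is exactly $W_k(x_{0:k})/W_0(x_0)$ with $W_0=\Pi_{\theta_0}/q_0$, consistent with the recursion \eqref{eq:smc_weights} and the product forms \eqref{eq:smc_target}--\eqref{eq:smc_proposal}. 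Next I would write the expectation in \eqref{eq:feynman-kac-identity} as $\int_{E^{k+1}} \varphi(x_k)\big[\prod_{n=1}^k G_n(x_{n-1},x_n)\big]\,q_{0:k}(x_{0:k})\,\md x_{0:k}$ and substitute $q_{0:k}$ from \eqref{eq:smc_proposal}; the forward kernels $\mathsf{K_n}$ cancel against the corresponding factors in $q_{0:k}$, leaving --- up to the initialisation factor $q_0/\Pi_{\theta_0}$, which I take to be constant by choosing $q_0\propto\Pi_{\theta_0}$ (e.g.\ $q_0=\pi_{\theta_0}$) --- the integrand $\varphi(x_k)\,\Pi_{\theta_k}(x_k)\prod_{n=1}^k \mathsf{L_{n-1}}(x_n,x_{n-1})$. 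I would then integrate out the auxiliary coordinates $x_{0:k-1}$ successively: since each $\mathsf{L_{n-1}}(x_n,\cdot)$ is a probability density, $\int_E \mathsf{L_{n-1}}(x_n,x_{n-1})\,\md x_{n-1}=1$, and iterating this collapses the product, leaving $\int_E \varphi(x_k)\,\Pi_{\theta_k}(x_k)\,\md x_k=\Pi_{\theta_k}(\varphi)$. Equivalently, this is just the one-line change of measure $\mathbb{E}_{q_{0:k}}[\varphi(X_k)W_k(X_{0:k})]=\int_{E^{k+1}}\varphi(x_k)\Pi_{\theta_{0:k}}(x_{0:k})\,\md x_{0:k}$ followed by marginalisation of the backward kernels, which is the same computation that shows the $x_k$-marginal of $\pi_{\theta_{0:k}}$ is $\pi_{\theta_k}$.

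For the second claim I would set $\varphi\equiv 1$ in \eqref{eq:feynman-kac-identity}, so that $\Pi_{\theta_k}(1)=\int_E \Pi_{\theta_k}(x_k)\,\md x_k=Z_{\theta_k}$ by \eqref{eq:pi_theta}; dividing \eqref{eq:feynman-kac-identity} through by $\Pi_{\theta_k}(1)$ and using $\pi_{\theta_k}=\Pi_{\theta_k}/Z_{\theta_k}$ gives $\pi_{\theta_k}(\varphi)=\Pi_{\theta_k}(\varphi)/\Pi_{\theta_k}(1)$. Any proportionality constant picked up from the normalisation of $q_0$ appears identically in numerator and denominator and therefore cancels, so the ratio identity is unaffected by that choice.

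I do not expect a substantial obstacle: the core is a telescoping identity followed by iterated integration against normalised kernels. The only points needing care are bookkeeping --- correctly tracking the initial weight $W_0$ (equivalently, fixing the convention on $q_0$) so that $\prod_{n=1}^k G_n$ genuinely reproduces $W_k$ --- and a mild integrability/measurability hypothesis (finiteness of the $G_n$ on the support of $q_{0:k}$, and $\Pi_{\theta_k}$-integrability of $\varphi$) to license interchanging expectation and integration and the successive Fubini steps; under the standing positivity assumptions on $\Pi_\theta$ and the kernels these are automatic.
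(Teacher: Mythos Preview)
Your proposal is correct and follows essentially the same route as the paper's proof: telescope the $\Pi_{\theta_n}$ factors in $\prod_{n=1}^k G_n$, cancel the forward kernels against $q_{0:k}$, then integrate out $x_{0:k-1}$ using that each $\mathsf{L}_{n-1}$ is a Markov kernel, under the convention $q_0=\pi_{\theta_0}$ (with the general-$q_0$ case handled by the initial factor $G_0$). The self-normalised identity is obtained exactly as you describe, by taking $\varphi\equiv 1$ and dividing.
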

\begin{proof}
Let $(X_0,\ldots,X_k) \sim q_{0:k}$. By definition,
\begin{align*}
&\mathbb{E}\Big[\varphi(X_k)\prod_{n=0}^k G_n(X_{n-1},X_n)\Big] \\
&= \int \varphi(x_k)\, q_{0:k}(x_{0:k}) \prod_{n=0}^k G_n(x_{n-1},x_n)\, \mathrm{d}x_{0:k} \\
&= \int \varphi(x_k)\, \Pi_{\theta_k}(x_k)\prod_{n=1}^k \mathsf{L}_{n-1}(x_n,x_{n-1})\, \mathrm{d}x_{0:k},
\end{align*}
where we used \eqref{eq:smc_proposal}--\eqref{eq:smc_incremental_weight} and by telescoping the $\Pi_{\theta_n}$ terms.
Integrating out $x_{0:k-1}$ using that each $\mathsf{L}_{n-1}$ is a Markov kernel yields
\[
\mathbb{E}\Big[\varphi(X_k)\prod_{n=0}^k G_n(X_{n-1},X_n)\Big]
= \int \varphi(x_k)\, \Pi_{\theta_k}(x_k)\, \mathrm{d}x_k.
\]
\end{proof}
This is a well-known identity in \gls*{SMC}-literature. In our context, its importance becomes clear when we set $\varphi = H_{\theta_k}$ which results in a recursion for the intractable gradient $\nabla_\theta \ell(\theta_k) = \pi_{\theta_k}(H_{\theta_k})$. Using this, we construct a sequential sampling scheme to approximate $\nabla_\theta \ell(\theta_k)$ at each iteration of the first-order optimiser \eqref{eq:generic_first_order}.

\subsection{SMC Implementation}
We now describe a standard \gls*{SMC} sampler that approximates the Feynman--Kac flow in Theorem~\ref{thm:feynman_kac} \citep{del2006sequential}. Let $\{(X_k^{(i)}, w_k^{(i)})\}_{i=1}^N$ denote weighted particles at time $k$. We first sample $X_0^{(i)} \sim q_0$ for $i = 1, \ldots, N$ and set
\[
W_0^{(i)} = G_0(X_0^{(i)}), \qquad G_0(x_0) := \frac{\Pi_{\theta_0}(x_0)}{q_0(x_0)},
\]
so that $W_0^{(i)} \equiv 1$ when $q_0=\pi_{\theta_0}$. For $k \geq 1$, propagate
\[
\bar{X}_k^{(i)} \sim \mathsf{K}_k\!\left(X_{k-1}^{(i)}, \cdot \right),
\]
for every $i = 1, \ldots, N$ and update the unnormalised weights using \eqref{eq:smc_incremental_weight},
\begin{align}\label{eq:smc_weight_update}
W_k^{(i)} = W_{k-1}^{(i)} \, G_k\!\left(X_{k-1}^{(i)}, \bar{X}_k^{(i)}\right).
\end{align}
Normalise $w_k^{(i)} = W_k^{(i)} / \sum_{j=1}^N W_k^{(j)}$ and form the estimator for the test function $\varphi$ with
\begin{align}
{\pi}^N_{\theta_k}(\varphi) := \sum_{i=1}^N w_k^{(i)} \varphi(\bar{X}_k^{(i)}). \label{eq:smc_estimators}
\end{align}
In general, the product of incremental weights in \eqref{eq:smc_weight_update} can lead to weight degeneracy, where only a few particles have significant weights. To mitigate this, resampling is typically employed. For this, we define the effective sample size
\begin{align}\label{eq:ess}
\text{ESS}_k := \frac{\left(\sum_{i=1}^N W_k^{(i)}\right)^2}{\sum_{i=1}^N \left(W_k^{(i)}\right)^2}
= \frac{1}{\sum_{i=1}^N \left(w_k^{(i)}\right)^2}.    
\end{align}
If $\text{ESS}_k$ falls below a threshold (e.g., $\tau N$ for $\tau \in (0,1)$), we resample the particles according to $\{W_k^{(i)}\}_{i=1}^N$ and reset weights to $1/N$.

\begin{algorithm}[ht!] 
\caption{Stochastic Optimisation via SMC (SOSMC)}
\label{alg:sosmc}
\begin{algorithmic}[1]
\REQUIRE Initial parameter $\theta_0$, initial distribution $q_0$, number of particles $N$, resampling threshold $\tau$, first-order optimiser $\textsf{OPT}$, sequence of forward kernels $\{\mathsf{K}_k\}_{k\geq 1}$, sequence of backward kernels $\{\mathsf{L}_k\}_{k\geq 0}$.
\FOR{$k = 0, 1, 2, \ldots$}
    \IF{$k = 0$}
        \FOR{$i = 1$ to $N$}
            \STATE Sample $\bar{X}_0^{(i)} \sim q_0$
            \STATE Set $W_0^{(i)} = \Pi_{\theta_0}(\bar{X}_0^{(i)}) / q_0(\bar{X}_0^{(i)})$
        \ENDFOR
    \ELSE
        \FOR{$i = 1$ to $N$}
            \STATE Sample $\bar{X}_k^{(i)} \sim \mathsf{K}_k(X_{k-1}^{(i)}, \cdot)$
            \STATE Update weights:
            \[W_k^{(i)} = W_{k-1}^{(i)} \frac{\Pi_{\theta_k}(\bar{X}_k^{(i)}) \mathsf{L}_{k-1}(\bar{X}_k^{(i)}, X_{k-1}^{(i)})}{\Pi_{\theta_{k-1}}(X_{k-1}^{(i)}) \mathsf{K}_k(X_{k-1}^{(i)}, \bar{X}_k^{(i)})}\]
        \ENDFOR
    \ENDIF
    \STATE Normalize weights: $w_k^{(i)} = W_k^{(i)} / \sum_{j=1}^N W_k^{(j)}$
    \STATE Compute gradient estimator:
    \[g_k = \sum_{i=1}^N w_k^{(i)} H_{\theta_k}(\bar{X}_k^{(i)})\]
    \STATE Update parameter:
    \[\theta_{k+1} = \textsf{OPT}(\theta_k, g_k, \gamma_k)\]
    \IF{$\text{ESS}_k < \tau N$}
        \STATE Sample $\{a_k^{(i)}\}_{i=1}^N \sim \text{Categorical}(\{w_k^{(i)}\}_{i=1}^N)$
        \STATE Set: $X_k^{(i)} = \bar{X}_k^{(a_k^{(i)})}$ for all $i$
        \STATE Reset weights: $W_k^{(i)} = 1$ for all $i$
    \ELSE
        \STATE Set: $X_k^{(i)} = \bar{X}_k^{(i)}$ for all $i$
    \ENDIF

\ENDFOR
\end{algorithmic}
\end{algorithm}
\subsection{Stochastic Optimisation via SMC samplers}
We now describe our \gls*{SMC}-based stochastic optimisation algorithm, which we term \gls*{SOSMC}. At each iteration $k$ of the optimiser \eqref{eq:generic_first_order}, we run one iteration of the \gls*{SMC} sampler described above to obtain weighted particles $\{(X_k^{(i)}, w_k^{(i)})\}_{i=1}^N$ approximating $\pi_{\theta_k}$. We then use the estimator \eqref{eq:smc_estimators} with $\varphi = H_{\theta_k}$ to approximate the intractable gradient $\nabla_\theta \ell(\theta_k)$:
\begin{align}
g_k := {\pi}^N_{\theta_k}(H_{\theta_k}) = \sum_{i=1}^N w_k^{(i)} H_{\theta_k}(X_k^{(i)}). \label{eq:sosmc_gradient_estimator}
\end{align}
This estimator is then plugged into the first-order optimiser update \eqref{eq:generic_first_order} to obtain the next iterate $\theta_{k+1}$. The complete \gls*{SOSMC} procedure is given in Algorithm~\ref{alg:sosmc}.

\begin{remark}\label{Remark:1}
When the forward kernel is chosen as the \gls*{ULA} with step-size $h > 0$:
\begin{align*}
\mathsf{K}&_{k}(x_{k-1}, x_k) = \mathcal{N}(x_k; x_{k-1} - h \nabla U_{\theta_{k-1}}(x_{k-1}), 2hI),
\end{align*}
and
\begin{align*}
\mathsf{L}_{k-1}&(x_k, x_{k-1}) = \mathcal{N}(x_{k-1}; x_k - h \nabla U_{\theta_k}(x_k), 2h I),
\end{align*}
we recover a number of existing stochastic optimisation algorithms as special cases of \gls*{SOSMC}, see Appendix~\ref{app:sosmc_equivalence}. When $\ell(\theta)$ is the negative log-marginal likelihood in a \gls*{LVM}, our choice of kernels results in the so-called \gls*{JALA-EM} algorithm for \gls*{MMLE}, developed by \citet{cuin2025learning}. On the other hand, if $\ell(\theta)$ is the negative maximum-likelihood loss, also known as cross-entropy in that context, arising from a training objective in an \gls*{EBM}, our algorithm takes the form of the \gls*{SMC} sampler proposed by \citet{carbone2024efficient}. While these two papers develop their methods by fixing the choice of kernels, our \gls*{SOSMC} framework allows for more general choices.
\end{remark}
\begin{remark} Our framework offers flexible choices for forward and backward kernels in algorithm design. For example, $\mathsf{K}_k$ can be chosen to be $\pi_{\theta_{k}}$-invariant, such as a \gls*{MALA} kernel or others discussed in Appendix \ref{app:kernel_choices-invariant}, whilst $\mathsf{L}_{k-1}$ may be the corresponding time reversal, which simplifies the weights significantly \citep{del2006sequential}. It should also be noted that the discretized kernel admits several equivalent parametrizations: the forward and backward kernels can each be taken as $\mathsf{K}_k$
or $\mathsf{K}_{k-1}$, and all resulting combinations coincide in the continuous-time limit while remaining consistent with discrete-time SMC schemes. We refer to \cite{schonle2025sampling}, and in particular to Figure 2 therein, for a detailed discussion. One could also adopt underdamped Langevin dynamics or deterministic dynamics and coherently estimate the corresponding importance weights, see for instance \cite{schonle2025efficient}.
\end{remark}

\section{Analytical results}\label{sec:theory}
In this section, we provide a basic theoretical analysis for our method to understand its limiting behavior. We first analyze an idealized scheme that assumes the expectations in the Feynman--Kac identity \eqref{eq:feynman-kac-identity} can be computed exactly. We then discuss how the variance of the weights behave given our choice of marginal distributions.

\subsection{Gradient descent based on Feynman-Kac flows}
We analyze an idealised version of the scheme where the expectations in \eqref{eq:feynman-kac-identity} are computed exactly. This is not possible in practice, but it clarifies the limiting behaviour. We use the notation from \Cref{sec:background}, in particular \eqref{eq:pi_theta} and \eqref{eq:gradient_form}. The next lemma shows that, under exact expectations, the Feynman--Kac weights recover the true gradient.
\begin{lemma}[Exact gradient recovery]\label{lem:exact-gradient-recovery}
Assume $Z_\theta=\int e^{-U_\theta(x)}\md x<\infty$ for all $\theta\in\mathbb{R}^{d_\theta}$. Fix $\theta_0$, set $X_0\sim\pi_{\theta_0}$ and $W_0=1$, and define $(\theta_k,X_k,W_k)_{k\geq 0}$ by
    \begin{align} \label{eq:law-joint-process-latent-variable}
    \theta_{k} &= \theta_{k-1} - \gamma \frac{\mathbb{E}[H_{\theta_{k-1}}(X_{k-1}) W_{k-1}]}{\mathbb{E}[W_{k-1}]}, \\
    X_k &\sim \mathsf{K}_k(X_{k-1}, \cdot), \\
    W_{k} &= W_{k-1} \frac{\Pi_{\theta_k}({X}_k) \mathsf{L}_{k-1}({X}_k, X_{k-1})}{\Pi_{\theta_{k-1}}(X_{k-1}) \mathsf{K}_k(X_{k-1}, {X}_k)}.
    \end{align}
Then, for all $k\geq 1$,
\begin{align*}
\frac{\mathbb{E}[H_{\theta_{k-1}}(X_{k-1})W_{k-1}]}{\mathbb{E}[W_{k-1}]}
= \pi_{\theta_{k-1}}(H_{\theta_{k-1}})
= \nabla \ell(\theta_{k-1}).
\end{align*}
\end{lemma}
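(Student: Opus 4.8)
The plan is to recognise that, although the update for $\theta_k$ in \eqref{eq:law-joint-process-latent-variable} references expectations of the random pair $(X_{k-1},W_{k-1})$, the whole sequence $(\theta_k)_{k\ge 0}$ is in fact deterministic, so that $(X_k,W_k)_{k\ge 0}$ is exactly the idealised Feynman--Kac flow of \Cref{thm:feynman_kac} for the deterministic target sequence $(\pi_{\theta_k})_{k\ge 0}$; the claimed identity is then precisely what that theorem asserts, with the normalisation statement $\pi_{\theta_k}(\varphi)=\Pi_{\theta_k}(\varphi)/\Pi_{\theta_k}(1)$. First I would establish well-posedness and determinism by induction on $k$. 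The base case is immediate: $\theta_0$ is fixed and $W_0=1$. For the inductive step, if $\theta_{0:k-1}$ is deterministic then all the forward kernels, backward kernels and potentials entering $W_{k-1}$ and the $\theta_k$-update are deterministic (for the ULA choice of \Cref{Remark:1}, $\mathsf{K}_n$ depends on $\theta_{n-1}$ and $\mathsf{L}_{n-1}$ on $\theta_n$, so $W_{k-1}=\prod_{n=1}^{k-1}G_n(X_{n-1},X_n)$ with $G_n$ as in \eqref{eq:smc_incremental_weight} depends only on $\theta_{0:k-1}$); hence the law of $X_{0:k-1}$ is the fixed path law $q_0(\md x_0)\prod_{n=1}^{k-1}\mathsf{K}_n(x_{n-1},\md x_n)$ with $q_0=\pi_{\theta_0}$, so $\mathbb{E}[H_{\theta_{k-1}}(X_{k-1})W_{k-1}]$ and $\mathbb{E}[W_{k-1}]$ are deterministic scalars and $\theta_k$ is well defined and deterministic, closing the induction (nonvanishing of $\mathbb{E}[W_{k-1}]$ comes out of the next step).

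Next I would apply \Cref{thm:feynman_kac} at time $k-1$: since $X_{0:k-1}\sim q_{0:k-1}$ with $q_0=\pi_{\theta_0}$ and $W_{k-1}=\prod_{n=1}^{k-1}G_n(X_{n-1},X_n)$, choosing $\varphi=H_{\theta_{k-1}}$ yields $\mathbb{E}[H_{\theta_{k-1}}(X_{k-1})W_{k-1}]=\Pi_{\theta_{k-1}}(H_{\theta_{k-1}})$, and choosing $\varphi\equiv 1$ yields $\mathbb{E}[W_{k-1}]=\Pi_{\theta_{k-1}}(1)=Z_{\theta_{k-1}}$ (more generally both pick up a common constant fixed by the normalisation of $q_0$, which is irrelevant in the ratio). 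Since $Z_{\theta_{k-1}}\in(0,\infty)$ by hypothesis the denominator is nonzero, and dividing gives
\begin{align*}
\frac{\mathbb{E}[H_{\theta_{k-1}}(X_{k-1})W_{k-1}]}{\mathbb{E}[W_{k-1}]}=\frac{\Pi_{\theta_{k-1}}(H_{\theta_{k-1}})}{\Pi_{\theta_{k-1}}(1)}=\pi_{\theta_{k-1}}(H_{\theta_{k-1}}).
\end{align*}
To conclude, \eqref{eq:gradient_form} identifies $\pi_{\theta_{k-1}}(H_{\theta_{k-1}})=\mathbb{E}_{X\sim\pi_{\theta_{k-1}}}[H_{\theta_{k-1}}(X)]=\nabla\ell(\theta_{k-1})$, which is the claim; as a by-product \eqref{eq:law-joint-process-latent-variable} is exactly the exact gradient step $\theta_k=\theta_{k-1}-\gamma\nabla\ell(\theta_{k-1})$.

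I expect the only real obstacle to be the bookkeeping in the first step rather than any analytic estimate: one has to disentangle the apparently circular definition (the targets $\pi_{\theta_k}$ are built from expectations taken under path laws that themselves depend on $\theta_{0:k}$) into the clean induction above, and to check that the standing regularity needed for \Cref{thm:feynman_kac} holds along the whole deterministic trajectory --- namely that the forward kernels $\mathsf{K}_n$ have strictly positive Lebesgue densities so the incremental weights $G_n$ in \eqref{eq:smc_incremental_weight} are well defined (automatic for the Gaussian ULA/MALA kernels of \Cref{Remark:1}), and that $H_{\theta_{k-1}}$ is $\pi_{\theta_{k-1}}$-integrable, which is implicit in \eqref{eq:gradient_form} since $\nabla\ell(\theta_{k-1})$ is assumed to exist.
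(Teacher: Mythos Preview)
Your proposal is correct and follows essentially the same approach as the paper: recognise $W_{k-1}=\prod_{n=1}^{k-1}G_n(X_{n-1},X_n)$, apply the Feynman--Kac identity of \Cref{thm:feynman_kac} with $\varphi=H_{\theta_{k-1}}$ and $\varphi=1$, take the ratio, and invoke \eqref{eq:gradient_form}. Your treatment is in fact more careful than the paper's, which omits the inductive determinism argument and the check that $\mathbb{E}[W_{k-1}]>0$.
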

\begin{proof}
By construction, $W_{k-1}=\prod_{n=1}^{k-1}G_n(X_{n-1},X_n)$ with incremental weights $G_n$ as in \eqref{eq:smc_incremental_weight}. Applying the Feynman--Kac identity \eqref{eq:feynman-kac-identity} with $\varphi=H_{\theta_{k-1}}$ and $\varphi=1$ yields
\begin{align*}
\mathbb{E}[H_{\theta_{k-1}}(X_{k-1})W_{k-1}] &= \Pi_{\theta_{k-1}}(H_{\theta_{k-1}}), \\
\mathbb{E}[W_{k-1}] &= \Pi_{\theta_{k-1}}(1).
\end{align*}
Using $\pi_{\theta_{k-1}}(\cdot)=\Pi_{\theta_{k-1}}(\cdot)/\Pi_{\theta_{k-1}}(1)$ and \eqref{eq:gradient_form} gives the claim.
\end{proof}
In order to prove that gradient descent converges for gradients built using the iterations in Lemma~\eqref{lem:exact-gradient-recovery}, we need an assumption on our loss $\ell$.
\begin{assumptionA}\label{assump:convex-smooth} The function $\ell: \mathbb{R}^{d_\theta} \to \mathbb{R}$ is $\mu$-\gls*{PL}, i.e.
\begin{align*}
\ell(\theta) - \inf_{\theta} \ell(\theta) \leq \frac{1}{2\mu} \|\nabla\ell(\theta)\|^2,
\end{align*}
for all $\theta$ and $L$-smooth, i.e.,
\begin{align*}
\lVert \nabla \ell(\theta) - \nabla \ell(\theta') \rVert &\leq L \lVert \theta - \theta' \rVert,
\end{align*}
for all $\theta, \theta' \in \mathbb{R}^{d_\theta}$.
\end{assumptionA}

\begin{proposition}[Convergence of the idealised scheme.]\label{prop:idealized-gd-rate}
Assume $Z_\theta<\infty$ for all $\theta\in\mathbb{R}^{d_\theta}$ and let the iterates be defined by \eqref{eq:law-joint-process-latent-variable} with step-size $\gamma>0$ and Markov kernels $\mathsf{K}_k,\mathsf{L}_k$. Under \Cref{assump:convex-smooth}, for any $\gamma \leq 1/L$ we have
\begin{align}
\ell(\theta_k) - \inf_{\theta} \ell(\theta) \leq (1 - \gamma \mu)^k (\ell(\theta_0) -  \inf_{\theta} \ell(\theta)).
\end{align}
\end{proposition}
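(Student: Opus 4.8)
The plan is to observe that Lemma~\ref{lem:exact-gradient-recovery} collapses the three-line recursion \eqref{eq:law-joint-process-latent-variable} into ordinary (noiseless) gradient descent: since $\mathbb{E}[H_{\theta_{k-1}}(X_{k-1})W_{k-1}]/\mathbb{E}[W_{k-1}] = \nabla\ell(\theta_{k-1})$ for every $k\geq 1$, the parameter update is simply $\theta_k = \theta_{k-1} - \gamma\,\nabla\ell(\theta_{k-1})$, and the auxiliary variables $X_k, W_k$ play no further role. Hence the statement reduces to the textbook linear-rate bound for gradient descent on an $L$-smooth, $\mu$-\gls*{PL} function, and the proof is a short deterministic computation.

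First I would invoke the descent lemma implied by $L$-smoothness: for any $\theta,\theta'$, $\ell(\theta') \leq \ell(\theta) + \langle \nabla\ell(\theta), \theta'-\theta\rangle + \tfrac{L}{2}\|\theta'-\theta\|^2$. Applying this with $\theta=\theta_{k-1}$ and $\theta'=\theta_k = \theta_{k-1}-\gamma\nabla\ell(\theta_{k-1})$ gives
\begin{align*}
\ell(\theta_k) \leq \ell(\theta_{k-1}) - \gamma\Big(1 - \tfrac{L\gamma}{2}\Big)\|\nabla\ell(\theta_{k-1})\|^2 \leq \ell(\theta_{k-1}) - \tfrac{\gamma}{2}\|\nabla\ell(\theta_{k-1})\|^2,
\end{align*}
where the last inequality uses $\gamma \leq 1/L$, so that $1-\tfrac{L\gamma}{2}\geq \tfrac12$. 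Next I would subtract $\inf_\theta\ell(\theta)$ from both sides and apply the \gls*{PL} inequality in the form $\|\nabla\ell(\theta_{k-1})\|^2 \geq 2\mu\big(\ell(\theta_{k-1}) - \inf_\theta\ell(\theta)\big)$ to obtain the one-step contraction
\begin{align*}
\ell(\theta_k) - \inf_\theta\ell(\theta) \leq (1-\gamma\mu)\big(\ell(\theta_{k-1}) - \inf_\theta\ell(\theta)\big).
\end{align*}
Iterating this inequality from $k$ down to $0$ yields the claimed bound $\ell(\theta_k) - \inf_\theta\ell(\theta) \leq (1-\gamma\mu)^k\big(\ell(\theta_0) - \inf_\theta\ell(\theta)\big)$ (the exponent $t$ in the displayed statement should read $k$).

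There is essentially no hard step here; the only points worth a line of care are (i) checking that $X_0\sim\pi_{\theta_0}$ and $W_0=1$ are exactly the hypotheses needed to start the Feynman--Kac identity in Lemma~\ref{lem:exact-gradient-recovery}, so the reduction to exact gradient descent is legitimate for all $k\geq 1$; and (ii) noting that the contraction factor lies in $[0,1)$: combining $\mu$-\gls*{PL} with $L$-smoothness forces $\mu \leq L$, and together with $\gamma \leq 1/L$ this gives $0 \leq \gamma\mu \leq 1$, so the bound is non-vacuous. I would state both observations briefly and otherwise present the three displays above as the complete argument.
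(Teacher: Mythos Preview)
Your proposal is correct and follows exactly the paper's approach: invoke Lemma~\ref{lem:exact-gradient-recovery} to reduce the recursion to exact gradient descent, then appeal to the standard linear-rate guarantee for $L$-smooth, $\mu$-\gls*{PL} objectives (the paper simply cites this, whereas you spell out the descent-lemma-plus-\gls*{PL} computation). Your observation that the exponent $t$ should read $k$ is also correct.
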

\begin{proof}
By \Cref{lem:exact-gradient-recovery}, the update for $\theta_k$ reduces to exact gradient descent $
\theta_k=\theta_{k-1}-\gamma\nabla \ell(\theta_{k-1})$. The stated rate then follows from the standard convergence guarantee of gradient descent for $\mu$-\gls*{PL} and $L$-smooth objectives, see, e.g., \citet[Theorem~3.9]{garrigos2023handbook}.
\end{proof}
\begin{remark}
When Algorithm~\ref{alg:sosmc} is implemented with finitely many particles, the update is driven by a stochastic, biased, self-normalised \gls*{SMC} estimate of the gradient. For a fixed target sequence, standard \gls*{SMC} theory gives, under stability and regularity conditions, bias of order $\mathcal{O}(1/N)$ and \gls*{MSE} of order $\mathcal{O}(1/N)$ for bounded test functions \citep{del2004feynman,crisan2002survey}. In our setting, however, the target sequence depends on previous particle approximations, so proving such bounds requires tools for mean-field/interacting particle models \citep{del2011concentration}. These bounds could then be combined with biased \gls*{SGD} analyses, see, e.g., \citet{demidovich2023guidezoobiasedsgd,karimi2019non}; we leave the development of this theory to future work.
\end{remark}
\subsection{An Analysis for the Effective Sample Size}
While standard \gls*{SMC} analysis can be applied to analyse the error of the particle approximations (which we leave to future work), we briefly discuss how our construction of marginal distributions $(\pi_{\theta_k})_{k=0}^K$ affects the performance of the method and related design choices implied by it.

For this, we analyse the following quantity
\begin{align}
\rho_k(\gamma) &:= \frac{N}{1 + \chi^2(\pi_{\theta_k} \| \pi_{\theta_{k-1}})},
\end{align}
where $\chi^2(\pi_{\theta_k} \| \pi_{\theta_{k-1}})$ is the $\chi^2$-divergence between $\pi_{\theta_k}$ and $\pi_{\theta_{k-1}}$, and $\gamma > 0$ is the step-size used to go from $\theta_{k-1}$ to $\theta_k$. This quantity is closely related to the \gls*{ESS} of the particle system when the forward kernel $\mathsf{K}_k$ is $\pi_{\theta_k}$-invariant and the backward kernel $\mathsf{L}_{k-1}$ is the time reversal of $\mathsf{K}_k$. In this case, given samples from $\pi_{\theta_{k-1}}$, we have the relationship $\hat{\rho}^N_k(\gamma) = \gls*{ESS}_k(\gamma)$ \citep{dai2022invitation}.

To gain insight into the behaviour of this quantity, we let $\pi_{\theta} = \mathcal{N}(\theta, \Sigma)$. Since $\theta_k = \theta_{k-1} - \gamma \nabla \ell(\theta_{k-1})$, using the exact form of $\chi^2$-divergence between Gaussians, we obtain
\begin{align}\label{eq:ess_infty_gaussian}
\rho_k(\gamma) &= N \exp\left(-\gamma^2 \|\nabla \ell(\theta_{k-1})\|_{\Sigma^{-1}}^2\right),
\end{align}
where $\Vert \cdot \Vert_{\Sigma^{-1}}$ denotes the Mahalanobis norm. This expression provides insights into the impact of step-size $\gamma$. In particular, larger step-sizes may lead to a sharp decrease in the \gls*{ESS}, indicating degeneracy. It is also apparent from \eqref{eq:ess_infty_gaussian} that larger gradients $\|\nabla \ell(\theta_{k-1})\|$ may also reduce the \gls*{ESS}. When the estimator for the gradient is replaced by its particle approximation, this can lead to high variance in the gradients, which can cause large norms for the gradients. This would also quickly degrade the \gls*{ESS}. It is clear that the change in the parameters $(\theta_k)_{k=0}^K$ should be controlled to maintain a reasonable \gls*{ESS}, which can be achieved by tuning the step-size $\gamma$ or using modern optimisers that clip the gradients.

The same intuition holds for more general models as summarised below.
\begin{proposition}
    Let $k \in \mathbb{N}$, $\lVert \nabla \ell(\theta_{k-1}) \rVert_2 = L_{k } < \infty$ and let:
    \begin{align*}
    J_k(\theta, x) = \left( \frac{\pi_\theta(x)}{\pi_{\theta_k}(x)} - 1 + (\theta - \theta_k)^\top \nabla_\theta U_\theta(x) \right)^2
    \end{align*}
    If
    \begin{align*}
        &\sup_{\theta \in B_{\theta_{k-1}}(\gamma L_{k}) } \mathbb{E}_{\pi_{\theta_{k-1}}}\left[ J_k(\theta, X) \right] = o(\gamma^2),
    \end{align*} 
    then the $\chi^2$-divergence between $\pi_{\theta_k}$ and $\pi_{\theta_{k-1}}$ satisfies
    \begin{align*}
    &\chi^2(\pi_{\theta_k}, \pi_{\theta_{k-1}}) = \gamma^2 \nabla \ell(\theta_{k-1})^\top I_{\theta_{k-1}}\nabla \ell(\theta_{k-1}) + o(\gamma^2),
    \end{align*}
    where $I_{\theta_{k-1}} = \mathbb{E}_{\theta_{k-1}} \left[  \nabla_\theta U_{\theta_{k-1}}( X)\nabla_\theta U_{\theta_{k-1}} (X)^\top \right]$.
\end{proposition}
The proof follows immediately from the definition of $\chi^2$-divergence. We show analogous results with more explicit sufficient conditions in Appendix~\ref{A:proofs}. For general models, the local geometry of the loss landscape plays a role in the selection of the step size, if the landscape is very flat then larger step sizes are appropriate and vice versa.

\begin{remark} \label{rem:remark4}
    Adaptive \gls*{SMC} schemes based on \gls*{ESS} control are well established, particularly when intermediate targets $(\pi_k)_{k \geq 0}$ are specified a priori \citep{delmoral2012adaptive}. Motivated by the exponential sensitivity outlined in \eqref{eq:ess_infty_gaussian}, we adapt $\gamma_{k}$ using a simple multiplicative rule; with $c > 1$, we set $\gamma_{k} = c \gamma_{k-1}$ when $\mathrm{ESS} > \tau_{\mathrm{adapt}} N$, whilst set $\gamma_{k} = \gamma_{k-1} / c$ when $\mathrm{ESS} < \tau_{\mathrm{adapt}} N$, providing a simple, scale free mechanism for maintaining a stable discrepancy regime.
\end{remark}
\section{Experiments}
We empirically investigate the performance of \gls*{SOSMC} below, with complete details provided in Appendix \ref{appdx:experimental-details}. The code can be found in \url{https://github.com/akyildiz-group/SOSMC} and note experiments were run on a personal computer and a Google Colab T4 GPU.

\subsection{Reward tuning of Langevin Processes}
\label{sec:exp-langevin}

To begin, we revisit the Langevin reward tuning experiment of \citet{marion2025implicit}, to serve as a controlled setting for comparing both single and nested loop sampling algorithms, that of \gls*{ImpDiff} and \gls*{SOUL} respectively, with not only the \gls*{SOSMC}-\gls*{ULA} variant of our method, but also those outlined in Appendix \ref{app:kernel_choices-invariant}. Indeed, this setup facilitates an analytically tractable stationary distribution that we can directly sample from, motivating the forward $\mathrm{KL}$-regularised objective,
\begin{equation} \label{eq:forward-kl-objective}
    \ell(\theta) = - \mathbb{E}_{\pi_{\theta}} [R(X)] + \beta_{\mathrm{KL}} \mathrm{KL}( \pi_{\mathrm{ref}} \Vert \pi_{\theta}),
\end{equation}
where $\pi_{\theta}$ is the Gaussian mixture induced by the potential $V(\cdot, \theta)$ and $\pi_{\mathrm{ref}}$ is a Gaussian mixture with frozen parameters $\theta_{\mathrm{ref}}$, whilst the potentially non-differentiable reward functions considered, denoted by $R$, are designed to promote discontinuities and disconnected high-reward regions.

All methods considered maintain persistent particles that are propagated under the \gls*{ULA}, or respective variant kernel, treated as \emph{stop-gradient} quantities, and used to form Monte Carlo estimates of $\nabla_{\theta} \ell$, as outlined in Appendix \ref{appdx:exp-langevin-process-details}. The methods differ, however, in how expectations under $\pi_{\theta}$ are approximated. Notably, \gls*{ImpDiff} utilises unweighted particle averages after a single Langevin step, in contrast to \gls*{SOSMC}'s weighted particle averages, where the former results in biased estimates when particles lag the evolving target distribution. Conversely, \gls*{SOUL} utilises time-averaging along a single trajectory, in which the \gls*{ULA} kernel has been applied many times per outer iteration. In order to ensure fair comparison the number of kernel applications per
outer iteration is equated across methods. 
\begin{figure}[t!]
  \centering
  \begin{subfigure}[b]{0.49\textwidth}
    \centering
    \includegraphics[width=\linewidth]{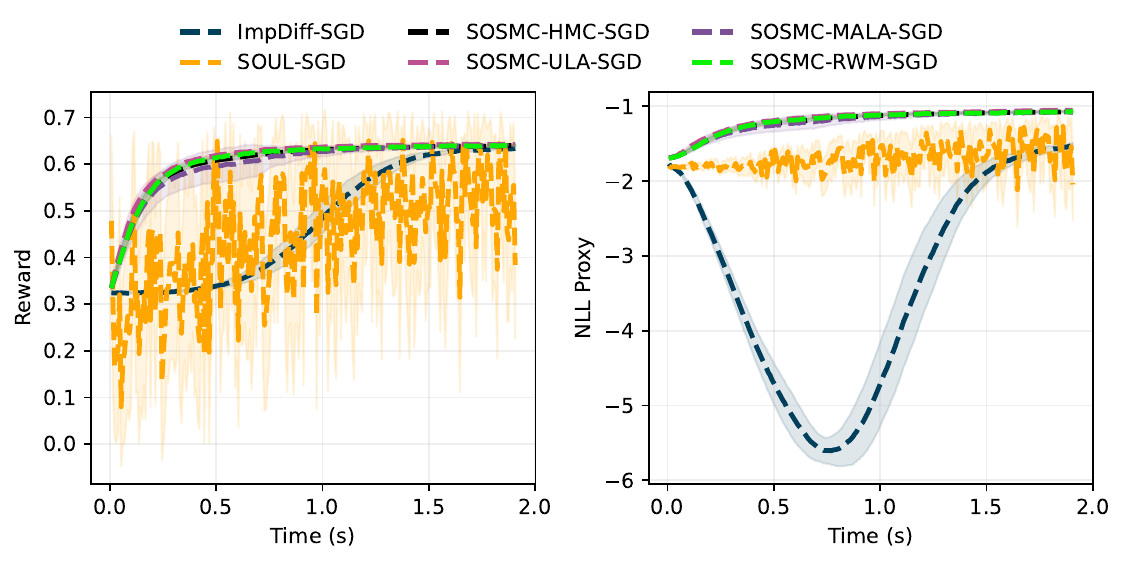}
    \caption{}
    \label{fig:langevin-processes-main-a}
  \end{subfigure}
  \hfill
  \begin{subfigure}[b]{0.49\textwidth}
    \centering
    \includegraphics[width=\linewidth]{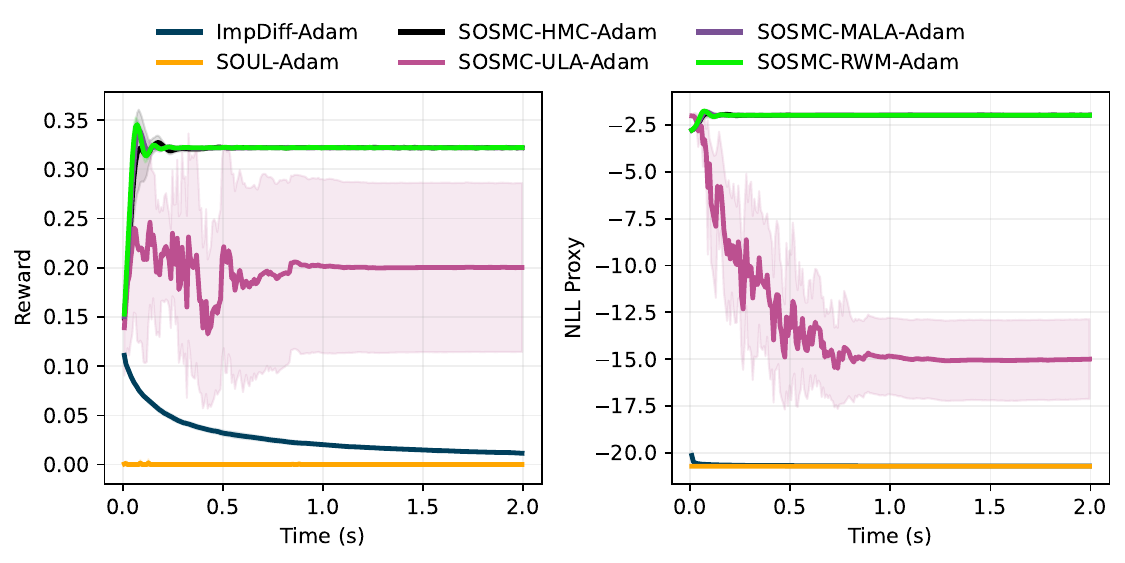}
    \caption{}
    \label{fig:langevin-processes-main-b}
  \end{subfigure}
  \caption{Wall-clock convergence of mean reward (left) and NLL (right), across $10$ runs, for \gls*{ImpDiff}, \gls*{SOUL}, and \gls*{SOSMC} variants under different $\mathrm{OPT}$ for \textit{(a)} $V_{\mathrm{dual}}$ \& $R_{\mathrm{smooth}}$; \textit{(b)} $V_{\mathrm{tight}}$ \& $R_{\mathrm{tight}}$.}
  \label{fig:langevin-processes-main}
\end{figure}

Performance is evaluated with respect to both outer iteration count and wall-clock runtime, where, for each iteration, the mean reward over the particle collection, as well as the mean energy is reported. Notably the latter quantity tracks the $\mathrm{NLL}$ in this setup (see Appendix \ref{appdx:exp-langevin-process-details}). Under a fixed wall-clock budget, enforced using device-synchronised timing, the methods are applied to various rewards and choices of $\pi_{\mathrm{ref}}$, across multiple random seeds, in the cases that $\mathrm{OPT}$ is \textsc{Adam} and \textsc{SGD} respectively. Notably, swift convergence, relative to \gls*{ImpDiff}, is observed for \gls*{SOSMC} variants and \gls*{SOUL}, although the latter exhibits greater run-to-run variability (see Figure \ref{fig:langevin-processes-main-a}) and also a failure mode in Figure \ref{fig:langevin-processes-main-b}, where the single chain fails to transition between modes effectively, whilst the Metropolis-corrected kernel variants exhibit robust performance.

\subsection{Reward Tuning of EBMs - 2D Datasets}
\label{sec:exp-ebm-2d}

Next, we consider reward tuning energy-based models (EBMs) on standard synthetic
two-dimensional benchmarks, as visualised in Figure~\ref{fig:ebm-datasets-and-pretrained-models}. For each dataset, we first train a baseline EBM, $\pi_{0}(x) \propto \exp(-E_{\theta_0}(x))$, using PCD. We note $E_\theta$ is parameterised by a MLP with smooth activations so that $\nabla_x E_\theta$ is well-defined for Langevin sampling, whilst the standard contrastive loss, $\mathbb{E}_{x^+\sim\mathcal{D}}[E_\theta(x^+)]-\mathbb{E}_{x^-\sim \pi_\theta}[E_\theta(x^-)]$, is augmented with mild energy and gradient regularisation to prevent pathological energy levels
and excessively sharp gradients (see Appendix \ref{appdx:exp-ebm-2d-details}). 

\begin{figure}[t!]
  \centering
  \includegraphics[width=\linewidth]{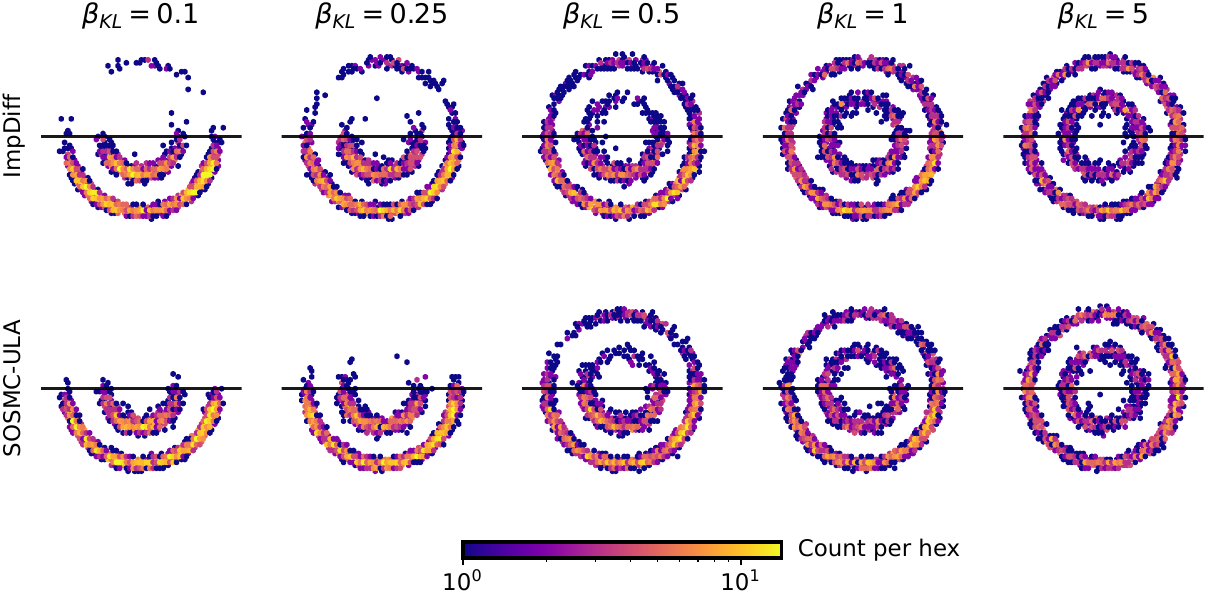}
  \caption{Terminal density snapshots of $\pi_{\theta_{K}}$, using identical initialisation and shared noise, across increasing $\beta_{\mathrm{KL}}$, for $R_{\mathrm{lower}}$.}
  \label{fig:ebm-final-samples-across-beta-main}
\end{figure}

\begin{figure}[b!]
  \centering
  \includegraphics[width=\linewidth]{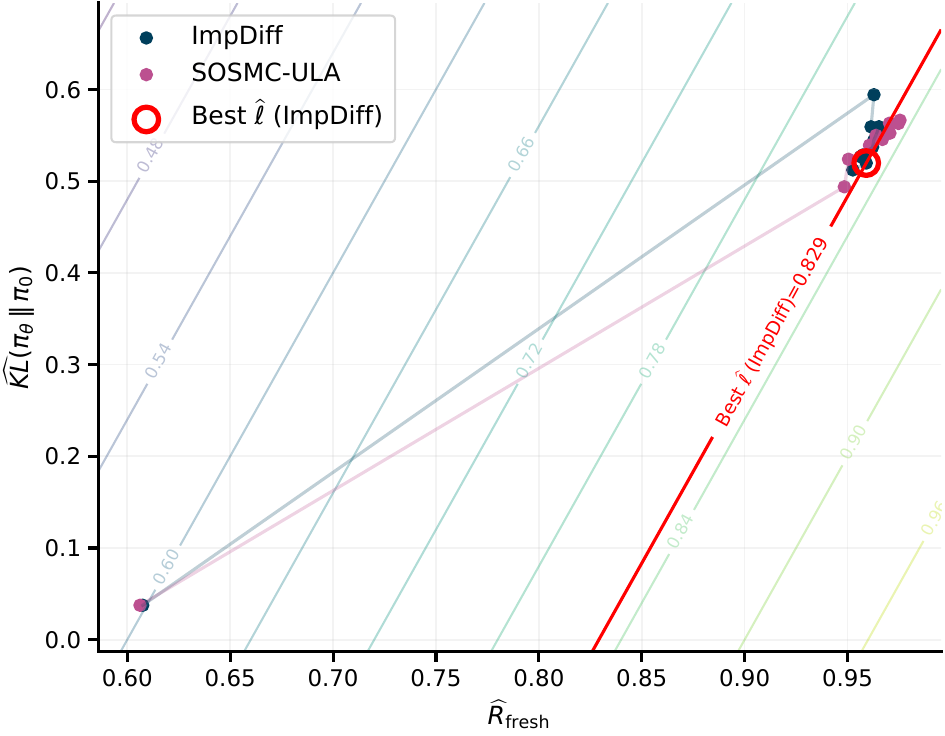}
  \caption{Tuning trajectories for \gls*{ImpDiff} and \gls*{SOSMC}-\gls*{ULA} in the $(\widehat R_{\mathrm{fresh}},\widehat{\mathrm{KL}})$ plane, for $\beta_{\mathrm{KL}} = 0.25$. Contours denote level sets of $\ell$, with \gls*{SOSMC}-\gls*{ULA} to the right of (i.e. better than) the best \gls*{ImpDiff} solution achieved during tuning (red).}
  \label{fig:ebm-illustrative-example-objective-contours}
\end{figure}

\begin{figure*}[t!]
  \centering
  \includegraphics[width=\textwidth]{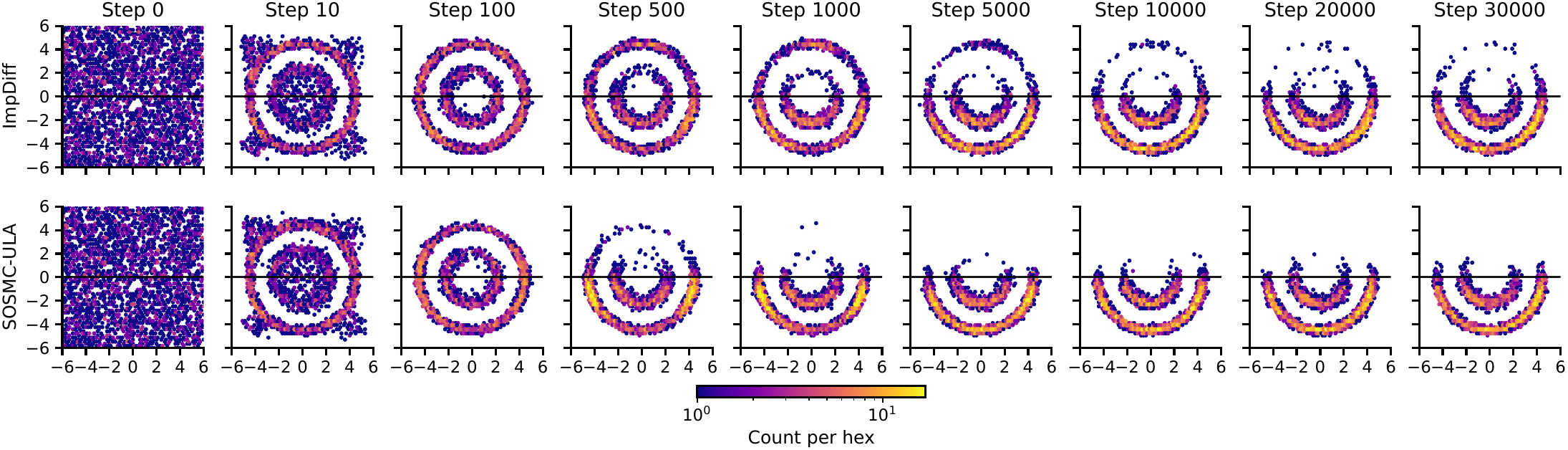}
  \caption{Density snapshots along the sampling evolution of $\pi_{\theta_{K}}$ with half-plane reward for both \gls*{ImpDiff} (top) and \gls*{SOSMC}-\gls*{ULA} (bottom), for $\beta_{\mathrm{KL}} = 0.25$, with identical initialisation and shared noise. Mass concentrates in the $R_{\mathrm{lower}}$ region significantly faster for \gls*{SOSMC}-\gls*{ULA}.}
  \label{fig:ebm-sampler-efficiency-main}
\end{figure*}

Given the frozen pre-trained model $\pi_0$, we tune a trainable energy $\pi_\theta(x)\propto \exp (-E_\theta(x))$ by optimising the reverse-KL regularised objective,
\begin{equation} \label{eq:reverse-kl-objective}
    \ell(\theta)
    =
    -
    \mathbb{E}_{\pi_{\theta}}[R(X)]
    +
    \beta_{\mathrm{KL}} \mathrm{KL}(\pi_{\theta} \Vert \pi_{0}),
\end{equation}
initialising $\theta$ at $\theta_0$ and keeping $E_0\equiv E_{\theta_0}$ fully detached.
We focus on half-plane indicator rewards (e.g.\ $R_{\mathrm{lower}} = \mathbf{1}_{\{ x_{2} < 0 \}}$), which are non-differentiable, yet admit a closed-form optima, that is the exponentially tilted distribution $\pi^{\star}(x) \propto \pi_0(x) \exp(R(x) / \beta_{\mathrm{KL}})$, providing an explicit target mass
$\pi^\star(H)$ on the rewarded half-plane $H$ (see Appendix~\ref{appdx:exp-ebm-2d-details}), and a concrete reference when interpreting achieved reward levels.

A key distinction in this setting is between (i) the \emph{particle reward}
$\widehat R_{\mathrm{particle}}(k)$, computed on the particles used for gradient
estimation, and (ii) the \emph{fresh reward} $\widehat R_{\mathrm{fresh}}(k)$, estimated by running independent long Langevin chains targeting the \emph{frozen} energy $E_{\theta_k}$ from a diffuse initialisation. The former reflects the particle population that drives $\theta$ updates, but can be notably biased when the persistent particle distribution lags $\pi_{\theta_k}$, whilst the latter is substantially more expensive, however tracks $\mathbb{E}_{\pi_{\theta_k}}[R(X)]$ more accurately and is thus the quantity we use to evaluate model performance. We also notably estimate $\mathrm{KL}(\pi_{\theta} \Vert \pi_{0})$ using numerical quadrature (see Appendix \ref{appdx:exp-ebm-2d-details}), enabling optimisation trajectories in
the $(\widehat R_{\mathrm{fresh}},\widehat{\mathrm{KL}})$ plane to be reported against objective contours.

Here, \gls*{ImpDiff} and \gls*{SOSMC}-\gls*{ULA} are compared across datasets, half-plane rewards, and regularisation strengths, as in Figure \ref{fig:ebm-final-samples-across-beta-main}, with the latter method achieving better objective contours in cases where $\beta_{\mathrm{KL}}$ is small (see, e.g., Figure \ref{fig:ebm-illustrative-example-objective-contours}), whilst achieving comparable objective values when $\beta_{\mathrm{KL}}$ is large. Furthermore, a defining characteristic of \gls*{ImpDiff} is that $\widehat{R}_{\mathrm{particle}}$ can be a poor proxy for $\mathbb{E}_{\pi_{\theta}}[R(X)]$, particularly when $\theta$ changes rapidly, thus requiring long evaluation chains to be run. In contrast, weighted particle rewards obtained from \gls*{SOSMC}-\gls*{ULA} closely track $\mathbb{E}_{\pi_{\theta}}[R(X)]$ throughout tuning, with the method simultaneously needing significantly shorter evaluation chains to compute $\widehat{R}_{\mathrm{fresh}}$. Indeed, this behaviour is consistent with the quicker stabilisation of Langevin trajectories into high reward regions under the tuned model, as illustrated in Figures \ref{fig:ebm-sampler-efficiency-main} and \ref{fig:ebm-illustrative-example-langevin-trajectories}.

\subsection{Reward Tuning of EBMs - MNIST}
Finally, we consider the reward tuning of a convolutional EBM, pre-trained on MNIST using a sampling procedure distinct from the kernel utilised during tuning. Notably, this experiment serves to validate the robustness and applicability of \gls*{SOSMC} in a higher-dimensional setting, whilst also highlighting a practically relevant scenario in which the pre-trained model is learnt under a specific sampling heuristic, while tuning proceeds under a \gls*{ULA} kernel.

\begin{figure}[b!]
  \centering
  \begin{subfigure}[b]{0.48\textwidth}
    \centering
    \includegraphics[width=\linewidth]{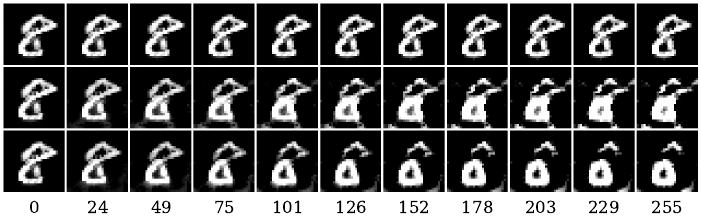}
    \caption{}
    \label{fig:mnist-trajectory-half-plane}
  \end{subfigure}
  \hfill
  \begin{subfigure}[b]{0.48\textwidth}
    \centering
    \includegraphics[width=\linewidth]{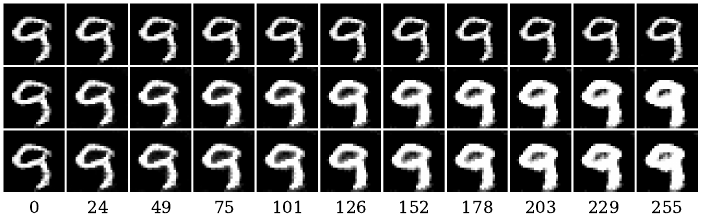}
    \caption{}
    \label{fig:mnist-trajectory-brightness}
  \end{subfigure}

  \caption{Example sampling trajectories, with step index, under the pre-training sampler kernel, for the pre-trained model (top), and $\pi_{\theta_{K}}$ for both \gls*{ImpDiff} (middle) and \gls*{SOSMC}-\gls*{ULA} (bottom). In \textit{(a)} the reward favours \textbf{mass in the lower half-plane}, whilst in \textit{(b)} reward favours \textbf{brighter} images, as outlined in Appendix \ref{appdx:mnist-tuning-details}.}
  \label{fig:mnist-trajectories-main}
\end{figure}

Starting from the pre-trained reference model, $\pi_{0}(x) \propto \exp(-E_{\theta_{0}}(x))$, we again choose to optimise the reverse-KL regularised objective \eqref{eq:reverse-kl-objective}, initialising $\theta = \theta_{0}$, and keeping the reference energy model frozen throughout tuning. Three reward functions are considered, specifically a \emph{brightness}, \emph{darkness}, and \emph{half-plane} reward as detailed in Appendix \ref{appdx:mnist-tuning-details}. Since the sampling heuristic differs from the \gls*{ULA} kernel used within tuning, we evaluate performance using $\widehat{R}_{\mathrm{fresh}}$, obtained through long chains under the original pre-training sampler targeting the \emph{fixed} tuned energy $E_{\theta_{k}}$, giving an estimate of $\mathbb{E}_{\pi_{\theta_{k}}}[R(X)]$ reflecting the model distribution realised by the established sampling procedure. In contrast, $\widehat{R}_{\mathrm{particle}}$ is interpreted solely as an optimisation diagnostic.

Across all rewards and values of $\beta_{\mathrm{KL}}$, both \gls*{ImpDiff} and \gls*{SOSMC}-\gls*{ULA} consistently obtain $\pi_{\theta_{K}}$ with increased $\widehat{R}_{\mathrm{fresh}}$ relative to the pre-trained baseline. Indeed, due to the explicit regularisation in \eqref{eq:reverse-kl-objective}, improvements in $\mathbb{E}_{\pi_{\theta}}[R(X)]$ alone are not sufficient to determine optimisation quality, nevertheless we note the resulting \emph{fresh} samples are consistent with the respective value of $\beta_{\mathrm{KL}}$, whilst not exhibiting \emph{reward hacking}, as illustrated in Figure \ref{fig:mnist-trajectory-half-plane} and \ref{fig:mnist-trajectory-brightness}. Consequently, this experiment demonstrates \gls*{SOSMC}-\gls*{ULA} to be applicable to realistic image-based settings without degrading optimisation behaviour, supporting the generality of the proposed methodology beyond controlled low-dimensional experiments.

\subsection{MMLE - Image Deblurring}
\label{sec:exp-mmle-image-deblurring}

\begin{figure}[b!]
  \centering
  \includegraphics[width=0.99\linewidth]{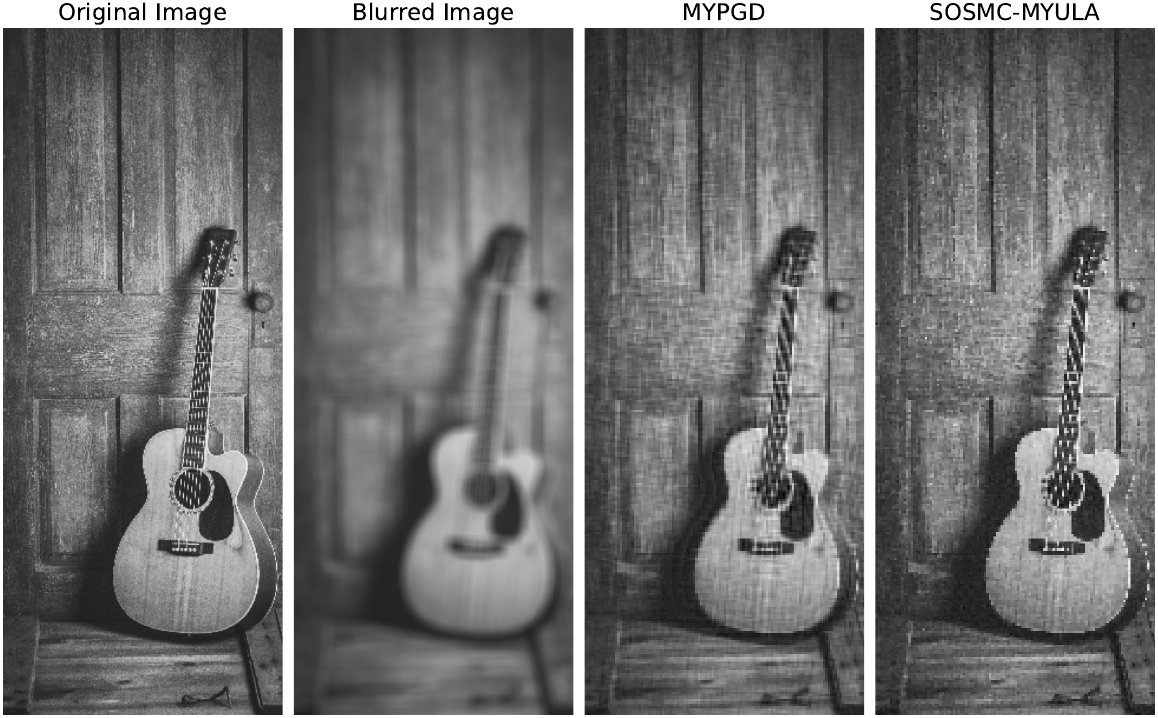}
  \caption{The ground truth image (left), blurred image (middle-left) and posterior mean reconstructions for \textsc{MYPGD} (middle-right) and \textsc{SOSMC-MYULA} (right) particle clouds, for image deblurring Setup A.}
  \label{fig:experimental_details-image_deblurring-setup_A-images}
\end{figure}

To demonstrate an application beyond reward tuning, we now consider a Bayesian image deblurring problem, following the setup of \citet{encinar2025proximal}, where the goal is to recover a latent clean image $x \in \mathbb{R}^{d_{x}}$ from a blurred noisy observation $y = B x^{\star} + \varepsilon$, where $B$ is a known blurring operator, $x^{\star}$ is the ground-truth image, and $\varepsilon \sim \mathcal{N}(0, \sigma^{2} I)$. Notably, $B$ is information destroying and thus the inverse problem is ill-conditioned, which we address through equipping the likelihood with an isotropic total-variation prior, $p_{\theta} = C(\theta) \exp(- e^{\theta} \mathrm{TV}(x))$, so that the resulting posterior takes the form
\begin{align*}
    \pi_{\theta} (x)
    \propto
    \exp \left( - \frac{1}{2 \sigma^{2}} \Vert y - Bx \Vert_{2}^{2} - e^{\theta} \mathrm{TV}(x) +  \log C(\theta) \right),
\end{align*}
where $\mathrm{TV}(x) = \Vert \nabla_{d} x \Vert_{1}$ denotes the total variation, with $\nabla_{d}$ denoting the two-dimensional discrete gradient operator which is non-differentiable. Since $\mathrm{TV}$ is positively one-homogeneous, $C(\theta) \propto e^{d_{x} \theta}$, and, as detailed in Appendix \ref{app:experimental_details-image_deblurring}, serves as a \gls*{MMLE} problem satisfying \eqref{eq:gradient_form}, where
\begin{align*}
    \nabla_\theta \ell(\theta)
    =
    \mathbb{E}_{X\sim\pi_\theta}
    \left[
        e^\theta \mathrm{TV}(X)-d_x
    \right].
\end{align*}

Here, the \gls*{SOSMC}-\gls*{MYULA} variant is compared against the \gls*{MYPGD} baseline introduced in \citet{encinar2025proximal}, so that both methods share the same proximal image dynamics (see Appendix \ref{app:experimental_details-image_deblurring}) yet differ in how the evolving posterior is approximated. Indeed, the former method maintains a weighted particle cloud to approximate $\pi_{\theta_{k}}$, in contrast to the persistent unweighted particle cloud maintained by the latter method.

Across different setups (see Figures \ref{fig:experimental_details-image_deblurring-setup_A-images}, \ref{fig:experimental_details-image_deblurring-setup_B-images} and \ref{fig:experimental_details-image_deblurring-setup_C-images}), where both the ground-truth and blurring operator applied is altered, superior performance for \gls*{SOSMC}-\gls*{MYULA} is observed. In particular, improvements in both the MSE and SSIM are observed across setups (see Figures \ref{fig:experimental_details-image_deblurring-setup_A-evaluation_curves}, \ref{fig:experimental_details-image_deblurring-setup_B-evaluation_curves} and \ref{fig:experimental_details-image_deblurring-setup_C-evaluation_curves}), as are sharper image reconstructions (see Figures \ref{fig:experimental_details-image_deblurring-setup_A-final_particles_comparison}, \ref{fig:experimental_details-image_deblurring-setup_B-final_particles_comparison} and \ref{fig:experimental_details-image_deblurring-setup_C-final_particles_comparison}). Full experimental details can be found in Appendix \ref{app:experimental_details-image_deblurring}.

\begin{figure}[t!]
  \centering
  \includegraphics[width=\linewidth]{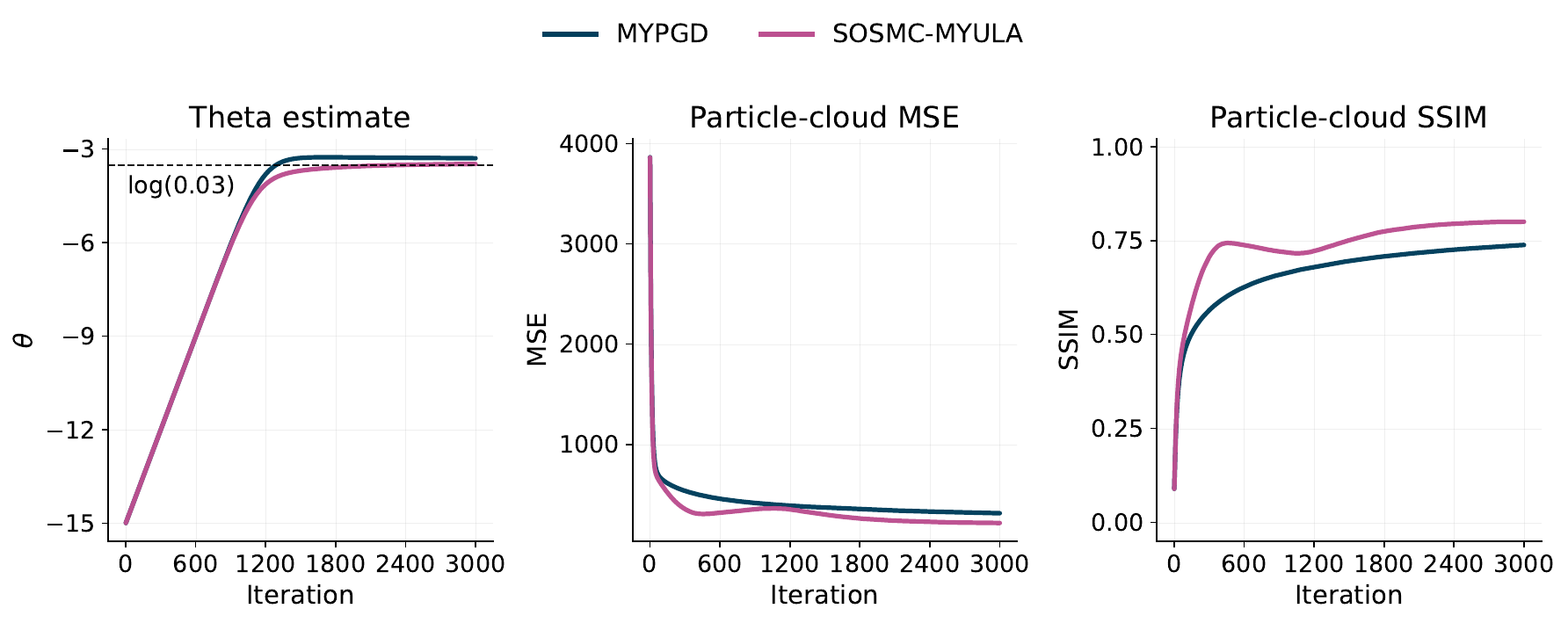}
  \caption{Evolution of parameter estimates (left), MSE (middle) and structural similarity index measure (right) for \textsc{MYPGD} and \textsc{SOSMC-MYULA} particle clouds, for the image deblurring Setup A, with $N = 20$ particles and $\theta_{0} = -15$. The horizontal dashed line indicates the strength of the total variation prior manually set in \citet{durmus_image_reconstruction} and \citet{Goldman_image_reconstruction}.}
  \label{fig:experimental_details-image_deblurring-setup_A-evaluation_curves}
\end{figure}

\section{Conclusion}
We formulate an \gls*{SMC} framework for optimisation of general loss functions whose gradients are intractable. Our algorithm is based on general \gls*{SMC} samplers \citep{del2006sequential} which provides an efficient structure that avoids expensive \gls*{MCMC} loops for estimating gradients. The generality of our framework allows tailored methods to be developed. Future work will focus on two main directions: (i) theoretical guarantees using general theory of \gls*{SMC} and stochastic optimisation to obtain convergence rates for our particle approximations, and (ii) exploration of accelerated schemes that can improve the fast convergence behaviour of \gls*{SOSMC}, especially when coupled with adaptive schemes like in \citet{pmlr-v267-kim25ak}.
\section*{Acknowledgements}
J. C. is supported by EPSRC through the Modern Statistics and Statistical Machine Learning (StatML) CDT programme, grant no. EP/S023151/1. D.C. is supported by PR[AI]RIE-PSAI, a French government grant managed by the Agence Nationale de la Recherche under the France 2030 program. D.C. worked under the auspices of Italian National Group of Mathematical Physics (GNFM) of INdAM.
\section*{Impact Statement}
This work develops stochastic optimisation methods for objectives whose gradients are defined as expectations under parameter-dependent distributions with intractable normalising constants. Such problems are fundamental in probabilistic inference, scientific modeling, and control, where principled formulations are often limited by computational constraints. By enabling more reliable optimisation in these settings, the proposed methods can support advances in multiple scientific areas. The work is methodological in nature and does not introduce new application-specific objectives or data sources; consequently, potential societal impacts depend on downstream uses. As with other general-purpose optimization tools, responsible application requires careful modeling, validation, and transparent reporting of approximation errors and limitations.
\bibliography{refs}
\bibliographystyle{icml2026}

\newpage
\appendix
\onecolumn

\setcounter{figure}{0}
\renewcommand{\thefigure}{A.\arabic{figure}}

\section{Kernel choices and incremental weight derivations} \label{app:kernel_choices}
In this section, we outline the explicit form of the forward and backward kernels, for the \gls*{SOSMC} variants considered throughout the experiments, along with the corresponding incremental weight derivation.

\subsection{Derivation of incremental weights for the ULA kernel}\label{app:sosmc_equivalence}
When the forward kernel is chosen as the \gls*{ULA} kernel with step-size $h > 0$:
\begin{align*}
\mathsf{K}&_{k}(x_{k-1}, x_k) = (4\pi h)^{-d/2} \exp\left( -\frac{1}{4h} \|x_k - x_{k-1} + h \nabla U_{\theta_{k-1}}(x_{k-1}) \|^2 \right),\end{align*}
and
\begin{align*}
\mathsf{L}_{k-1}(x_k, x_{k-1}) = (4\pi h)^{-d/2} \exp\left( -\frac{1}{4h} \|x_{k-1} - x_k + h \nabla U_{\theta_{k}}(x_{k}) \|^2 \right),\end{align*}
the incremental weight in \eqref{eq:smc_incremental_weight} becomes
\begin{align*}
G_k(x_{k-1}, x_k)
&= \frac{\Pi_{\theta_k}(x_k)\, \mathsf{L}_{k-1}(x_k, x_{k-1})}{\Pi_{\theta_{k-1}}(x_{k-1})\, \mathsf{K}_k(x_{k-1}, x_k)} \\
& = \exp\left(-U_{\theta_k}(x_k) + U_{\theta_{k-1}}(x_{k-1})\right) \\
&\quad \times \exp\Bigg(
-\frac{1}{4h}\left\|x_{k-1} - x_k + h \nabla U_{\theta_{k}}(x_k)\right\|^2
+\frac{1}{4h}\left\|x_k - x_{k-1} + h \nabla U_{\theta_{k-1}}(x_{k-1})\right\|^2
\Bigg).
\end{align*}
Equivalently, the log-incremental weight is
\begin{align*}
\log G_k(x_{k-1}, x_k)
&= -U_{\theta_k}(x_k) + U_{\theta_{k-1}}(x_{k-1}) \\
&\quad -\frac{1}{4h}\left\|x_{k-1} - x_k + h \nabla U_{\theta_{k}}(x_k)\right\|^2
+\frac{1}{4h}\left\|x_k - x_{k-1} + h \nabla U_{\theta_{k-1}}(x_{k-1})\right\|^2.
\end{align*}
Expanding the squared norms, we obtain
\begin{align*}
\log G_k(x_{k-1}, x_k)
&= -U_{\theta_k}(x_k) + U_{\theta_{k-1}}(x_{k-1}) \\
&\quad -\frac{1}{4h}\Big(
\|x_{k-1} - x_k\|^2 + h^2 \|\nabla U_{\theta_{k}}(x_k)\|^2 + 2h (x_{k-1} - x_k)^\top \nabla U_{\theta_{k}}(x_k)
\Big) \\
&\quad +\frac{1}{4h}\Big(
\|x_k - x_{k-1}\|^2 + h^2 \|\nabla U_{\theta_{k-1}}(x_{k-1})\|^2 + 2h (x_k - x_{k-1})^\top \nabla U_{\theta_{k-1}}(x_{k-1})
\Big).
\end{align*}
To connect it to earlier work \citep{cuin2025learning,carbone2024efficient}, we rewrite
\begin{align*}
\log G_k(x_{k-1}, x_k)
&= -U_{\theta_k}(x_k) - \frac{1}{2} (x_{k-1} - x_k) \cdot \nabla U_{\theta_{k}}(x_k) - \frac{h}{4} \|\nabla U_{\theta_{k}}(x_k)\|^2 \\
&\quad + U_{\theta_{k-1}}(x_{k-1}) + \frac{1}{2} (x_k - x_{k-1}) \cdot \nabla U_{\theta_{k-1}}(x_{k-1}) + \frac{h}{4} \|\nabla U_{\theta_{k-1}}(x_{k-1})\|^2.
\end{align*}
Define
\begin{align*}
\alpha_{\theta_k}(x, x') = U_{\theta_k}(x) + \frac{1}{2} (x' - x) \cdot \nabla U_{\theta_k}(x) + \frac{h}{4} \|\nabla U_{\theta_k}(x)\|^2,
\end{align*}
then we get the compact expression for full weights
\begin{align*}
\log W_k = \log W_{k-1} - \alpha_{\theta_k}(x_{k}, x_{k-1}) + \alpha_{\theta_{k-1}}(x_{k-1}, x_{k}),
\end{align*}
which matches the expressions provided in \citet[Proposition~1]{cuin2025learning} and \citet[Proposition~1]{carbone2024efficient}.

\subsection{Derivation of incremental weights for reversible kernels} \label{app:kernel_choices-invariant}
Next, we consider forward kernels that are invariant, and in fact reversible, with respect to the target $\pi_{\theta_k}$ for the \gls*{MALA} and \gls*{RWM} variants, and with respect to the extended target $\widetilde{\pi}_{\theta_k}(x,p)$ for the \gls*{HMC} variant.

To this end, for the \gls*{MALA} and \gls*{RWM} variants, we let $\mathsf K_{k}$ denote a reversible Markov transition with invariant
distribution $\pi_{\theta_{k}}$, and choose
\begin{align*}
    \mathsf L_{k-1}(x_k,x_{k-1})
    =
    \frac{
    \pi_{\theta_k}(x_{k-1})
    \mathsf K_{k}(x_{k-1},x_k)
    }{
    \pi_{\theta_k}(x_k)
    } = \mathsf K_{k}(x_k,x_{k-1}),
\end{align*}
that is, $\mathsf L_{k-1}$ is the time reversal of $\mathsf K_{k}$ under $\pi_{\theta_{k}}$, and, by reversibility, coincides with the same Markov transition run in the reverse direction. To be clear, for Metropolis--Hastings kernels, $\mathsf K_{k}$ denotes the full transition defined by the proposal and accept--reject rule, including the probability of remaining at the current state after rejection. As outlined below, for the \gls*{HMC} variant the same construction is applied on the extended state space.

Substituting this choice of kernel into \eqref{eq:smc_incremental_weight} gives
\begin{align*}
    G_k(x_{k-1},x_k)
    &=
    \frac{
    \Pi_{\theta_k}(x_k)\mathsf L_{k-1}(x_k,x_{k-1})
    }{
    \Pi_{\theta_{k-1}}(x_{k-1})\mathsf K_k(x_{k-1},x_k)
    } \\
    &=
    \frac{
    \Pi_{\theta_k}(x_k)
    }{
    \Pi_{\theta_{k-1}}(x_{k-1})
    }
    \frac{
    \pi_{\theta_k}(x_{k-1})
    }{
    \pi_{\theta_k}(x_k)
    } \\
    &=
    \frac{
    \Pi_{\theta_k}(x_{k-1})
    }{
    \Pi_{\theta_{k-1}}(x_{k-1})
    } \\
    &=
    \exp\left(
    -U_{\theta_k}(x_{k-1})
    +
    U_{\theta_{k-1}}(x_{k-1})
    \right),
\end{align*}
and so the compact expression for full weights is given by
\begin{align*}
\log W_k = \log W_{k-1} - U_{\theta_k}(x_{k-1}) + U_{\theta_{k-1}}(x_{k-1}).
\end{align*}
Now, we outline the explicit reversible transitions used in the \gls*{MALA}, \gls*{RWM}, and \gls*{HMC} variants.

\noindent\textbf{\gls*{MALA} kernel}
\\
When the forward kernel is chosen as the \gls*{MALA} transition with step-size $h > 0$, the proposal density is
\begin{align*}
    q_{\theta_k}^{\mathrm{MALA}}(x_{k-1},y_k)
    =
    (4\pi h)^{-d/2}
    \exp\left(
    -\frac{1}{4h}
    \left\|
    y_k-x_{k-1}
    +
    h\nabla U_{\theta_k}(x_{k-1})
    \right\|^2
    \right),
\end{align*}
and the proposal $y_{k}$ is accepted with probability
\begin{align*}
a_{\theta_k}^{\mathrm{MALA}}(x_{k-1},y_k)
=
1\wedge
\frac{
\Pi_{\theta_k}(y_k)
q_{\theta_k}^{\mathrm{MALA}}(y_k,x_{k-1})
}{
\Pi_{\theta_k}(x_{k-1})
q_{\theta_k}^{\mathrm{MALA}}(x_{k-1},y_k)
}.
\end{align*}
Thus, the next state is
\begin{align*}
    x_k
    =
    \begin{cases}
    y_k, & \text{with probability } a_{\theta_k}^{\mathrm{MALA}}(x_{k-1},y_k),\\
    x_{k-1}, & \text{otherwise.}
    \end{cases}
\end{align*}
This proposal--acceptance rule defines the reversible transition $\mathsf K_k^{\mathrm{MALA}}$ targeting $\pi_{\theta_k}$.

\noindent\textbf{\gls*{RWM} kernel}
\\
When the forward kernel is chosen as the \gls*{RWM} transition with step-size $h > 0$, the proposal density is
\begin{align*}
    q^{\mathrm{RWM}}(x_{k-1},y_k)
    =
    (4\pi h)^{-d/2}
    \exp\left(
    -\frac{1}{4h}
    \|y_k-x_{k-1}\|^2
    \right),
\end{align*}
and the proposal $y_{k}$ is accepted with probability
\begin{align*}
    a_{\theta_k}^{\mathrm{RWM}}(x_{k-1},y_k)
    &=
    1\wedge
    \frac{
    \Pi_{\theta_k}(y_k)q^{\mathrm{RWM}}(y_{k}, x_{k-1})
    }{
    \Pi_{\theta_k}(x_{k-1})q^{\mathrm{RWM}}(x_{k-1},y_k)
    } \\
    &=
    1\wedge
    \exp\left(
    -U_{\theta_k}(y_k)
    +
    U_{\theta_k}(x_{k-1})
    \right),
\end{align*}
where in the second equality we have utilised the symmetry of the proposal density.

Thus, the next state is
\begin{align*}
    x_k
    =
    \begin{cases}
    y_k, & \text{with probability } a_{\theta_k}^{\mathrm{RWM}}(x_{k-1},y_k),\\
    x_{k-1}, & \text{otherwise.}
    \end{cases}
\end{align*}
This proposal--acceptance rule defines the reversible transition $\mathsf K_k^{\mathrm{RWM}}$ targeting $\pi_{\theta_k}$.

\noindent\textbf{\gls*{HMC} kernel}
\\
For the \gls*{HMC} variant, the invariant transition is defined on the
extended state space $(x, p) \in \mathbb R^{d} \times \mathbb R^{d}$. Following the standard construction of \citet{neal2011mcmc}, we let $m > 0$ denote the mass parameter and define the extended unnormalised and corresponding normalised target densities as
\begin{align*}
    \widetilde{\Pi}_{\theta}(x, p)
    =
    \exp \left(- U_{\theta}(x) - \frac{1}{2m}\|p\|^2 \right),
    \qquad
    \widetilde{\pi}_{\theta}(x, p)
    =
    \frac{\widetilde{\Pi}_{\theta}(x, p)}{\widetilde Z_{\theta}},
\end{align*}
and the corresponding Hamiltonian as
\begin{align*}
    H_{\theta}(x,p)
    =
    U_{\theta}(x)
    +
    \frac{1}{2m} \|p\|^2.
\end{align*}

At iteration $k$, the \gls*{HMC} transition targets $\widetilde{\pi}_{\theta_k}$ and the momentum is refreshed at the start of the move, so that $p_{k-1}\sim\mathcal N(0,mI)$. Given a leapfrog step-size $\epsilon > 0$, one leapfrog step targeting $U_{\theta_k}$ is
\begin{align*}
    p_{\ell+1/2}
    &=
    p_\ell
    -
    \frac{\epsilon}{2}
    \nabla U_{\theta_k}(x_\ell),
    \\
    x_{\ell+1}
    &=
    x_\ell
    +
    \frac{\epsilon}{m}p_{\ell+1/2},
    \\
    p_{\ell+1}
    &=
    p_{\ell+1/2}
    -
    \frac{\epsilon}{2}
    \nabla U_{\theta_k}(x_{\ell+1}),
\end{align*}
and the proposal, obtained after $L$ leapfrog steps, is denoted by
\begin{align*}
    (y_k, \rho_k)
    =
    \Phi_{\theta_k}^{L,\epsilon}(x_{k-1}, p_{k-1}),
\end{align*}
and is accepted with probability 
\begin{align*}
    a_{\theta_k}^{\mathrm{HMC}}
    =
    1\wedge
    \exp\left(
    -H_{\theta_k}(y_k,\rho_k)
    +
    H_{\theta_k}(x_{k-1},p_{k-1})
    \right).
\end{align*}

Thus, the next extended state is
\begin{align*}
    (x_k,p_k)
    =
    \begin{cases}
    (y_k,-\rho_k), & \text{with probability } a_{\theta_k}^{\mathrm{HMC}},\\
    (x_{k-1},-p_{k-1}), & \text{otherwise.}
\end{cases}
\end{align*}
Here, the sign change in the momentum is the standard \gls*{HMC} momentum flip, included to make the extended-state transition reversible. This proposal--acceptance rule defines the reversible transition $\mathsf K_k^{\mathrm{HMC}}$ targeting $\widetilde{\pi}_{\theta_k}$.

To be clear, the (extended-space) incremental weight is again given by
\begin{align*}
    G_k^{\mathrm{HMC}}
    \big((x_{k-1},p_{k-1}),(x_k,p_k)\big)
    &=
    \frac{
    \widetilde{\Pi}_{\theta_k}(x_{k-1},p_{k-1})
    }{
    \widetilde{\Pi}_{\theta_{k-1}}(x_{k-1},p_{k-1})
    } \\
    &=
    \frac{
    \exp\left(
    -U_{\theta_k}(x_{k-1})
    -
    \frac{1}{2m}\|p_{k-1}\|^2
    \right)
    }{
    \exp\left(
    -U_{\theta_{k-1}}(x_{k-1})
    -
    \frac{1}{2m}\|p_{k-1}\|^2
    \right)
    } \\
    &=
    \exp\left(
    -U_{\theta_k}(x_{k-1})
    +
    U_{\theta_{k-1}}(x_{k-1})
    \right).
\end{align*}

\section{Proofs}\label{A:proofs}
We give some additional results on the $\chi^2$-divergence between $\pi_{\theta_k}$ and $\pi_{\theta_{k-1}}$ as a function of the step size $\gamma$ in this section. First we present a result with a result with local boundedness assumptions on the gradient and the Hessian. Recall that
\[
\chi^2(\pi_{\theta_k}, \pi_{\theta_{k-1}})= \int_{x \in \mathbb{R}^{d_x}}\left( \frac{\pi_{\theta_k}(x)}{\pi_{\theta_{k-1}}(x)} - 1 \right)^2 \pi_{\theta_{k-1}} (x) dx,
\]
and for a matrix $A$ let $\lVert A \rVert_{2}$ denote its operator norm.
\begin{proposition}
    Let $k \in \mathbb{N}$, assume that for all $x \in E$, $U_\theta(x)$ is second order continuously differentiable in $\theta$ and  $\lVert \nabla \ell(\theta_{k-1}) \rVert_2 = L_k < \infty$. Further assume that $\sup_{x \in E} \sup_{\theta \in B_{\theta_{k-1}}( \gamma L_k )}\lVert \nabla_\theta U_\theta( x) \rVert_2 = C_{1,k} < \infty$ and $\sup_{x \in E}\sup_{\theta \in B_{\theta_{k-1}}( \gamma L_k)} \lVert \nabla_\theta \nabla_\theta^\top U_\theta( x) \rVert_2 = C_{2, k} < \infty$ then
    \begin{align*}
    &\chi^2(\pi_{\theta_k}, \pi_{\theta_{k-1}}) = \gamma^2 \nabla \ell(\theta_{k-1})^\top I_{\theta_{k-1}}\nabla \ell(\theta_{k-1}) + O(\gamma^3),
    \end{align*}
    where $I_{\theta_{k-1}} = \mathbb{E}_{\theta_{k-1}} \left[  \nabla_\theta U_{\theta_{k-1}}( X)\nabla_\theta U_{\theta_{k-1}} (X)^\top \right] $.
\end{proposition}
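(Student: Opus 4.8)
The plan is to read $\chi^2(\pi_{\theta_k},\pi_{\theta_{k-1}})$ as the $\pi_{\theta_{k-1}}$-variance of the density ratio, linearise that ratio in the increment $\delta := \theta_k-\theta_{k-1} = -\gamma\nabla\ell(\theta_{k-1})$ (so $\|\delta\|_2 = \gamma L$), and use the uniform bounds $C_1, C_2$ to certify that every discarded term is genuinely $O(\gamma^3)$. First I would put the ratio in self-normalised form: writing $f(x) := U_{\theta_{k-1}}(x) - U_{\theta_k}(x)$ and using $Z_{\theta_k}/Z_{\theta_{k-1}} = \mathbb{E}_{\theta_{k-1}}[e^{f}]$, one has $\pi_{\theta_k}(x)/\pi_{\theta_{k-1}}(x) = e^{f(x)}/\mathbb{E}_{\theta_{k-1}}[e^{f}]$, so that $\mathbb{E}_{\theta_{k-1}}[\pi_{\theta_k}/\pi_{\theta_{k-1}}] = 1$ and
\[
\chi^2(\pi_{\theta_k},\pi_{\theta_{k-1}}) = \mathbb{E}_{\theta_{k-1}}\!\left[\left(\tfrac{\pi_{\theta_k}}{\pi_{\theta_{k-1}}}\right)^{\!2}\right] - 1 = \mathrm{Var}_{\theta_{k-1}}\!\left(\tfrac{\pi_{\theta_k}}{\pi_{\theta_{k-1}}}\right).
\]
Expanding $e^{f}$, $e^{2f}$ and $1/\mathbb{E}_{\theta_{k-1}}[e^{f}]$ to second order and simplifying, the first-order contributions cancel between numerator and normaliser, leaving $\chi^2(\pi_{\theta_k},\pi_{\theta_{k-1}}) = \mathrm{Var}_{\theta_{k-1}}(f) + O(\gamma^3)$.

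Next I would Taylor-expand $f$ in $\theta$ about $\theta_{k-1}$: by $C^2$-smoothness, $f(x) = -\,\delta^\top\nabla_\theta U_{\theta_{k-1}}(x) - \tfrac12\,\delta^\top\nabla_\theta^2 U_{\tilde\theta(x)}(x)\,\delta$, with $\tilde\theta(x)$ on the segment from $\theta_{k-1}$ to $\theta_k$, hence in $B_{\theta_{k-1}}(\gamma L)$; the remainder is therefore $O(\gamma^2)$ uniformly in $x$ by the Hessian bound $C_2$, while $\delta^\top\nabla_\theta U_{\theta_{k-1}}(x) = O(\gamma)$ uniformly in $x$ by the gradient bound $C_1$. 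Substituting into $\mathrm{Var}_{\theta_{k-1}}(f)$, the cross-term between the $O(\gamma)$ linear part and the $O(\gamma^2)$ quadratic part is $O(\gamma^3)$ by Cauchy--Schwarz, so
\[
\mathrm{Var}_{\theta_{k-1}}(f) = \delta^\top\mathrm{Cov}_{\theta_{k-1}}\!\big(\nabla_\theta U_{\theta_{k-1}}\big)\delta + O(\gamma^3) = \gamma^2\,\nabla\ell(\theta_{k-1})^\top I_{\theta_{k-1}}\,\nabla\ell(\theta_{k-1}) + O(\gamma^3),
\]
after identifying the leading quadratic form with $I_{\theta_{k-1}}$ as defined. (Strictly, the centred matrix $\mathrm{Cov}_{\theta_{k-1}}(\nabla_\theta U_{\theta_{k-1}})$ and the raw moment $\mathbb{E}_{\theta_{k-1}}[\nabla_\theta U_{\theta_{k-1}}\nabla_\theta U_{\theta_{k-1}}^\top]$ differ by $\nabla_\theta\log Z_{\theta_{k-1}}(\nabla_\theta\log Z_{\theta_{k-1}})^\top$, which contributes at the same order only when $\theta\mapsto Z_\theta$ is non-trivial; in the settings motivating the analysis, e.g.\ the Gaussian-location family underlying \eqref{eq:ess_infty_gaussian}, $\nabla_\theta\log Z_\theta\equiv 0$ and the two coincide.) Absorbing the leftover terms into $O(\gamma^3)$ here relies on the uniform boundedness of $\nabla_\theta U_\theta$ and $\nabla_\theta^2 U_\theta$ over $B_{\theta_{k-1}}(\gamma L)$, which is exactly what lets me push the pointwise estimates through the $\pi_{\theta_{k-1}}$-integral without extra integrability hypotheses.

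I expect the main obstacle to be the bookkeeping rather than any single hard step: one must expand $e^{f}$ and its normaliser $\mathbb{E}_{\theta_{k-1}}[e^{f}]$ simultaneously so the $O(\gamma)$ and $O(\gamma^2)$ pieces cancel in the right combinations, and one must control both the Lagrange remainder of the $\theta$-Taylor expansion of $U_\theta$ and the Maclaurin remainder of $\exp$ uniformly in $x$ — the latter being fine because $|f(x)|\le C_1\gamma L + \tfrac12 C_2\gamma^2 L^2$ uniformly, so the cubic-and-higher terms of $\exp$ are $O(\gamma^3)$ uniformly. The self-normalised identity $\chi^2 = \mathrm{Var}_{\theta_{k-1}}(\pi_{\theta_k}/\pi_{\theta_{k-1}})$ is the device that makes this manageable, since it removes the need to expand $Z_{\theta_k}/Z_{\theta_{k-1}}$ as a separate object and reduces the whole computation to a single second-order variance estimate.
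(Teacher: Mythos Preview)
Your approach is correct and takes a genuinely different route from the paper. The paper works directly with the integrand $\big(\pi_{\theta+t}/\pi_\theta-1\big)^2$, replaces the ratio by $\exp\!\big(t^\top\nabla_\theta U_\theta(x)+\tfrac12 t^\top\nabla_\theta^2 U_{\tilde\theta(x)}(x)\,t\big)$, and then uses the mean-value bound $e^y-1\le y\,e^{\tilde y}$ to get an upper bound of the form $\big[\gamma^2\nabla\ell^\top I\nabla\ell+O(\gamma^3)\big]\cdot\exp(O(\gamma))$, closing by arguing the slack is $O(\gamma^3)$. You instead cast $\chi^2$ as $\mathrm{Var}_{\theta_{k-1}}\!\big(e^{f}/\mathbb{E}[e^{f}]\big)$, expand numerator and normaliser together, and reduce everything to $\mathrm{Var}_{\theta_{k-1}}(f)+O(\gamma^3)$ before Taylor-expanding $f$. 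Your self-normalised route is cleaner in that it delivers an equality at each step rather than an upper bound plus a gap argument, and it handles the ratio $Z_{\theta_k}/Z_{\theta_{k-1}}$ explicitly (the paper's proof effectively suppresses this factor). The paper's approach, on the other hand, avoids expanding the denominator $\mathbb{E}[e^{f}]$ and so has less bookkeeping.

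Your parenthetical about $\mathrm{Cov}_{\theta_{k-1}}(\nabla_\theta U)$ versus the raw moment $I_{\theta_{k-1}}=\mathbb{E}_{\theta_{k-1}}[\nabla_\theta U\,\nabla_\theta U^\top]$ is a genuine observation: your variance argument naturally produces the centred matrix, whereas the paper's argument (by dropping the normaliser) lands on the uncentred one. The two coincide precisely when $\mathbb{E}_{\theta_{k-1}}[\nabla_\theta U_{\theta_{k-1}}]=-\nabla_\theta\log Z_{\theta_{k-1}}=0$, which holds in the motivating Gaussian-location example; outside that regime the discrepancy is $O(\gamma^2)$ and would need to be absorbed into the statement. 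This is worth flagging, as you do, but it is a feature of the proposition's formulation rather than a defect in your proof.
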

\begin{proof}
First consider a second order Taylor expansion of $U_\theta(x)$ for a fixed value of $x$. let $t \in \mathbb{R}^{d_\theta}$ be an arbitrary vector, then for all $x \in \mathbb{R}^{d_{x}}$
\[
U_{\theta + t}(x) - U_{\theta}(x) = t^\top \nabla_\theta U_\theta( x) + \frac{t^\top \nabla_\theta \nabla_\theta^\top U_{\tilde\theta(x)} (x) t}{2} , 
\]
where the elements of $\tilde\theta(x)$ lies on a line segment between $\theta$ and $\theta + t$, but varies depending on the value of $x$ by Taylor's theorem. Then 
\begin{align*}
    \int_{x \in \mathbb{R}^{d_x}}\left( \frac{\pi_{\theta + t}(x)}{\pi_\theta(x)} - 1 \right)^2 \pi_\theta(x) dx 
    &= \int_{x \in \mathbb{R}^{d_x}}\left( \exp\left(-t^\top \nabla_\theta U_\theta (x) - \frac{t^\top \nabla_\theta \nabla_\theta^\top U_{\tilde\theta(x)} (x)t}{2} \right) - 1 \right)^2 \pi_\theta(x) dx \\
    &\leq \int_{x \in \mathbb{R}^{d_x}}\left( -t^\top \nabla_\theta U_\theta( x) - \frac{t^\top \nabla_\theta \nabla_\theta^\top U_{\tilde\theta(x)} (x) t}{2} \right)^2 \\
    &\times \left( \exp\left(\max\left\{0, -t^\top \nabla_\theta U_\theta( x) - \frac{t^\top \nabla_\theta \nabla_\theta^\top U_{\tilde\theta(x)} (x) t}{2}  \right\}\right) \right)^2 \pi_\theta(x) dx
\end{align*}
where we have use the first order Taylor expansion and noting that $\exp(x) \leq 1 + x\exp(\tilde{x})$ for an $\tilde{x} \in (0, x)$ for $x \geq 0$, and $\tilde{x} \in (-x, 0)$. The above implies that
\begin{align}
&\int_{x \in \mathbb{R}^{d_x}}\left( \frac{\pi_{\theta + t}(x)}{\pi_\theta(x)} - 1 \right)^2 \pi_\theta(x) dx \nonumber \\
&\leq \Bigg( \mathbb{E}_{\theta} \left[ t^\top \nabla_\theta U_\theta( X)\nabla_\theta U_\theta (X)^\top t \right] + \mathbb{E}_{\theta} \left[ t^\top \nabla_\theta U_\theta( X) t^\top \nabla_\theta \nabla_\theta^\top U_{\tilde\theta(X)} (X)t \right] \nonumber \\
&+ \frac{1}{4}\mathbb{E}_{\theta} \left[ \left( t^\top \nabla_\theta \nabla_\theta^\top U_{\tilde\theta(x)} (X)t \right)^2 \right] \Bigg) \cdot \sup_{x \in E} \exp\left(2 \left| t^\top \nabla_\theta U_\theta( x) + t^\top \nabla_\theta \nabla_\theta^\top U_{\tilde\theta(x)} (x)t \right| \right), \label{eq:prop1}
\end{align}
where $\mathbb{E}_\theta$ denotes the expectation under $\pi_\theta$. Substitute $t = - \gamma \nabla \ell(\theta_{k-1})$ and $\theta = \theta_{k-1}$, noting that the update rule for $\theta_k$ is
    $\theta_k = \theta_{k-1} - \gamma \nabla \ell(\theta_{k-1})$, and therefore $\lVert \theta_k -\theta_{k-1} \rVert_2 \leq \gamma L_k $. Combining this bound with the boundedness assumptions of the proposition we have
    \begin{align*}
        (\ref{eq:prop1}) &\leq \{ \gamma^2 \nabla \ell(\theta_{k-1})^\top I_{\theta_{k-1}}\nabla \ell(\theta_{k-1}) + O(\gamma^3) \} \cdot \exp(2\gamma L_k C_1 + \gamma^2 L_k^2 C_2)\\
        &= \{ \gamma^2 \nabla \ell(\theta_{k-1})^\top I_{\theta_{k-1}}\nabla \ell(\theta_{k-1}) + O(\gamma^3) \} \cdot (1 + O(\gamma))\\
        &= \gamma^2 \nabla \ell(\theta_{k-1})^\top I_{\theta_{k-1}}\nabla \ell(\theta_{k-1}) + O(\gamma^3),
    \end{align*}
    where $I_{\theta_{k-1}}$ is the Fisher information at $\theta_{k-1}$. Finally the equality in the statement of the proposition is due to the fact that the gap in the upper bound is of order $O(\gamma^3)$.
\end{proof}

The above proposition's boundedness assumption may be too strong for many models. We give the following proposition which shows a worse rate of decay for the $\chi^2$-divergence between $\pi_k$ and $\pi_{k-1}$ in $\gamma$, but with weaker assumptions. The assumptions in the following proposition are similar to the local dominating conditions used in classical asymptotic statistics \citep[Chapter 5]{van2000asymptotic}, the key difference being that here we require exponential moments to be finite.

\begin{proposition}
    Let $k \in \mathbb{N}$, assume for all $x\in E$ that $ U_\theta(x)$ is second order continuously differentiable in $\theta$ and $ \lVert \nabla \ell(\theta_{k-1}) \rVert_2 = L_k < \infty$. Further assume that there exist vector and matrix functions $S_{1,k}(x)$ and $S_{2,k}(x)$ such that $\forall_{x \in E} \sup_{\theta \in B_{\theta_{k-1}}( \gamma L_k)} |\nabla_\theta U_\theta( x)|  \leq S_{1,k}(x)$ and $\forall_{x \in E}\sup_{\theta \in B_{\theta_{k-1}}( \gamma L_k)}  \nabla_\theta \nabla_\theta^\top U_\theta( x)  \preceq S_{2,k}(x)$, where
    \begin{align*}
        &\mathbb{E}_{\theta_{k-1} }[\exp(\lambda^\top S_{1,k}(X))] < \infty \text{ and }\\
        &\mathbb{E}_{\theta_{k-1} }[\exp(\lambda^\top S_{2,k}(X)) \lambda] < \infty,
    \end{align*}
    for $\lambda \in \mathbb{R}^{d_\theta}$ in a neighbourhood of $0$. Then
    \begin{align*}
    &\chi^2(\pi_{\theta_k}, \pi_{\theta_{k-1}}) =  \sqrt{\gamma^2 \nabla \ell(\theta_{k-1})^\top I_{\theta_{k-1}}\nabla \ell(\theta_{k-1})} + O(\gamma^{3/2}),
    \end{align*}
    where $I_{\theta_{k-1}} = \mathbb{E}_{\theta_{k-1}} \left[  \nabla_\theta U_{\theta_{k-1}}( x)\nabla_\theta U_{\theta_{k-1}} (x)^\top \right] $.
\end{proposition}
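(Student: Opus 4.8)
The plan is to obtain the second-order local expansion of the chi-square divergence and then read the displayed right-hand side off from it. Note first that $\sqrt{\gamma^2\nabla\ell(\theta_{k-1})^\top I_{\theta_{k-1}}\nabla\ell(\theta_{k-1})}=\gamma\,\lVert\nabla\ell(\theta_{k-1})\rVert_{I_{\theta_{k-1}}}=O(\gamma)$, whereas a chi-square divergence between two members of a smooth family at parameter separation $O(\gamma)$ is intrinsically $O(\gamma^2)$; consequently the right-hand side must be read as the chi-\emph{distance} $\sqrt{\chi^2}$, so the displayed identity is the square root of a genuine $O(\gamma^2)$ divergence expansion, consistent with the preceding proposition. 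Set $t=\theta_k-\theta_{k-1}=-\gamma\nabla\ell(\theta_{k-1})$, so $\lVert t\rVert_2\le\gamma L$, and write $\rho(x)=\pi_{\theta_k}(x)/\pi_{\theta_{k-1}}(x)=e^{A(x)}$ with $A=\log\rho=(\log Z_{\theta_{k-1}}-\log Z_{\theta_k})-(U_{\theta_k}-U_{\theta_{k-1}})$. I would start from $\chi^2(\pi_{\theta_k},\pi_{\theta_{k-1}})=\mathbb{E}_{\theta_{k-1}}[(e^A-1)^2]$ and Taylor expand $U_{\theta_k}(x)-U_{\theta_{k-1}}(x)=t^\top\nabla_\theta U_{\theta_{k-1}}(x)+\tfrac12 t^\top\nabla_\theta\nabla_\theta^\top U_{\tilde\theta(x)}(x)\,t$ exactly as in the previous proof, so that both $A$ and its quadratic remainder are dominated pointwise by $\lVert t\rVert_2 S_1(x)+\lVert t\rVert_2^2 S_2(x)$.

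Next I would split $(e^A-1)^2=A^2+2A\,(e^A-1-A)+(e^A-1-A)^2$ and take expectations. The leading term is $\mathbb{E}_{\theta_{k-1}}[A^2]$: because the linear part of $A$ equals $-t^\top\nabla_\theta U_{\theta_{k-1}}(x)$ up to the deterministic normalising-constant shift, and the quadratic part is $O(\lVert t\rVert_2^2)$, this gives $\mathbb{E}_{\theta_{k-1}}[A^2]=t^\top I_{\theta_{k-1}}t+O(\lVert t\rVert_2^3)=\gamma^2\nabla\ell(\theta_{k-1})^\top I_{\theta_{k-1}}\nabla\ell(\theta_{k-1})+O(\gamma^3)$, the requisite second moment of $S_1$ being finite by the exponential-moment hypothesis.

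The substantive step is bounding the two correction terms. Using the elementary inequality $|e^A-1-A|\le\tfrac12 A^2 e^{A_+}$, both reduce to expectations of the form $\mathbb{E}_{\theta_{k-1}}[|A|^3 e^{cA_+}]$ with $c\in\{1,2\}$. I would control each by Hölder's inequality, separating the polynomial factor—a power of $\lVert t\rVert_2 S_1+\lVert t\rVert_2^2 S_2$, which carries $\lVert t\rVert_2^3=O(\gamma^3)$—from the exponential factor $e^{cA_+}\le\exp\!\big(c\lVert t\rVert_2 S_1(x)+c\lVert t\rVert_2^2 S_2(x)\big)$. The hypotheses $\mathbb{E}_{\theta_{k-1}}[e^{\lambda^\top S_1}]<\infty$ and $\mathbb{E}_{\theta_{k-1}}[e^{\lambda^\top S_2}\lambda]<\infty$ for $\lambda$ near $0$ then make the exponential factor, and its product with any fixed polynomial, have a bounded moment, provided $\gamma$ is small enough that $c\lVert t\rVert_2$ lies in that neighbourhood. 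This yields $\chi^2(\pi_{\theta_k},\pi_{\theta_{k-1}})=\gamma^2\nabla\ell(\theta_{k-1})^\top I_{\theta_{k-1}}\nabla\ell(\theta_{k-1})+o(\gamma^2)$ (an $O(\gamma^3)$ remainder when the exponential factors converge to a finite constant, and at worst $O(\gamma^{5/2})$ if the Hölder split is taken lossily). Taking square roots, $\sqrt{\gamma^2\nabla\ell^\top I_{\theta_{k-1}}\nabla\ell+o(\gamma^2)}=\sqrt{\gamma^2\nabla\ell^\top I_{\theta_{k-1}}\nabla\ell}\,(1+o(1))=\sqrt{\gamma^2\nabla\ell(\theta_{k-1})^\top I_{\theta_{k-1}}\nabla\ell(\theta_{k-1})}+O(\gamma^{3/2})$, which is the stated identity.

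The main obstacle is precisely the simultaneous control in the third step: unlike the uniformly bounded setting of the preceding proposition, here $A_+$ is random and unbounded, so the exponential factor cannot be pulled out of the integral, and one must instead match the finite-exponential-moment (local dominating) conditions to a Hölder exponent so that $c\lVert t\rVert_2$ times that exponent remains inside the neighbourhood of $0$ on which the moments are finite. This constrains $\gamma$ below a threshold set by that neighbourhood and is the reason the guaranteed remainder is weaker than in the bounded case; the delicate book-keeping is to verify that after choosing admissible Hölder exponents the polynomial factor still supplies a power $\lVert t\rVert_2^3$, so that the $O(\gamma^{3/2})$ budget on the distance is met with room to spare.
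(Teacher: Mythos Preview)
Your proposal takes a genuinely different route from the paper, and in fact aims at a sharper conclusion than what is actually being claimed.

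You have reinterpreted the left-hand side as the chi-\emph{distance} $\sqrt{\chi^2}$, arguing that a $\chi^2$ divergence at parameter separation $O(\gamma)$ must be $O(\gamma^2)$. But the paper means the statement literally: it is content with the loose $O(\gamma)$ upper bound $\chi^2\le\sqrt{\gamma^2\nabla\ell^\top I_{\theta_{k-1}}\nabla\ell}+O(\gamma^{3/2})$, and its proof never produces a $\gamma^2$-accurate expansion of $\chi^2$. Concretely, the paper's argument is: (i) bound $\int(\rho-1)^2\pi_{\theta_{k-1}}\le\int|\rho-1|\,\pi_{\theta_{k-1}}$; (ii) Taylor expand $U_{\theta+t}-U_\theta$ and use $|e^x-1|\le|x|e^{x_+}$; (iii) apply Cauchy--Schwarz to the $L^1$ integral, splitting it as $\sqrt{\mathbb{E}[(\text{linear}+\text{quadratic})^2]}\cdot\sqrt{\mathbb{E}[e^{2(\cdot)_+}]}$. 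The first factor is exactly $\sqrt{\gamma^2\nabla\ell^\top I_{\theta_{k-1}}\nabla\ell}+O(\gamma^{3/2})$ by the dominating-function hypotheses, and the second factor is $1+O(\gamma)$ by the exponential-moment hypotheses. No square root is taken at the end; the $\sqrt{\cdot}$ in the statement comes from the Cauchy--Schwarz split itself.

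Your decomposition $(e^A-1)^2=A^2+2A(e^A-1-A)+(e^A-1-A)^2$, with H\"older on the correction terms, is more in the spirit of the \emph{preceding} proposition and would indeed deliver the stronger $\chi^2=\gamma^2\nabla\ell^\top I_{\theta_{k-1}}\nabla\ell+o(\gamma^2)$, from which the paper's bound follows trivially. One caveat: your leading term $\mathbb{E}_{\theta_{k-1}}[A^2]$ involves the \emph{centered} score (because $A$ carries the $\log Z$ shift), so it equals $t^\top\mathrm{Cov}(\nabla_\theta U)t$ rather than $t^\top\mathbb{E}[\nabla_\theta U\nabla_\theta U^\top]t$; the paper's $I_{\theta_{k-1}}$ is defined as the uncentered second moment, and its own proof simply drops the normalising-constant ratio when writing $\pi_{\theta+t}/\pi_\theta$, so this discrepancy is present in the paper as well.
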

\begin{proof}
Consider a second order Taylor expansion of $U_\theta(x)$, let $t \in \mathbb{R}^{d_\theta}$ be an arbitrary vector, then for all $x \in \mathbb{R}^{d_{x}}$
\[
U_{\theta + t}(x) - U_{\theta}(x) = t^\top \nabla_\theta U_\theta( x) + \frac{t^\top \nabla_\theta \nabla_\theta^\top U_{\tilde\theta(x)} (x) t}{2} , 
\]
where the elements of $\tilde\theta(x)$ lies on a line segment between $\theta$ and $\theta + t$, consider
    \begin{align*}
        \int_{x \in \mathbb{R}^{d_x}}\left( \frac{\pi_{\theta + t}(x)}{\pi_\theta(x)} - 1 \right)^2 \pi_\theta(x) dx &\leq \int_{x \in \mathbb{R}^{d_x}}\left| \frac{\pi_{\theta + t}(x)}{\pi_\theta(x)} - 1 \right| \pi_\theta(x) dx\\
    &= \int_{x \in \mathbb{R}^{d_x}}\left| \exp\left(-t^\top \nabla_\theta U_\theta (x) - \frac{t^\top \nabla_\theta \nabla_\theta^\top U_{\tilde\theta(x)} (x)t}{2} \right) - 1 \right| \pi_\theta(x) dx \\
    &\leq \int_{x \in \mathbb{R}^{d_x}}\left| t^\top \nabla_\theta U_\theta( x) + \frac{t^\top \nabla_\theta \nabla_\theta^\top U_{\tilde\theta(x)} (x) t}{2} \right| \\
    &\times  \exp\left(\max\left\{0, -t^\top \nabla_\theta U_\theta( x) - \frac{t^\top \nabla_\theta \nabla_\theta^\top U_{\tilde\theta(x)} (x) t}{2}  \right\}\right) \pi_\theta(x) dx\\
    &\leq \sqrt{ \int_{x \in \mathbb{R}^{d_x}}\left( t^\top \nabla_\theta U_\theta( x) + \frac{t^\top \nabla_\theta \nabla_\theta^\top U_{\tilde\theta(x)} (x) t}{2} \right)^2 \pi_\theta(x) dx}\\
    &\quad \times \sqrt{\int_{x \in \mathbb{R}^{d_x}} \exp\left(\max\left\{0, -2t^\top \nabla_\theta U_\theta( x) - t^\top \nabla_\theta \nabla_\theta^\top U_{\tilde\theta(x)} (x) t \right\}\right) \pi_\theta(x) dx},
    \end{align*}
    by Cauchy-Schwarz inequality. Now let $t = -\gamma \nabla\ell(\theta_{k-1})$ and $\theta = \theta_{k-1}$, for the first term, by direct computation we have that
    \begin{align*}
        \sqrt{ \int_{x \in \mathbb{R}^{d_x}}\left( t^\top \nabla_\theta U_\theta( x) + \frac{t^\top \nabla_\theta \nabla_\theta^\top U_{\tilde\theta(x)} (x) t}{2} \right)^2 \pi_\theta(x) dx} =  \sqrt{\gamma^2 \nabla \ell(\theta_{k-1})^\top I_{\theta_{k-1}}\nabla \ell(\theta_{k-1})} + O(\gamma^{3/2}), 
    \end{align*}
    as the score and the hessian are locally dominated by functions $S_{1,k}(x)$ and $S_{2,k}(x)$ for all $k\in \mathbb{N}$ such that $E_{\theta_{k-1}}[ \nabla\ell^\top S_{1,k}(X)] < \infty$ and $E_{\theta_{k-1}}[ \nabla\ell(\theta_{k-1})^\top S_{2,k}(X) \nabla\ell(\theta_{k-1})] < \infty$ (if exponential moments exist, then all moments exist and by assumption $\lVert \nabla \ell(\theta_{k-1}) \rVert = L_k < \infty$). As for the other term by Cauchy-Swartz's inequality
    \begin{align*}
        &\sqrt{\int_{x \in \mathbb{R}^{d_x}} \exp\left(\max\left\{0, -2\gamma \nabla \ell(\theta_{k-1})^\top \nabla_\theta U_\theta( x) - \gamma^2 \nabla \ell(\theta_{k-1})^\top \nabla_\theta \nabla_\theta^\top U_{\tilde\theta(x)} (x)  \nabla \ell(\theta_{k-1}) \right\}\right) \pi_\theta(x) dx}\\
        &\leq \left( \int_{x \in \mathbb{R}^{d_x}} \exp\left( 4\gamma \nabla \ell(\theta_{k-1})^\top S_1( x)\right)\pi_\theta(x) dx\right)^{1/4}   \cdot \left(\int_{x \in \mathbb{R}^{d_x}} \exp\left(2\gamma^2 \nabla \ell(\theta_{k-1})^\top S_2 (x)  \nabla \ell(\theta_{k-1}) \right) \pi_\theta(x) dx\right)^{1/4}, 
    \end{align*}
    as $\gamma \rightarrow 0$, $2\gamma \nabla \ell(\theta_{k-1})$ will eventually lie in a neighbourhood of $0$, therefore by the absolute convergence of the power series of $\exp(x)$ and the existence of exponential moments assumption
    \begin{align*}
        \int_{x \in \mathbb{R}^{d_x}} \exp\left( 4\gamma \nabla \ell(\theta_{k-1})^\top S_{1,k}( x)\right)\pi_\theta(x) dx &= 1+ O(\gamma)\\
        \int_{x \in \mathbb{R}^{d_x}} \exp\left(2\gamma^2 \nabla \ell(\theta_{k-1})^\top S_{2,k} (x)  \nabla \ell(\theta_{k-1}) \right) \pi_\theta(x) dx &= 1+ O(\gamma),
    \end{align*}
    combining all of these estimates shows the desired result.
\end{proof}

\section{Gradient Estimation for Reward Tuning} \label{appdx:gradient-estimation-reward-tuning}

Let $\pi_{\theta}$ denote the parametric probability distribution
\begin{equation*}
    \pi_\theta(x) = \frac{\exp(-U_{\theta}(x))}{Z_{\theta}},
    \qquad
    Z_{\theta} = \int_{\mathbb{R}^{d_{x}}}\exp(-U_{\theta}(x)) \, dx,
\end{equation*}
as previously introduced in \eqref{eq:pi_theta}. Additionally, let $R: \mathbb{R}^{d_{x}} \to \mathbb{R}$ denote a (potentially non-differentiable) reward function, whilst $\pi_{0}$ denotes a fixed reference distribution with energy $U_{0}$.

Throughout reward tuning, we consider loss objectives of the form
\begin{equation} \label{eq:general-kl-objective-appendix}
    \ell(\theta) = - \mathbb{E}_{\pi_{\theta}} [ R(X) ] + \beta_{\mathrm{KL}} \mathrm{D}_{\mathrm{KL}}(\cdot \Vert \cdot),
\end{equation}
where $\mathrm{D}_{\mathrm{KL}}(\cdot \Vert \cdot)$ may denote either the forward or reverse $\mathrm{KL}$ respectively. Indeed, the forward variant is natural in cases when direct access to $\pi_{0}$ is available, whilst the reverse variant is natural when $\pi_{0}$ is defined implicitly.

In either case, the respective gradient is compatible with \eqref{eq:gradient_form}, where, through following the derivations outlined in Appendix \ref{appdx:langevin-processes-gradient-derivation}, we arrive at the following form for the forward $\mathrm{KL}$ variant,
\begin{align*}
    \nabla_{\theta} \overrightarrow{\ell} \! (\theta)
    &=
    \left(
    \mathbb{E}_{\pi_{\theta}}[R(x) \, \nabla_{\theta} U_{\theta}(x)]
    -
    \mathbb{E}_{\pi_{\theta}}[R(x)] \mathbb{E}_{\pi_{\theta}}[\nabla_{\theta} U_{\theta}(x)]
    \right)
    +
    \beta_{\mathrm{KL}}
    \left(
    \mathbb{E}_{\pi_{0}}[\nabla_{\theta} U_{\theta}(x)]
    -
    \mathbb{E}_{\pi_\theta}[\nabla_{\theta} U_{\theta}(x)]
    \right).
\end{align*}
whilst for the reverse $\mathrm{KL}$ variant we follow Appendix \ref{appdx:ebm-reward-tuning-loss-actual}, to obtain the form
\begin{align*}
    \nabla_{\theta} \overleftarrow{\ell} \! (\theta)
    &=
        \mathbb{E}_{\pi_{\theta}} \left[ A_{\theta}(x) \, \nabla_{\theta} U_{\theta}(x) \right] 
        - 
        \mathbb{E}_{\pi_{\theta}} \left[ A_{\theta}(x) \right] \, \mathbb{E}_{\pi_{\theta}} \left[ \nabla_{\theta} U_{\theta}(x) \right],
\end{align*}
where $A_{\theta}(x) = R(x) + \beta_{\mathrm{KL}} \left( U_{\theta}(x) - U_{0}(x) \right)$.

As outlined in Section \ref{sec:sosmc}, at a high level, \gls*{SOSMC} maintains a weighted particle population $\left\{X_{k}^{(i)}, w_{k}^{(i)} \right\}$, with $w_{k}^{(i)} \ge 0$ and $\sum_{i=1}^{N} w_{k}^{(i)} = 1$, where $k$ denotes the index of the outer iteration. The gradient is then approximated via this weighted particle population,
\begin{equation*}
    g_{k} = \sum_{i=1}^{N} w^{(i)}_{k} H_{\theta_{k}}(X_{k}^{(i)}),
\end{equation*}
where in the case of the forward $\mathrm{KL}$ variant we specifically have
\begin{align}
    \overrightarrow{g_{k}} \! 
    =
    & \sum_{i=1}^{N} w^{(i)}_{k} R(X_{k}^{(i)}) \nabla_{\theta} U_{\theta_{k}}(X^{(i)}_{k} )
    -
    \left( \sum_{i=1}^{N} w_{k}^{(i)} R(X_{k}^{(i)}) \right) \left( \sum_{i=1}^{N} w_{k}^{(i)} \nabla_{\theta} U_{\theta_{k}}(X^{(i)}_{k} ) \right) \nonumber \\
    &+ \beta_{\mathrm{KL}} \left( \frac{1}{M}\sum_{j=1}^{M} \nabla_{\theta} U_{\theta_{k}} (X_{0, k}^{(j)})  - \sum_{i=1}^{N} w_{k}^{(i)} \nabla_{\theta} U_{\theta_{k}}(X^{(i)}_{k} )\right), \label{eq:forward-kl-gradient-estimator-form}
\end{align}
where $\{ X_{0, k}^{(j)} \}^{M}_{j=1}$ denotes an independent reference batch, drawn from $\pi_{0}$, at outer iteration index $k$.

In contrast, for the reverse $\mathrm{KL}$ variant we have
\begin{align}
    \overleftarrow{g_{k}} \! 
    =
    & \sum_{i=1}^{N} w^{(i)}_{k} R(X_{k}^{(i)}) \nabla_{\theta} U_{\theta_{k}}(X^{(i)}_{k} )
    -
    \left( \sum_{i=1}^{N} w_{k}^{(i)} R(X_{k}^{(i)}) \right) \left( \sum_{i=1}^{N} w_{k}^{(i)} \nabla_{\theta} U_{\theta_{k}}(X^{(i)}_{k} ) \right) \nonumber \\
    &+ 
    \beta_{\mathrm{KL}} 
    \left( 
    \sum_{i=1}^{N} w_{k}^{(i)} \left(U_{\theta_{k}}(X^{(i)}_{k}) - U_{0}(X^{(i)}_{k}) \right) \nabla_{\theta} U_{\theta_{k}}(X^{(i)}_{k} )
    \right) \nonumber \\
    &-
    \beta_{\mathrm{KL}} 
    \left( 
    \sum_{i=1}^{N} w_{k}^{(i)} \left(U_{\theta_{k}}(X^{(i)}_{k}) - U_{0}(X^{(i)}_{k})\right)\right)
    \left( \sum_{i=1}^{N} w_{k}^{(i)}
    \nabla_{\theta} U_{\theta_{k}}(X^{(i)}_{k} )
    \right). \label{eq:reverse-kl-gradient-estimator-form}
\end{align}

Notably, in the case of \gls*{ImpDiff}, gradient estimates correspond to the above case with $w_{k}^{(i)} = 1 / N$. Although \gls*{SOUL} also utilises uniform weights, the manner in which expectations are constructed differs, where a time average of a single chain is instead utilised, so that
\begin{align*}
    \widehat{\mathbb{E}}_{k}^{\mathrm{SOUL}} [\varphi(X)] = \frac{1}{N_{\mathrm{eff}}} \sum^{N_{\mathrm{eff}}}_{j=1} \varphi(X_{k, j}),
\end{align*}
whilst the reference batch, in the case of the forward $\mathrm{KL}$ variant, is handled identically.

As emphasised in Appendix \ref{appdx:ebm-reward-tuning-loss-surrogate-motivation}, particles and weights are treated as \emph{stop-gradient} quantities throughout. Furthermore, \emph{surrogate} losses are in practice utilised due to specifics related to current automatic differentiation frameworks, for which we again refer to Appendix \ref{appdx:ebm-reward-tuning-loss-surrogate-motivation}.

\section{Algorithms} \label{appdx:algorithms}

\begin{algorithm}[H]
\caption{\gls*{SOSMC}-\gls*{ULA} for reward tuning a pretrained EBM (forward or reverse $\mathrm{KL}$)}
\label{alg:id_jala_reverse_kl}
\begin{algorithmic}
\REQUIRE Frozen pre-trained $E_{\theta_0}$, reward $R(\cdot)$, number of particles $N$, outer iterations $K$, step size $\gamma$, noise scale $\sigma$, regularisation coefficient $\beta_{\mathrm{KL}}$, ESS threshold $\tau$, optimiser $\mathrm{OPT}$.
\STATE Initialise trainable model $E_{\theta}$ at $\theta_0$, particles $\{X_{0}^{(i)}\}_{i=1}^N \sim \pi_{0}$, and weights $W_0^{(i)} = 1$.
\STATE \textit{Define:} $\alpha_\theta(x\!\to\!x' ; \sigma) = E_\theta(x) + \frac{1}{2 \sigma^2}(x'-x)^\top\nabla_xE_\theta(x) + \frac{\gamma}{4 \sigma^2}\|\nabla_xE_\theta(x)\|^2$ \textit{\hfill // treated as constant in $\nabla_\theta$}
\FOR{$k=1,\dots,K-1$}
  \STATE $w_{k-1}^{(i)} = W_{k-1}^{(i)} / \sum_{j=1}^{N} W_{k-1}^{(j)}$
  \STATE Compute $g_{k-1}(w_{k-1})$ as in \eqref{eq:forward-kl-gradient-estimator-form} for forward-$\mathrm{KL}$, or as in \eqref{eq:reverse-kl-gradient-estimator-form} for reverse-$\mathrm{KL}$.
  \STATE $\theta_{k} = \mathrm{OPT}(\theta_{k-1}, g_{k-1}(w_{k-1}) )$
  \STATE
  \FOR{$i=1,\dots,N$}
    \STATE $X_{k}^{(i)} = X_{k-1}^{(i)} - \gamma \nabla_x \,E_{\theta_{k-1}}(X_{k-1}^{(i)}) + \sqrt{2\gamma}\,\sigma\,\varepsilon^{(i)}_{k-1}, \qquad \text{where} \ \varepsilon^{(i)}_{k-1} \sim \mathcal{N}(0, I).$
    \STATE $\alpha_{k-1}^{(i)} = \alpha_{\theta_{k-1}}( X_{k-1}^{(i)} \!\to\! X_{k}^{(i)} ; \sigma)$
    \STATE $\alpha_{k}^{(i)} = \alpha_{\theta_{k}} (X_{k}^{(i)} \to X_{k-1}^{(i)} ; \sigma)$
    \STATE $\log W_{k}^{(i)} = \log W_{k-1}^{(i)} - \alpha_{k}^{(i)} + \alpha_{k-1}^{(i)}$
  \ENDFOR
  \STATE
  \STATE $\log W_{k} = \log W_{k} - \max_i \log W_{k}^{(i)}$. \textit{\hfill // recenter weights}
  \STATE Resample $\{X_{k}^{(i)}\}_{i=1}^N$ w.r.t $w_{k}^{(i)}$ and set $W_{k}^{(i)} = 1$ if $\mathrm{ESS} < \tau_{\mathrm{res}} N$.
\ENDFOR
\STATE \textbf{Return} tuned parameters $\theta_{K}$, and optionally history of other quantities.
\end{algorithmic}
\end{algorithm}

\begin{algorithm}[H]
\caption{\gls*{ImpDiff} for reward tuning a pretrained EBM (forward or reverse $\mathrm{KL}$)}
\label{alg:id_impdiff_reverse_kl}
\begin{algorithmic}
\REQUIRE Frozen pre-trained $E_{\theta_0}$, reward $R(\cdot)$, number of particles $N$, outer iterations $K$, step size $\gamma$, noise scale $\sigma$, regularisation coefficient $\beta_{\mathrm{KL}}$, optimiser $\mathrm{OPT}$.
\STATE Initialise trainable model $E_{\theta}$ at $\theta_0$, particles $\{X_{0}^{(i)}\}_{i=1}^N \sim \pi_{0}$.
\FOR{$k=1,\dots,K-1$}
  \STATE $w_{k-1}^{(i)} = 1 / N$
  \STATE Compute $g_{k-1}(w_{k-1})$ as in \eqref{eq:forward-kl-gradient-estimator-form} for forward-$\mathrm{KL}$, or as in \eqref{eq:reverse-kl-gradient-estimator-form} for reverse-$\mathrm{KL}$.
  \STATE $\theta_{k} = \mathrm{OPT}(\theta_{k-1}, g_{k-1}(w_{k-1}) )$
  \STATE
  \FOR{$i=1,\dots,N$}
    \STATE $X_{k}^{(i)} = X_{k-1}^{(i)} - \gamma \nabla_x \,E_{\theta_{k-1}}(X_{k-1}^{(i)}) + \sqrt{2\gamma}\,\sigma\,\varepsilon^{(i)}_{k-1}, \qquad \text{where} \ \varepsilon^{(i)}_{k-1} \sim \mathcal{N}(0, I).$
  \ENDFOR
  \STATE
\ENDFOR
\STATE \textbf{Return} tuned parameters $\theta_{K}$, and optionally history of other quantities.
\end{algorithmic}
\end{algorithm}

\section{Experimental Details} \label{appdx:experimental-details}

\setcounter{secnumdepth}{4}

\makeatletter
\@addtoreset{paragraph}{subsection}
\makeatother

\renewcommand{\theparagraph}{\thesubsection.\arabic{paragraph} }

\subsection{Reward tuning of Langevin processes} \label{appdx:exp-langevin-process-details}
Here, we consider the reward tuning of a \emph{stationary Langevin sampler}, whose invariant distribution is a Gaussian mixture with learnable mixture weights. This setup mirrors that of Section 5.1 in \citet{marion2025implicit}, which we utilise to compare their proposed method, termed throughout as \gls*{ImpDiff}, with our \gls*{SOSMC} method, for a variety of kernel choices, as well as a nested inner-loop baseline method, specifically that of \gls*{SOUL}. 

\noindent \textbf{Setup.} 
In accordance with \citet{marion2025implicit}, we parameterise a family of potentials $V(\cdot, \theta) : \mathbb{R}^2 \to \mathbb{R}$, with parameters $\theta \in \mathbb{R}^{m}$, of the form
\begin{equation} \label{eq:gmm-potential}
    V(x, \theta)
    =
    -\log \left( \sum_{i=1}^{m} \sigma(\theta)_{i} \exp \left(-\frac{\Vert x-\mu_i\Vert_{2}^{2}}{2\sigma^2} \right) \right),
\end{equation}
where $\sigma(\theta)$ denotes the softmax mapping $\mathbb{R}^{m}$ to the probability simplex, while $\mu_i \in \mathbb{R}^2$ for $i = 1, \dots, m$ denote fixed component means, and $\sigma^{2} > 0$ is a fixed variance parameter.

The corresponding stationary distribution is the Gibbs measure,
\begin{equation}
    \pi_{\theta}(x) \ \propto \ \exp(-V(x,\theta)),
\end{equation}
which we note is a mixture of isotropic Gaussians, with means $\mu_{i}$ and common variance proportional to $\sigma^2$. As noted in \citet{marion2025implicit}, the normalising constant does not depend on $\theta$ for \eqref{eq:gmm-potential}.  

Given a reward function $R:\mathbb{R}^{2} \to \mathbb{R}$, and a fixed reference distribution $\pi_{\mathrm{ref}}$, we optimise the KL-regularised objective, as in \citet{gutmann12a} and \citet{marion2025implicit}, where
\begin{equation}
\label{eq:langevin-objective}
    \ell(\theta)
    =
    - \mathbb{E}_{X\sim \pi_{\theta}} \left[ R(X) \right]
    +
    \beta_{\mathrm{KL}}\, \mathrm{KL} \left(
        \pi_{\mathrm{ref}} \,\Vert\, \pi_{\theta}
    \right),
    \qquad \beta_{\mathrm{KL}} > 0,
\end{equation}
with $\pi_{\theta}$ denoting the Gibbs distribution induced by the parameterised
potential $V(\cdot,\theta)$. The reference distribution $\pi_{\mathrm{ref}}$ is taken to be a fixed Gibbs distribution, of the same functional form, induced by the frozen parameter $\theta_{\mathrm{ref}}$, from which independent samples are readily available.

\noindent \textbf{Reference Distribution.}
Essentially, $\pi_{\mathrm{ref}}$ acts as a fixed baseline, against which deviations induced by reward tuning are penalised. Since the potential in \eqref{eq:gmm-potential} corresponds to a Gaussian mixture model, sampling from $\pi_{\mathrm{ref}}(x) \ \propto \ \exp \left(-V(x, \theta_{\mathrm{ref}})\right)$ is both exact and computationally inexpensive. In fact, a sample $X_{\mathrm{ref}} \sim \pi_{\mathrm{ref}}$ is generated by simply drawing a component index, followed by sampling from the corresponding isotropic Gaussian component,
\begin{equation}
    I \sim \sigma(\theta_{\mathrm{ref}}), \qquad X_{\mathrm{ref}} = \mu_I + \sqrt{\sigma^2}\,\varepsilon,
    \qquad \varepsilon \sim \mathcal{N}(0, I_2).
\end{equation}
Indeed, $\mathrm{KL}(\pi_{\mathrm{ref}}\Vert \pi_\theta)$ penalises deviations from $\pi_{\mathrm{ref}}$ such that regions which possess non-negligible probability under $\pi_{\mathrm{ref}}$ cannot be assigned insignificant probability under $\pi_\theta$, at least without incurring a large penalty.

\noindent \textbf{Reward Functions.}
Regarding the reward functions considered, we consider a collection designed to examine the behaviour of reward tuning under non-smoothness, gating constraints, and even disconnected high-reward regions. For example, following \citet{marion2025implicit}, we consider a gated reward, as well as its smoothed version, such as
\begin{equation}
    \label{eq:hard-reward}
    R_{\mathrm{hard}}(x)
    =
    \mathbf{1}_{\{x_1 \ge 0\}}
    \exp\!\left(-\|x-c\|_2^2/\tau\right), 
    \qquad
    R_{\mathrm{smooth}}(x)
=
\sigma(x_1/\lambda)
\exp\!\left(-\|x-c\|_2^2/\tau\right),
\end{equation}
where we note $R_{\mathrm{smooth}}$ remains differentiable no matter the transition sharpness, as determined by $\lambda > 0$. Furthermore, a multi-modal reward, of the form
\begin{equation}
    R_{\mathrm{multi}}(x)
    =
    \max_{j=1,\dots,J}
    \exp\!\left(-\|x-c_j\|_2^2/\tau\right),
\end{equation}
is also considered. Since $\nabla_{\theta} \ell$ does not depend on derivatives of $R$ here, as outlined in Appendix \ref{appdx:langevin-processes-gradient-derivation}, the optimisation of \eqref{eq:langevin-objective} is well defined for all the reward functions outlined above.

\begin{figure}[b!]
  \centering
  \includegraphics[width=0.98\linewidth]{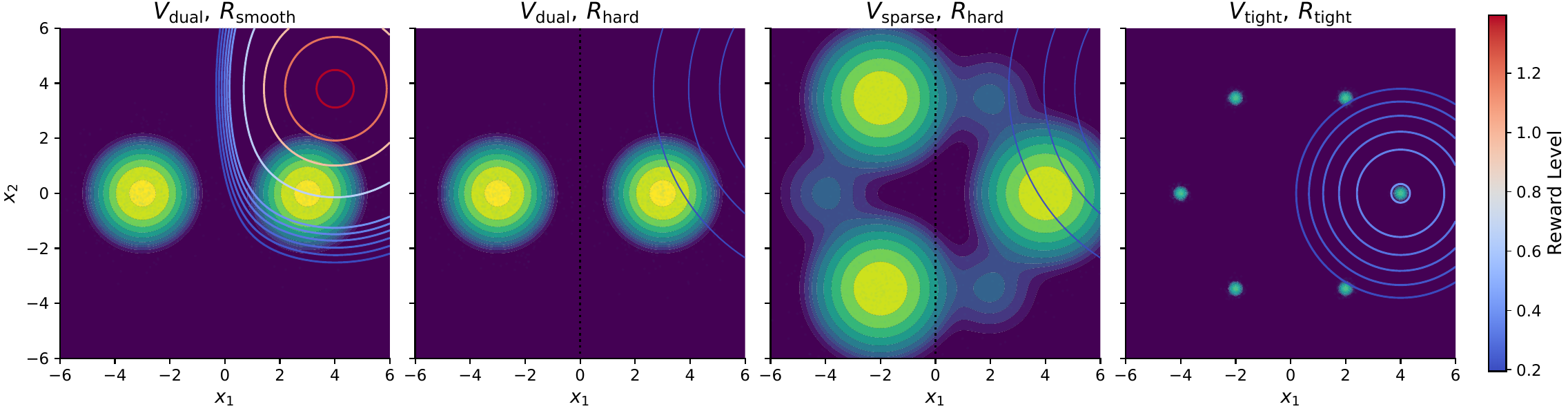}
  \caption{Reference distributions and reward functions considered, where brighter regions indicate higher probability.}
  \label{fig:langevin_processes-problem_setups}
\end{figure}

\noindent \textbf{Tuning Procedure.}
The tuning procedure begins by first initialising the persistent particle(s) $X_{0}^{(i)}\sim \pi_{\mathrm{ref}}$, which are then propagated for a pre-specified number of outer iterations, $K$, through applying a chosen Markov kernel, $\mathsf{K}_{\theta_{k}}$. In this case we consider the \gls*{ULA} transition as a baseline, as in \citet{marion2025implicit}, 
\begin{equation} \label{eq:ula-kernel}
    X_{k+1}^{(i)}
    =
    X_k^{(i)}
    -
    \gamma_{k} \nabla_x V(X_k^{(i)}, \theta_{k})
    +
    \sqrt{2\gamma_{k}} \, \,\xi_k^{(i)},
    \qquad
    \xi_k^{(i)} \sim \mathcal{N}(0, I),
\end{equation}
with $\gamma_{k}$ always fixed in the cases of \gls*{ImpDiff} and \gls*{SOUL}. To highlight the flexibility of kernel choice within \gls*{SOSMC}, we consider not only \gls*{SOSMC}-\gls*{ULA}, but also \gls*{SOSMC}-\gls*{MALA}, \gls*{SOSMC}-\gls*{RWM}, and \gls*{SOSMC}-\gls*{HMC}, as discussed in Appendix \ref{app:kernel_choices}. Indeed, all methods considered utilise the resulting evolution of the persistent particle(s) to construct Monte Carlo estimates of the loss, whose gradient matches $\nabla \ell(\theta_{k})$, as outlined in Appendix \ref{appdx:langevin-processes-gradient-derivation}, however it is worth noting again that the key difference across the method lies in how these estimates are formed. In the case of both \gls*{ImpDiff} and \gls*{SOSMC}-\gls*{ULA}, a collection of particles $\{ X_{k}^{(i)}\}^{N}_{i=1}$ are evolved by a single \gls*{ULA} step per outer iteration, where, in the latter method, weights are utilised within the respective Monte Carlo estimates, as detailed in Appendix \ref{appdx:gradient-estimation-reward-tuning}. For the remaining \gls*{SOSMC} variants, the particles are instead propagated by a single step under the respective kernel and weighted according to the corresponding incremental weights, as derived in Appendix \ref{app:kernel_choices-invariant}. In contrast, for the case of \gls*{SOUL}, a single \gls*{ULA} chain of length $N$ is run, with the last $N_{\mathrm{eff}}= N - n_{\mathrm{burn}}$ samples utilised in the respective Monte Carlo estimates. Nevertheless, in all methods, the respective particles are treated as \emph{stop-gradient} quantities, that is we take care not to backpropagate through the Markov kernel $\mathsf{K}_{\theta_{k}}$, for which we refer to Appendix \ref{appdx:ebm-reward-tuning-loss-surrogate-motivation} for a detailed discussion.

\begin{figure}[t!]
  \centering
  \includegraphics[width=0.9\linewidth]{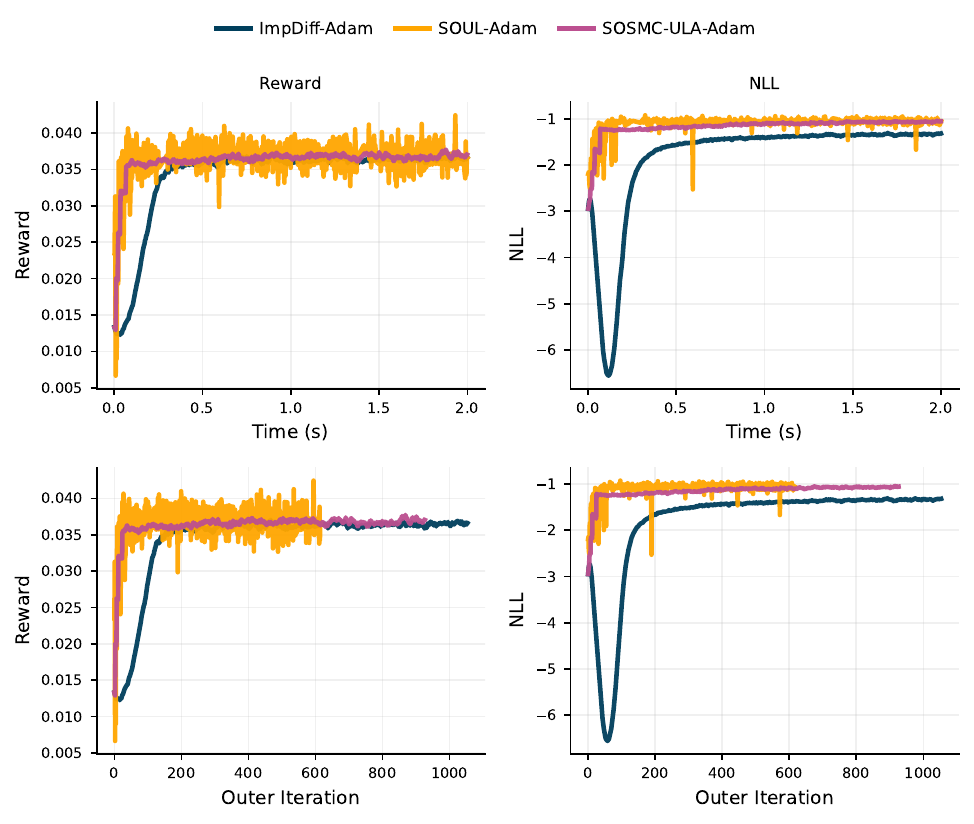}
  \caption{Wall-clock (top) and outer iteration (bottom) convergence of reward (left) and NLL (right), for a single run, for \gls*{ImpDiff}, \gls*{SOUL}, and \gls*{SOSMC}-\gls*{ULA}, under $V_{\mathrm{sparse}}$ \& $R_{\mathrm{hard}}$, with $\beta_{\mathrm{KL}} = 0$. A fixed wall-clock runtime of 2 seconds is specified in all cases.}
  \label{fig:langevin-processes-illustrative-example}
\end{figure}

\noindent \textbf{Implementation Details.}
A significant focus of our implementation is that of fairness and optimality from a compute perspective. To provide meaningful comparison, we ensure the number of computations, such as gradient evaluations, per outer step are minimised, whilst also measuring the wall-clock time of the methods in a device-synchronised manner. We implement the various methods using \texttt{JAX}, such that \emph{just-in-time compiling} is leveraged, as are batched particle operations. Notably, \texttt{JAX} utilises asynchronous dispatch, meaning naive timing may significantly underestimate true runtimes by failing to measure true device execution times, as the dispatch overhead is instead measured. To avoid this, we purposely utilise a blocking call so that the actual compute time is indeed measured. 

Regarding the wall-clock runtime to complete $K$ outer iterations, \gls*{ImpDiff} is consistently the fastest method, as highlighted for a specific example in Figure \ref{fig:langevin-processes-illustrative-example}, which is to be expected in light of \gls*{ImpDiff} requiring only a single (batched) evaluation of $\nabla_{x} V$ per outer iteration, along with minimal supplementary computation. Indeed, a potential drawback of the weights utilised within \gls*{SOSMC} is the additional computation it can incur. For example, in the case of \gls*{SOSMC}-\gls*{ULA}, two (batched) evaluations of $\nabla_{x} V$ per outer iteration are required, along with an increased amount of supplementary computation due to weight normalisation and \gls*{ESS} bookkepping. In contrast, \gls*{SOUL}'s inner \gls*{ULA} update operates on a single particle in low dimensions, and is thus computationally comparative in low $N$ regimes, particularly those in which the speed-ups obtained through parallelisation have not yet come into effect. Specifically, as $N$ increases, the sequential nature of \gls*{SOUL} leads to increased wall-clock runtimes, compared to \gls*{ImpDiff} and \gls*{SOSMC}. 

Notably, the $\nabla_{x} V$ computations are only part of the overall computation per outer iteration, since $\nabla_{\theta} V$ and supplementary computations are also required, and so the number of $\nabla_{x} V$ computations does not match in a straightforward manner with wall-clock runtimes, even if the process through which these are computed were the same. In light of this, for fair comparison, we choose to ensure instead that the number of times $\mathsf{K}_{\theta_{k}}$ is applied per outer iteration is equal across methods. Indeed, this leads us to utilise $N$ particles for \gls*{ImpDiff} and \gls*{SOSMC}, whilst running the chain for $N$ steps in total for \gls*{SOUL}.

In particular, we utilise the \gls*{SOSMC}-\gls*{ULA}, \gls*{SOSMC}-\gls*{MALA}, \gls*{SOSMC}-\gls*{RWM}, \gls*{SOSMC}-\gls*{HMC} variants of our method here, with $\gamma_k$ initialised to $0.1$ across all methods. For each method, various choices of $\mathrm{OPT}$ are considered, albeit with a common learning rate of $\eta = 0.1$. Regarding the reference batch, $\{ X_{\mathrm{ref}}^{(j)} \}_{j=1}^{N_{\mathrm{ref}}}$ is refreshed every outer iteration, with $N_{\mathrm{ref}} = 5,000$, whilst $N = 10,000$ unless otherwise stated. Furthermore, we remark that for \gls*{SOUL}, we have that $n_{\mathrm{burn}} = N / 2$, so that also $N_{\mathrm{eff}} = N / 2$, whilst specific to \gls*{SOSMC} methods, we choose an \gls*{ESS} threshold of $\tau = 0.9$ and an optional adaptive threshold of $\tau = 0.95$. In order to keep per-iteration computational budget comparable, we set the number of \gls*{HMC} leapfrog steps to $L=1$, and also note we set the mass to $m=1$.

\begin{figure}[b!]
  \centering
  \begin{subfigure}[b]{0.9\textwidth}
    \centering
    \includegraphics[width=\linewidth]{plots/langevin_processes/V_dual_R_smooth_beta_0_SGD.pdf}
    \caption{}
  \end{subfigure}
  \begin{subfigure}[b]{0.9\textwidth}
    \centering
    \includegraphics[width=\linewidth]{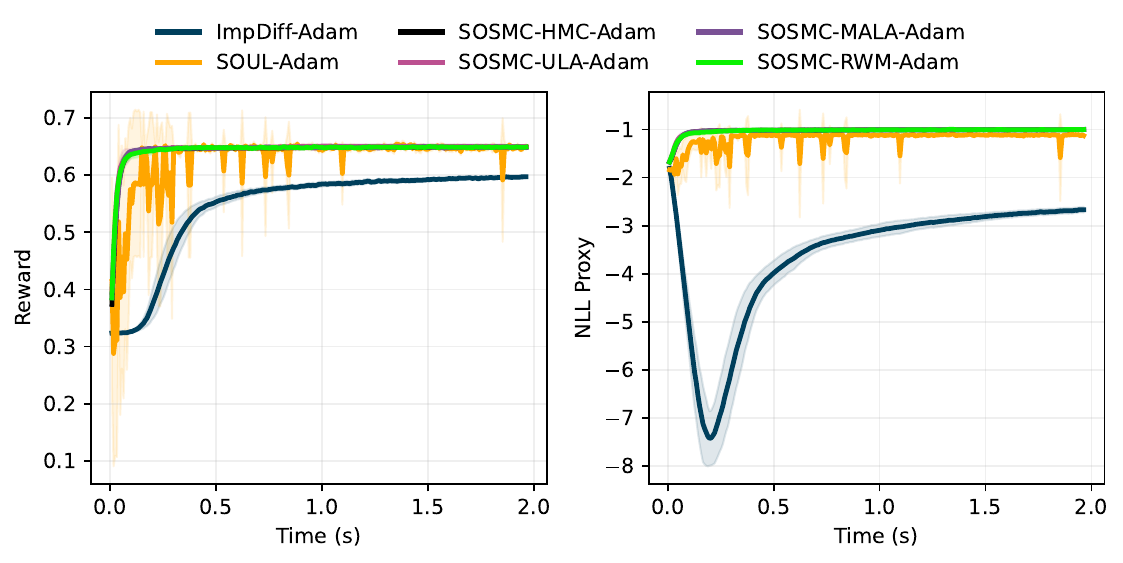}
    \caption{}
  \end{subfigure}
  \caption{Convergence of mean reward (left) and NLL (right), for \gls*{ImpDiff}, \gls*{SOUL}, and \gls*{SOSMC} for $V_{\mathrm{dual}}$ \& $R_{\mathrm{smooth}}$, $10$ runs. In \textit{(a)} $\mathrm{OPT}$ is $\mathrm{SGD}$, whilst in \textit{(b)} $\mathrm{OPT}$ is $\mathrm{Adam}$.}
  \label{fig:langevin_processes-V_dual_R_smooth_SGD_and_Adam}
\end{figure}

\begin{figure}[t!]
  \centering
  \includegraphics[width=0.9\linewidth]{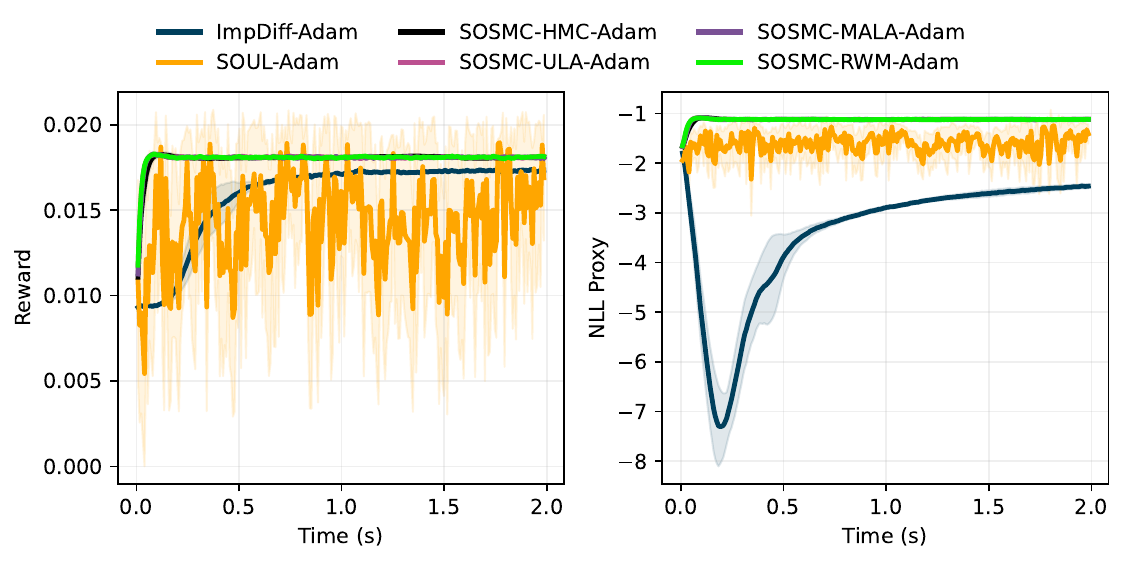}
  \caption{Convergence of mean reward (left) and NLL (right), for \gls*{ImpDiff}, \gls*{SOUL}, and \gls*{SOSMC} for $V_{\mathrm{dual}}$ \& $R_{\mathrm{hard}}$, $10$ runs.}
  \label{fig:langevin_processes-V_dual_R_hard_Adam}
\end{figure}

\begin{figure}[t!]
  \centering
  \includegraphics[width=0.9\linewidth]{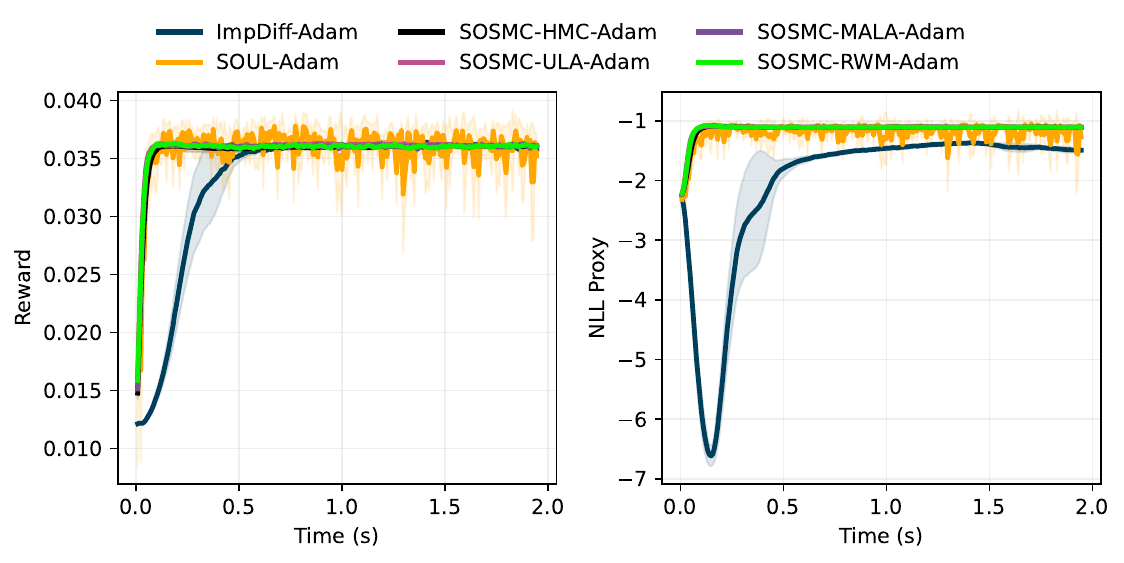}
  \caption{Convergence of mean reward (left) and NLL (right), for \gls*{ImpDiff}, \gls*{SOUL}, and \gls*{SOSMC} for $V_{\mathrm{sparse}}$ \& $R_{\mathrm{hard}}$, $10$ runs.}
  \label{fig:langevin_processes-V_sparse_R_hard_Adam}
\end{figure}

\begin{figure}[t!]
  \centering
  \includegraphics[width=0.9\linewidth]{plots/langevin_processes/V_tight_R_tight_beta_0.1.pdf}
  \caption{Convergence of mean reward (left) and NLL (right), for \gls*{ImpDiff}, \gls*{SOUL}, and \gls*{SOSMC} for $V_{\mathrm{tight}}$ \& $R_{\mathrm{tight}}$, $10$ runs.}
  \label{fig:langevin_processes-V_tight_R_tight_Adam}
\end{figure}

\noindent \textbf{Results.}
In accordance with \citet{marion2025implicit}, we report both the reward and trajectory of the negative log-likelihood (NLL), of the particle collection, at each outer iteration, averaged over the particles. In the interest of speed, and since convergence occurs relatively quickly, we limit wall-clock runtimes to $2.0$ seconds. Qualitatively, as corroborated by Figure \ref{fig:langevin-processes-illustrative-example}, we observe the \gls*{SOSMC} variants to consistently converge to a high reward quickly, whilst \gls*{ImpDiff} does so more slowly. Furthermore, in such scenarios, \gls*{ImpDiff} often exhibits a more significant transient phase, and in fact stabilises at a lower NLL value. Indeed, re-weighting and long inner trajectories, present in \gls*{SOSMC} variants and \gls*{SOUL} respectively, provide gradient estimates that result in such quick convergence, in contrast to the unweighted estimates of \gls*{ImpDiff}, that are driven by a single-step Markov kernel transition.

Additionally, we compare different choices of the first-order optimiser $\mathrm{OPT}$, for each method, across multiple random seeds, for which the setting of $V_{\mathrm{dual}}$ equipped with $R_{\mathrm{smooth}}$, as displayed in Figure \ref{fig:langevin_processes-V_dual_R_smooth_SGD_and_Adam}, illustrates general behaviour. In order to compute the mean and respective confidence intervals for each curve, for each experimental trial, the time axis is divided into a number of equally spaced points, for which the most recently logged reward and NLL value is associated with said time point, for the respective trial, which ensures we do not peek into the future. 

First, we observe \textsc{Adam} to consistently accelerate early stage progress across methods, particularly with respect to the initial increase in the reward. This does not, however, necessarily result in improved final performance, as observed in Figure \ref{fig:langevin_processes-V_dual_R_smooth_SGD_and_Adam}, for \gls*{ImpDiff}. We attribute this to the fact that potentially aggressive adaptive preconditioning of \textsc{Adam} may increase transient gradient noise present as a result of the bias induced by finite step sampling, which can in turn result in premature stabilisation due to conservative effective step sizes later on in the tuning procedure. On the other hand, \textsc{SGD} does not retain a memory of the (early) gradient magnitudes and so does not face the same issue of early stabilisation.

Although the impact of $\mathrm{OPT}$ is important, the dominant effect is that of the use of weighted \gls*{SOSMC} particle approximations. Indeed, for \gls*{SOSMC} variants, particles that reach high-reward regions, even if initially rare, disproportionately affect the $\theta$ updates, due to the respective re-weighting, whereas \gls*{ImpDiff} relies on an unweighted single-step persistent-particle estimate. On the other hand, \gls*{SOUL} benefits from averaging over a long inner trajectory, provided sufficient mixing. The reliance on a single trajectory, however, explains the relatively large variability between runs, particularly for \textsc{SGD}, compared to say the \gls*{SOSMC} variants. In order to highlight the importance of kernel choice, and to outline a setting in which $\mathrm{OPT}$ alone cannot overcome limitations of the underlying method, we consider the setting of $V_{\mathrm{tight}}$ equipped with $R_{tight}$, which note is displayed in Figure \ref{fig:langevin_processes-problem_setups}. In this case, the reward-supporting region is narrow and constrained to a single mode, meaning that successful optimisation is dependent upon both sufficient exploration and accurate exploitation of the particles that reach said region. As illustrated in Figure \ref{fig:langevin_processes-V_tight_R_tight_Adam}, \gls*{SOSMC} variants that possess a Metropolis-corrected kernel achieve high reward, whilst avoiding the instability observed in the case of \gls*{SOSMC}-\gls*{ULA}. Indeed, \gls*{ImpDiff} struggles to reliably relocate mass towards the reward-relevant region, whereas \gls*{SOUL} relies on a single persistent trajectory, with inter mode transitions being rare in this case.

We conclude by noting that, from a practical perspective, kernel choice should be informed by problem geometry and computational budget. Local kernels (\gls*{ULA}, \gls*{MALA}) work well when successive targets are relatively close while momentum-based kernels (\gls*{HMC}) facilitate exploration. Furthermore, \gls*{RWM}, \gls*{MALA}, and \gls*{HMC} kernels include a Metropolis step, which reduces bias and can be helpful when the target has isolated modes, as seen for $V_{\mathrm{tight}}$. We also note that, in low dimensions, the \gls*{RWM} variant is competitive despite not exploiting gradient information, however one would expect the respective performance to degrade in high dimensions.

\subsection{Reward training of Energy Based Models - 2D Datasets} \label{appdx:exp-ebm-2d-details}
Here, we consider synthetic two-dimensional datasets generated from standard benchmarking distributions, as visualised in Figure \ref{fig:ebm-datasets-and-pretrained-models}. We refer to these datasets, after pre-processing, as the \textit{two moons}, \textit{circles}, and \textit{blobs} datasets, denoted by $\mathcal{D}_{\text{2M}}$, $\mathcal{D}_{\text{C}}$, and $\mathcal{D}_{\text{B}}$ respectively. 

\paragraph{Pre-training details} \label{appdx:2d-datasets-pretraining-details} \mbox{} \\
First, we outline the details regarding the pre-training of the EBMs that are subsequently utilised within the reward tuning. Specifically, we utilise Persistent Contrastive Divergence (PCD) to approximate the gradient of the likelihood.

\noindent \textbf{Datasets.} For each distribution of interest, we generate $N=20,000$ samples, forming the synthetic dataset $\tilde{\mathcal{D}} = \left\{ \tilde{x}_{i} \right\}_{i=1}^{N}$, where $ \tilde{x}_{i} \in \mathbb{R}^{2}$. Each dataset is then pre-processed via the transformation $x_{i} = s_{scale} \cdot (\tilde{x}_{i} - \mu_{\mathcal{D}}) / \sigma_{\mathcal{D}}$, where $\mu_{\mathcal{D}}$ and $\sigma_{\mathcal{D}}$ are the empirical mean and standard deviations respectively, whilst $s_{scale} = 2.2$ is a global scaling factor that ensures the support, across datasets, is appropriately bounded in a desired region of $\mathbb{R}^{2}$. The resulting datasets, $\mathcal{D} \in \left\{\mathcal{D}_{\text{2M}}, \mathcal{D}_{\text{C}}, \mathcal{D}_{\text{B}} \right\}$, are displayed in Figure \ref{fig:ebm-datasets-and-pretrained-models}.

\noindent \textbf{Energy Model.} We parameterise the energy function, $E_{\theta}(x) : \mathbb{R}^{2} \rightarrow \mathbb{R}$, using a Multi-Layer Perceptron (MLP). Specifically, our MLP consists of $h=4$ hidden layers, with $D_{h} = 128$ hidden units each and SiLU activation functions \citep{ramachandran2017searchingactivationfunctions} after each hidden layer, defined by $\sigma(z) = z \ / \ (1 + \exp(-z))$, to help ensure $\nabla_x E_\theta(x)$ is differentiable everywhere, for the Langevin dynamics. Regarding initialisation, weights are drawn from $\mathcal{N}(0, \sigma_{\mathcal{N}}^{2})$, with $\sigma_{\mathcal{N}} = 0.02$, and biases are initialised to zero.

\noindent \textbf{Training Objective.} 
We train the energy model by minimising the negative log-likelihood (NLL) of the data, where the gradient of the NLL is approximated via the contrastive divergence (CD) objective \citep{hinton2002training}. 

More precisely, we have:
\begin{equation*}
    \mathcal{L}_{CD}(\theta) = \mathbb{E}_{x^+ \sim \mathcal{D}}[E_\theta(x^{+})] - \mathbb{E}_{x^- \sim q_\theta}[E_\theta(x^{-})],
\end{equation*}
where $x^{+}$ denotes \textit{positive} samples from the data distribution $\mathcal{D}$, whilst $x^-$ denotes \textit{negative} samples drawn from the model distribution $q_\theta(x) \propto \exp(-E_\theta(x))$.

To ensure the learned energy surface remains well-behaved and to prevent pathological gradients, we augment this standard CD loss with energy-magnitude and gradient-penalty regularisation terms, as inspired by \citet{du2020implicitgenerationgeneralizationenergybased}. The regularisation terms are defined as: 
\begin{align*} 
    \mathcal{R}_{E} &= \mathbb{E}_{x^+}[E_\theta(x^+)^2] + \mathbb{E}_{x^-}[E_\theta(x^-)^2], \\ 
    \mathcal{R}_{GP} &= \mathbb{E}_{x^+}[(\|\nabla_x E_\theta(x^+)\|_2 - 1)^2],
\end{align*}
where the latter regularisation term serves to constrain the Lipschitz constant of the energy function, preventing pathological gradients, and thus stabilising the dynamics during the sampling phase \citep{du2020implicitgenerationgeneralizationenergybased}.

The total training loss $L(\theta)$ is thus given by:
\begin{equation} 
    \mathcal{L}(\theta) = \underbrace{\mathbb{E}_{x^+ \sim \mathcal{D}}[E_\theta(x^{+})] - \mathbb{E}_{x^- \sim q_\theta}[E_\theta(x^{-})]}_{\mathcal{L}_{CD}(\theta)} \ + \ \lambda_{E} \mathcal{R}_{E} + \lambda_{GP} \mathcal{R}_{GP},
\end{equation} 
where we set $\lambda_{E} = 10^{-3}$ and $\lambda_{GP} = 0.2$.

\begin{figure}[t!]
  \centering
  \includegraphics[width=\linewidth]{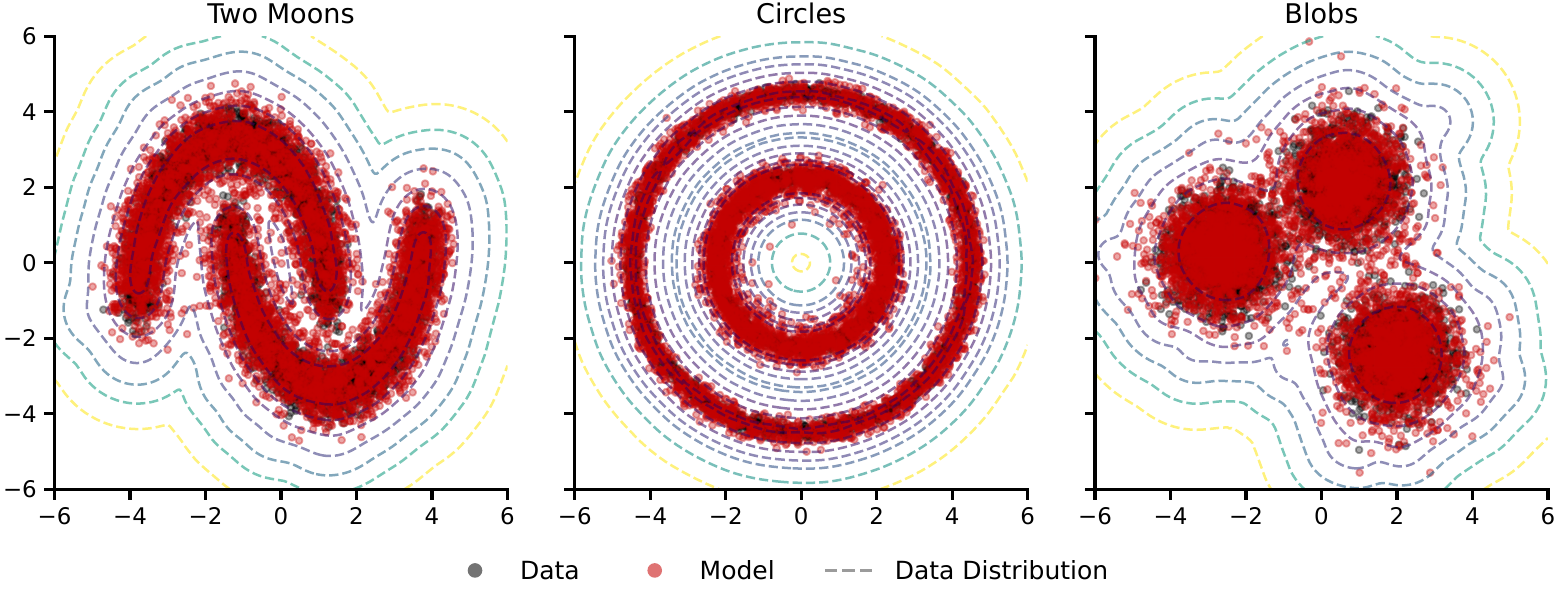}
  \caption{\textbf{2D Datasets and EBMs:} The dataset samples (black), with corresponding contours, where darker lines indicating higher probability, and samples generated from the corresponding pretrained EBM (red), obtained after running 300 Langevin steps from random noise, for $\mathcal{D}_{\text{2M}}$, $\mathcal{D}_{\text{C}}$, and $\mathcal{D}_{\text{B}}$ (from left to right).}
  \label{fig:ebm-datasets-and-pretrained-models}
\end{figure}

\noindent\textbf{Sampling.} To approximate the negative phase of the gradient, $\nabla_{\theta} \mathbb{E}_{x^{-} \sim q_{\theta}} [ E_{\theta}(x^{-}) ] $, we require approximate samples from the current model distribution, $q_{\theta}$. To this end, we utilise \gls*{ULA}, where the update rule for particle state $X_{k}$ is
\begin{equation} \label{eq:pcd-transition-kernel}
X_{k+1} = \mathrm{Clamp}\left( X_k - \gamma \nabla_x E_\theta(X_k) + \sqrt{2 \gamma} \ \xi_k, \quad \mathcal{C}_{min}, \ \mathcal{C}_{max} \right),
\end{equation}
for $\gamma = 5 \times 10^{-3}$, $\xi_{k} \sim \mathcal{N}(0, I)$ where for $a,b \in \mathbb{R}$ the $\mathrm{Clamp}$ function takes vectors $\mathbb{R}^{d_x} \rightarrow \mathbb{R}^{d_{x}}$ and is defined component wise as 
\[[\mathrm{Clamp}(X, a, b)]_i = \begin{cases}
a \text{ if } [X]_i< a,\\
[X]_i \text{ if } a \leq [X]_i \leq b\\
b \text{ if } [X]_i > b
\end{cases},\]
for $i = 1, \dots, d_x$.
With the choice of $\mathcal{C}_{min} = -6$ and $\mathcal{C}_{max } = 6$  the clamping restricts the particle evolution to the hypercube $\Omega = [\mathcal{C}_{min}, \mathcal{C}_{max}]^{2} = [-6, 6]^{2}$, and this geometric constraint is utilised to prevent the divergence of the particle evolution to arbitrarily low energy pockets that may exist far from the data manifold, particularly due to the unbounded nature of the potential. 

\noindent\textbf{Persistent Contrastive Divergence (PCD).}
In order to reduce the computational cost associated with long mixing times often required to reach the stationary model distribution, $q_{\theta}$, we employ PCD augmented with a \textit{replay buffer}. In particular, we maintain a persistent collection, $\mathcal{B} = \left\{X^{(m)}\right\}^{M}_{m=1}$, of $M=20,000$ particles, that stores the final states of previous Markov chains. This buffer acts as a non-parametric approximation of $q_{\theta}$, so that, at each training step $t$, we construct a batch, of size $B$, of initial particles by sampling uniformly from said buffer, at indices $\left\{ i_{1}, \dots i_{B} \right\}$, resulting in significantly fewer steps under the transition kernel described above. Despite reducing the \textit{burn-in} cost associated with long MCMC chains, persistent replay buffers risk \textit{mode collapse}, where chains effectively become trapped in deep local minima, thus failing to effectively explore the support of the target distribution \citep{tieleman2008training}. To address this, and ensure ergodicity, we utilise the reinjection heuristic outlined in \citet{du2020implicitgenerationgeneralizationenergybased}, in which a fraction $\rho = 0.05$ of initialised particles are rejected and replaced with fresh uniform noise within the hypercube $\Omega$. The resulting initialised particles serve as the initial states, $X^{(i_{b})}_{0}$, where $b \in [B]$, which are then evolved, in parallel, for K=80 steps, using the transition kernel outlined in \eqref{eq:pcd-transition-kernel}. Indeed, $X^{(i_{b})}_{K}$ are then saved to the buffer at the respective index, whilst simultaneously used to compute the negative phase of the gradient.

\noindent\textbf{Implementation Details.}
For each $\mathcal{D} \in \left\{\mathcal{D}_{\text{2M}}, \mathcal{D}_{\text{C}}, \mathcal{D}_{\text{B}} \right\}$ we leverage a number of further common components. First, for parameter optimisation, we  utilise \textsc{Adam} \citep{kingma2017adammethodstochasticoptimization}, with a learning rate of $\eta = 2 \times 10^{-4}$ and $\beta_{1} = 0.9$, and we further note that parameter gradients are clipped such that their resulting global norm is 10.0. Notably, gradients w.r.t. model parameters $\theta$ are detached during the sampling phase, that is the $X^{(i_{b})}_{K}$ are considered as fixed negative samples for the loss computation, so that backpropogation does not occur through the sampling chain. Although a batch size of $B=512$ is commonly utilised, the energy model is trained for $500$ epochs in the cases of $\mathcal{D}_{\text{2M}}$ and $\mathcal{D}_{\text{C}}$, whilst for $\mathcal{D}_{\text{B}}$ we only train for $200$ epochs. The resulting energy model is subsequently frozen and denoted as $E_{\theta_{0}}$, in light of the fact that we subsequently utilise it for reward tuning, which takes place over a number of iterations.

\paragraph{Reward tuning details} \label{para:2d-datasets-reward-tuning-details} \mbox{} \\

For a pre-trained energy model $E_{\theta_{0}}$, we perform reward tuning by optimising the reverse-KL regularised objective,
\begin{equation} \label{eq:J-obj-main}
\ell(\theta)
:=
- \mathbb{E}_{x \sim \pi_\theta}[R(x)]
+
\beta_{\mathrm{KL}} \,\mathrm{KL}(\pi_\theta \, \Vert \, \pi_0),
\qquad \beta_{\mathrm{KL}} > 0,
\end{equation}
where the reference distribution $\pi_{0} (x) \propto \exp\left( -E_{\theta_{0}}(x) \right)$ is fixed throughout the reward tuning. We instantiate a trainable energy, $E_{\theta}$, with the same architecture outlined in Appendix \ref{appdx:2d-datasets-pretraining-details}, and with weights equal to that of the pre-trained weights $\theta_{0}$, whilst also maintaining a frozen reference copy, $E_{0} \equiv E_{\theta_0}$, with all parameters detached. Indeed, the hyperparameter $\beta_{\mathrm{KL}} > 0$ explicitly governs the relative importance between reward maximisation and divergence from $\pi_{0}$, as illustrated in Figure \ref{fig:ebm_fresh_samples_across_beta} and Figure \ref{fig:ebm-objective-contours-across-beta}.

\noindent\textbf{Reward Functions.}
Here, we consider indicator rewards, for half-planes, of the form
\begin{equation} \label{eq:indicator-rewards}
    R_{\text{left}} = \mathbf{1}_{\{ x_{1} < 0 \}},
    \quad 
    R_{\text{right}} = \mathbf{1}_{\{ x_{1} > 0 \}},
    \quad
    R_{\text{lower}} = \mathbf{1}_{\{ x_{2} < 0 \}}, 
    \quad 
    R_{\text{upper}} = \mathbf{1}_{\{ x_{2} > 0 \}},
\end{equation}
since not only do these rewards highlight that $R(x)$ may be non-differentiable, but these rewards also admit a closed form for the unique minimiser of \eqref{eq:J-obj-main}. A standard variational argument shows that this objective is uniquely minimised by an exponential tilting of the reference distribution \citep{wainwright_variational},
\begin{equation} \label{eq:optimal-ptheta-general}
    \pi^{\star}(x)
    =
    \frac{1}{Z}\, \pi_{0}(x)\,
    \exp\!\left(\frac{1}{\beta_{\mathrm{KL}}} R(x)\right),
\end{equation}
where $Z := \mathbb{E}_{\pi_0}\!\left[\exp(R(x)/\beta_{\mathrm{KL}})\right]$ is the partition function. In cases that $R(x)=\mathbf{1}_{\{x\in H\}}$, where $H \subset \mathbb{R}^{2}$ denotes a measurable half-space corresponding to one of the regions in \eqref{eq:indicator-rewards}, then \eqref{eq:optimal-ptheta-general} can be simply expressed as a piecewise reweighting of $\pi_{0}$,
\begin{equation} \label{eq:optimal-ptheta-indicator}
    \pi^{\star}(x)
=
\begin{cases}
\dfrac{e^{1/\beta_{\mathrm{KL}}}}{Z}\, \pi_{0}(x), & x \in H, \\[6pt]
\dfrac{1}{Z}\, \pi_{0}(x), & x \notin H,
\end{cases}
\end{equation}
where $Z = e^{1/\beta_{\mathrm{KL}}}\pi_{0}(H) + \left(1 - \pi_{0}(H) \right)$. Equivalently, the induced mass on the reward half-plane, that is, the expected reward under $\pi^{\star}$, is
\begin{equation} \label{eq:optimal-reward-indicator}
\pi^{\star}(H)
=
\frac{e^{1/\beta_{\mathrm{KL}}} \, \pi_{0}(H)}{e^{1/\beta_{\mathrm{KL}}}\pi_{0}(H) + \left(1 - \pi_{0}(H) \right)},
\end{equation}
which monotonically increases as $\beta_{\mathrm{KL}}$ decreases. We note, provided $\pi_{0}(H) > 0$, that $\pi^\star$ approaches $\pi_0$ as $\beta_{\mathrm{KL}}\to\infty$, whereas $\pi^\star$ concentrates on $H$ as $\beta_{\mathrm{KL}}\to 0$.

\begin{figure}[b!]
  \centering
  \includegraphics[width=0.95\linewidth]{plots/ebm_pretraining/fresh_samples_across_beta_plasma.pdf}
  \caption{Terminal density snapshots of $\pi_{\theta_{K}}$, using identical initialisation and shared noise, across increasing $\beta_{\mathrm{KL}}$, for $R_{\mathrm{lower}}$.}
  \label{fig:ebm_fresh_samples_across_beta}
\end{figure}

\noindent\textbf{Tuning Procedure.}
Reward tuning occurs through maintaining a collection of particles $\{ X_{k}^{(i)} \}_{i=1}^{N}$, intended to approximate the evolving model distribution $\pi_{\theta_{k}}$. Specifically, we perform an outer-loop optimisation of $\theta$, utilising these persistent particles to construct Monte Carlo estimates of the surrogate loss, as outlined in Appendix \ref{appdx:ebm-reward-tuning-loss-surrogate}, whose gradient matches $\nabla_{\theta} \ell(\theta_k)$, to update the parameters $\theta_{k}$ by a single gradient descent step. At each outer iteration, the particles are, in general, propagated by applying a Markov kernel, $\mathsf{K}_{\theta_{k}}$, targeting $\pi_{\theta_{k}}$, for a fixed number of steps per outer iteration, denoted $K_{\text{inner}}$. In this case we choose the \gls*{ULA} Gaussian transition,
\begin{equation} \label{eq:ula-gaussian-kernel-ebm}
    X_{k+1}^{(i)}
    =
    X_k^{(i)}
    -
    \gamma_{k} \nabla_x E_{\theta_k}(X_k^{(i)})
    +
    \sqrt{2\gamma_{k}} \, \sigma_{\text{noise}} \,\xi_k^{(i)},
    \qquad
    \xi_k^{(i)} \sim \mathcal{N}(0, I),
\end{equation}
with $\gamma_{k}$ fixed in the case of \gls*{ImpDiff} and $\sigma_{\text{noise}} = 1.0$. Since we consider both \gls*{ImpDiff} and \gls*{SOSMC}, we have $K_{\text{inner}} = 1$ here, where we note the algorithm-specific details of \gls*{ImpDiff} and \gls*{SOSMC} governing particle evolution are outlined in Appendix \ref{appdx:gradient-estimation-reward-tuning} and Appendix \ref{appdx:algorithms}. 

During reward tuning, we take care to distinguish between two notions of the reward. The first notion of reward refers to the empirical mean computed on the particles utilised for gradient estimation, $\widehat R_{\mathrm{particle}}(k) = \frac{1}{N}\sum_{i=1}^N R(X_k^{(i)})$, which provides a low-variance statistic aligned with the particles that drive parameter updates. Crucially, this should not be interpreted as an unbiased estimate of $\mathbb{E}_{\pi_{\theta_k}}[R(X)]$, as particles are advanced only a single step between successive updates of $\theta_{k}$ and so their \emph{unweighted} empirical distribution typically corresponds to a non-equilibrium distribution $q_k$ that may lag the instantaneous target $\pi_{\theta_k}$, particularly when $\theta_k$ changes quickly. Notably, for \gls*{SOSMC}, expectations under $\pi_{\theta_{k}}$ are instead approximated by a \emph{weighted} empirical measure, as described in Appendix \ref{appdx:gradient-estimation-reward-tuning}, which for the \gls*{ULA} variant provides an asymptotically consistent estimator of expectations, of bounded test functions, under $\pi_{\theta_k}$ \citep{carbone2024efficient, cuin2025learning}. As a result, estimates of the expected reward under the current model exhibit reduced bias relative to unweighted particle averages, for effectively the same propagation budget, as illustrated in Figure \ref{fig:ebm-illustrative-example-rewards-vs-iteration}. On the other hand, in the case of \gls*{ImpDiff}, it is necessary to compute a \emph{fresh reward} to accurately reflect the expected reward under the current model. Specifically, at pre-determined evaluation checkpoints we run a number of independent Langevin chains, targeting the \emph{fixed} energy $E_{\theta_k}$, for a significantly greater number of steps than used in reward tuning, starting from a diffuse initialisation, such as uniform noise on our hypercube. After running and discarding an initial burn-in of $B_{\mathrm{eval}}$ steps, the expected reward is estimated by time-averaging along each chain, that is, with $M_{\mathrm{eval}}$ evaluation chains and $T_{\mathrm{eval}}$ post-burn-in steps, we report
\begin{equation}
    \widehat R_{\mathrm{fresh}}(k)
    =
    \frac{1}{M_{\mathrm{eval}}\,T_{\mathrm{eval}}}
    \sum_{j=1}^{M_{\mathrm{eval}}}\sum_{t=1}^{T_{\mathrm{eval}}}
    R\!\left(\widetilde X_{k,t}^{(j)}\right),
\end{equation}
where $\widetilde X_{k,t}^{(j)}$ are generated by the aforementioned chains. To be clear, $\widehat R_{\mathrm{fresh}}(k)$ does not enter the optimisation procedure, but rather ensures that we report the performance of $E_{\theta_k}$, rather than transient properties of the persistent particles. Indeed, the effectiveness of the \emph{weighted particle reward} is highlighted in Figure \ref{fig:ebm-illustrative-example-rewards-vs-iteration}.

Notably, it is possible to estimate $\mathrm{KL}( p_{\theta} \Vert p_{0})$ directly using numerical quadrature on the bounded hypercube $\Omega \in \mathbb{R}^{2}$ in a relatively efficient manner. In particular, we let $\{ x^{(g)} \}_{g=1}^{G}$ denote a uniform grid of $G$ points, with associated cell area $\Delta^{2}$, so that
\begin{align*}
    \log Z_{\theta} 
    &\approx 
    \log \sum_{g=1}^{G} \Delta^{2} \exp \left( -E_{\theta}(x_{g}) \right),
    \quad
    \log Z_{0} 
    \approx 
    \log \sum_{g=1}^{G} \Delta^{2} \exp \left( -E_{0}(x_{g}) \right), \\   
    \Rightarrow
    \mathrm{KL}
    &\approx
    \sum_{g=1}^{G} p_{\theta}(x_{g}) \left( \log p_{\theta}(x_{g}) - \log p_{0}(x_{g}) \right) \Delta^{2},
\end{align*}
where $\log p_{\theta}(x_{g}) \approx -E_{\theta}(x_{g}) - \log Z_{\theta}$, and similarly $\log p_{0}(x_{g}) \approx -E_{0}(x_{g}) - \log Z_{0}$.

\begin{figure}[b!]
  \centering
  \includegraphics[width=\linewidth]{plots/ebm_pretraining/illustrative_example/langevin_sampler_efficiency_plasma.pdf}
  \caption{Density snapshots along the sampling evolution of $\pi_{\theta_{K}}$ in the \emph{illustrated example}, for both \gls*{ImpDiff} (top) and \gls*{SOSMC}-\gls*{ULA} (bottom), with identical initialisation and shared noise.}
  \label{fig:ebm-illustrative-example-energy-landscape-across-Teval}
\end{figure}

Although explicitly detailed in Appendix \ref{appdx:ebm-reward-tuning-loss-surrogate}, we again highlight that throughout reward tuning, all particles and weights that form the (surrogate) losses are treated as stop-gradient quantities. To be clear, we take care not to backpropagate through the Markov kernel $\mathsf{K}_{\theta_{k}}$ and the dependence of both the reward terms in \eqref{eq:loss-surrogate-reward}, and the energy difference terms in \eqref{eq:loss-surrogate-kl}, on $\theta$. Indeed, both \gls*{ImpDiff} and \gls*{SOSMC} ensure that optimisation does not depend on a path-wise gradient of the finite-step sampling distribution, that would notably depend on the sampler hyperparameters, in this manner.

\begin{figure}[t!]
  \centering
  \includegraphics[width=0.8\linewidth]{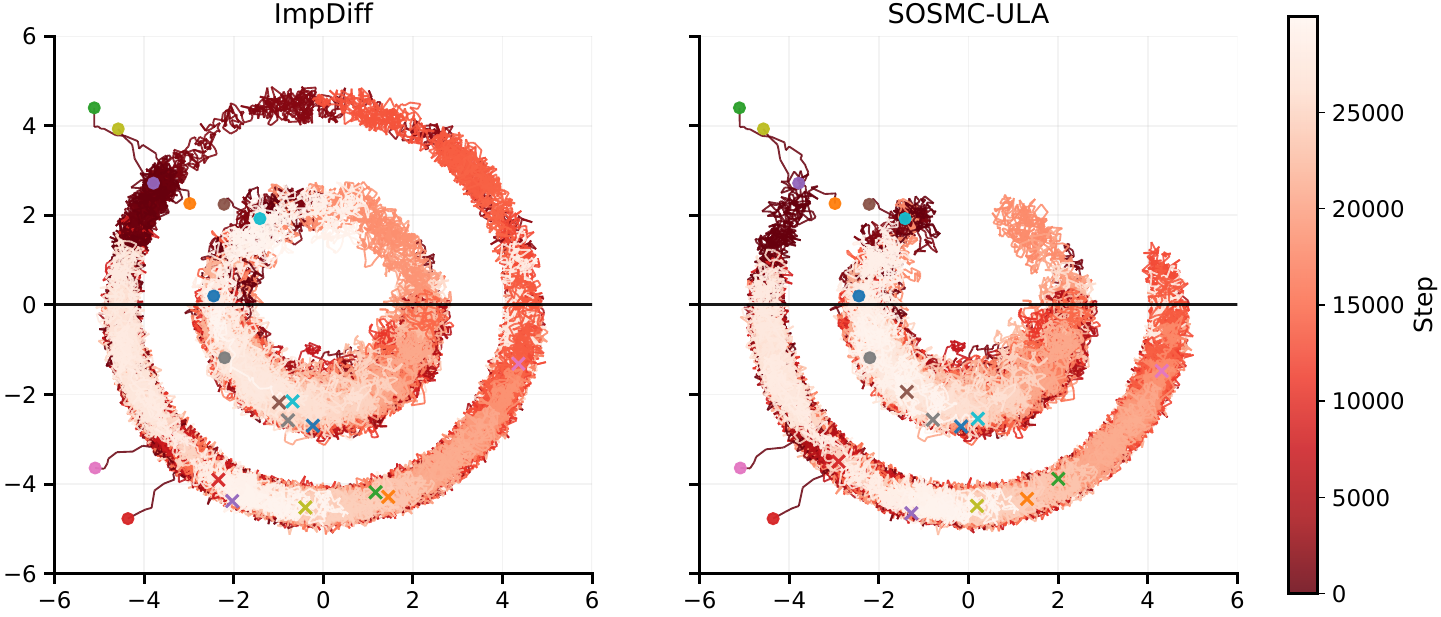}
  \caption{Representative sampling trajectories for $\pi_{\theta_{K}}$ in the \emph{illustrated example}, for both \gls*{ImpDiff} (left) and \gls*{SOSMC}-\gls*{ULA} (right). Identical initialisation and shared noise are utilised for each trajectory across the methods.}
  \label{fig:ebm-illustrative-example-langevin-trajectories}
\end{figure}

\noindent\textbf{Implementation Details.}
As alluded to, we specifically consider \gls*{ImpDiff} and \gls*{SOSMC}-\gls*{ULA}, across all experimental setups. To begin, we outline implementation details common to both methods, and then subsequently detail method specific details. Any setup specific deviations are outlined in the relevant section.

For the sake of clarity, we first emphasise again that $K_{\text{inner}} = 1$ in both cases. Unless stated otherwise, reward tuning is conducted using $N=10,000$ persistent particles over $K =1,000$ outer iterations. Parameter updates are carried out using Adam, with a learning rate of $\eta = 2 \times 10^{-4}$ and $\beta_{1} = 0.9$. Notably, the Langevin step size is initialised to $\gamma_{k} = 5 \times 10^{-3}$, coinciding with the sampling dynamics utilised in pre-training. In both cases, to evaluate the expected reward under the current model, $\widehat R_{\mathrm{fresh}}(k)$ is periodically computed every $f_{\mathrm{eval}}$ iterations, for which $M_{\mathrm{eval}} = 1,000$ independent chains are utilised, each with $B_{\mathrm{eval}} = 15,000$ burn-in steps followed by $T_{\mathrm{eval}} = 5,000$ post burn-in steps. In fact, these values of $B_{\mathrm{eval}}$ and $T_{\mathrm{eval}}$ were chosen to ensure reliable mixing for \gls*{ImpDiff} specifically, since this method exhibits significantly longer mixing times than \gls*{SOSMC}-\gls*{ULA} as evident in Figure \ref{fig:ebm-illustrative-example-energy-landscape-across-Teval}. Indeed, under a more restrictive computational budget, estimating the expected reward accurately for \gls*{ImpDiff} thus becomes challenging.

Notably, in both \gls*{ImpDiff} and \gls*{SOSMC}-\gls*{ULA} methods, particles are propagated using the unconstrained \gls*{ULA} Gaussian transition outlined in \eqref{eq:ula-gaussian-kernel-ebm}, with a fixed noise scale $\sigma_{\mathrm{noise}} = 1.0$. For \gls*{SOSMC}-\gls*{ULA}, in order to preserve the validity of the closed-form kernel ratio underlying the weight correction, no clamping is permitted, and we further note such clamping is additionally absent from our implementation of \gls*{ImpDiff}. Specific to \gls*{SOSMC}-\gls*{ULA} is the \gls*{ESS} threshold, set to $\tau = 0.9$, under which resampling occurs, w.r.t. normalised weights. In fact, to combat \gls*{ESS} instability, whilst maintaining exploration, we choose to adapt $\gamma_k$ based on the observed ESS, as described in Remark \ref{rem:remark4}, using an adaptive threshold of $\tau_{\gamma} = 0.95$. In light of requiring particularly long \gls*{MCMC} chains to accurately estimate $\mathbb{E}_{p_{\theta_{k}}} [R(X)]$ in the case of \gls*{ImpDiff}, we focus on evaluating relative performance in terms of iteration steps, rather than wall-clock runtimes, as was investigated in Appendix \ref{appdx:exp-langevin-process-details}.

\noindent\textbf{Results.}
To begin, we first provide an illustrative example, highlighting key aspects of the reward tuning process for a single representative experimental trial. To this end, we specifically consider the \textit{two moons} dataset $\mathcal{D}_{2M}$, equipped with the indicator reward $R_{\mathrm{lower}}(x)=\mathbf{1}_{\{x_2 < 0\}}$, and regularisation hyperparameter $\beta_{\mathrm{KL}} = 0.25$, to facilitate a setting that we expect to induce a meaningful shift of the model distribution during reward tuning.

Both $\widehat R_{\mathrm{particle}}$ and $\widehat R_{\mathrm{fresh}}$ are reported in Figure \ref{fig:ebm-illustrative-example-rewards-vs-iteration}, with the latter at a frequency of $f_{eval} = 100$. Under \gls*{ImpDiff}, the (unweighted) particle reward increases slowly and is crucially a poor proxy for $\widehat R_{\mathrm{fresh}}$, at least compared to under \gls*{SOSMC}-\gls*{ULA}, as expected from our discussion above. Despite $\widehat R_{\mathrm{fresh}}$ stabilising at a greater value for \gls*{SOSMC}-\gls*{ULA}, which is notably closer to $p^{*}(H)$, we remark that this does not guarantee a lower value of $\ell$ is achieved. In light of this, we plot the optimisation trajectories in the $(\widehat R_{\mathrm{fresh}}, \widehat{\mathrm{KL}})$ plane, overlayed with objective contours, which provide geometric references against which the evolution of each method can be compared against. Notably, we observe, in Figure \ref{fig:ebm-illustrative-example-objective-contours}, \gls*{SOSMC}-\gls*{ULA} to result in a smoother trajectory that not only achieves a lower value of $\ell$, but does so in a lower number of iterations. Indeed the trade-off between the reward and deviation from the original distribution, as determined by $\beta_{\mathrm{KL}}$, is also observed here. 

\begin{figure}[t!]
  \centering
  \includegraphics[width=0.6\linewidth]{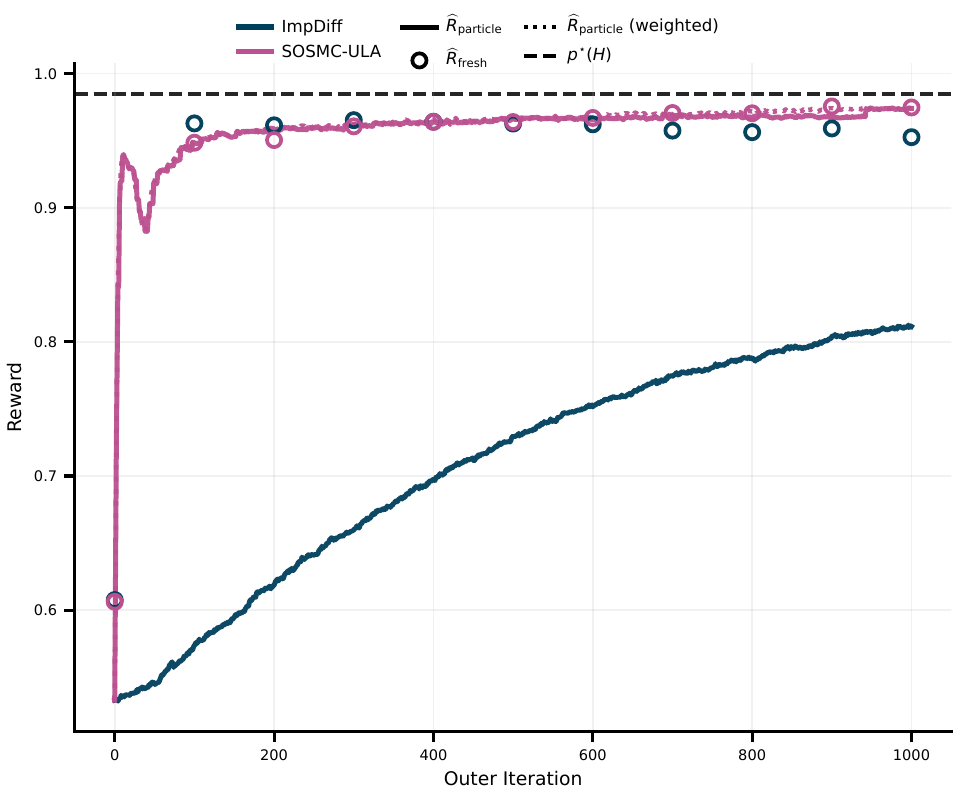}
  \caption{Trajectories of $\widehat{R}_{\mathrm{particle}}$ and $\widehat{R}_{\mathrm{fresh}}$ for the \emph{illustrated example}. The (weighted) $\widehat{R}_{\mathrm{particle}}$ serves as a far better proxy for $\widehat{R}_{\mathrm{fresh}}$ in the case of \gls*{SOSMC}-\gls*{ULA}, whilst $\widehat{R}_{\mathrm{fresh}}$ also converges closer to $\pi^{*}(H)$ for this method.}
  \label{fig:ebm-illustrative-example-rewards-vs-iteration}
\end{figure}

\begin{figure}[t!]
  \centering
  \includegraphics[width=0.6\linewidth]{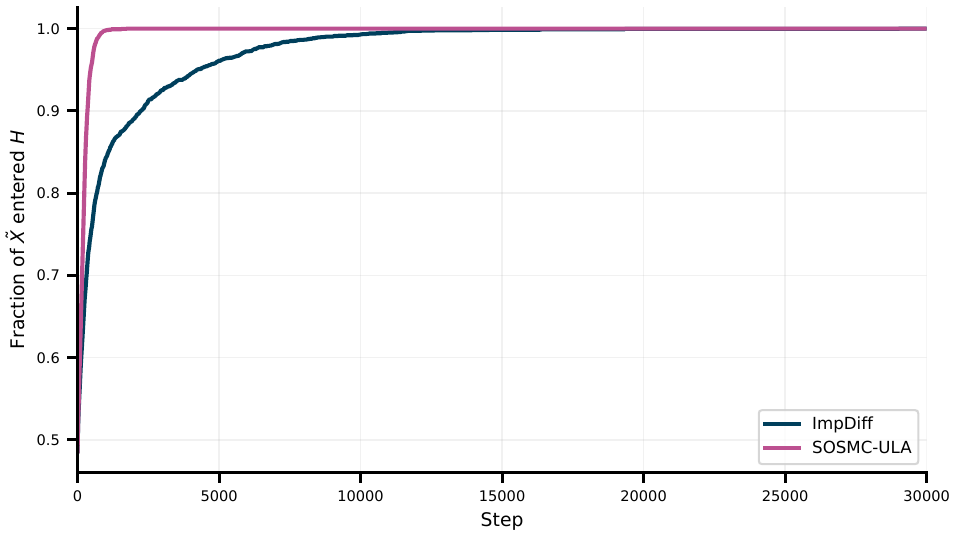}
  \caption{Proportion of sampling trajectory, for $\pi_{\theta_{K}}$, in the reward region as in the \emph{illustrated example}, for both \gls*{ImpDiff} and \gls*{SOSMC}-\gls*{ULA}. Identical initialisation and shared noise are utilised for each trajectory across the methods.}
  \label{fig:ebm-illustrative-example-enter-reward-cdf}
\end{figure}

As alluded to, the resulting energy landscape, of each tuned model, dictates the speed at which accurate estimates of $\mathbb{E}_{p_{\theta_{k}}} [R(X)]$, and in turn $\ell(\theta_{k})$, can be obtained, since this landscape explicitly affects the dynamics of particles we choose to evolve, as demonstrated in Figure \ref{fig:ebm-illustrative-example-langevin-trajectories}. Indeed, the required mixing time for \gls*{ImpDiff} is empirically observed to be significantly greater than that of \gls*{SOSMC}-\gls*{ULA}, as further highlighted in both Figure \ref{fig:ebm-illustrative-example-energy-landscape-across-Teval} and \ref{fig:ebm-illustrative-example-enter-reward-cdf}. We conclude this illustrative example by remarking that more stable tuning dynamics are observed in the case of \gls*{SOSMC}-\gls*{ULA}, as indicated by logged tuning gradient norms, whilst the samples generated from noise, for the final tuned models, are visualised in Figure \ref{fig:ebm_fresh_samples_across_beta}, under the $\beta_{\mathrm{KL}} = 0.25$ panel. 

The relative performance of \gls*{ImpDiff} and \gls*{SOSMC}-\gls*{ULA}, across a range of regularisation strengths $\beta_{\mathrm{KL}}$, is illustrated in Figure \ref{fig:ebm-objective-contours-across-beta}, where again the evolution is presented in the $(\widehat R_{\mathrm{fresh}}, \widehat{\mathrm{KL}})$ plane, as was the case in Figure \ref{fig:ebm-illustrative-example-objective-contours}. The best-performing \gls*{ImpDiff} iterate is circled and indicated in red, as is the corresponding objective contour. In particular, for each $\beta_{\mathrm{KL}}$ considered, \gls*{SOSMC}-\gls*{ULA} reaches a better, or at least comparable, contour, typically in a lower number of iterations. Indeed, by correcting for non-equilibrium bias through importance weighting, \gls*{SOSMC}-\gls*{ULA} results in improved (surrogate) gradient estimates, which not only result in models corresponding to improved objective values, in fewer iterations, but also demonstrates reduced reliance on potentially costly fresh evaluations to accurately understand performance, as demonstrated in Figure \ref{fig:ebm-objective-contours-across-beta-cheap}.

\begin{figure}[t!]
  \centering
  \includegraphics[width=0.7\linewidth]{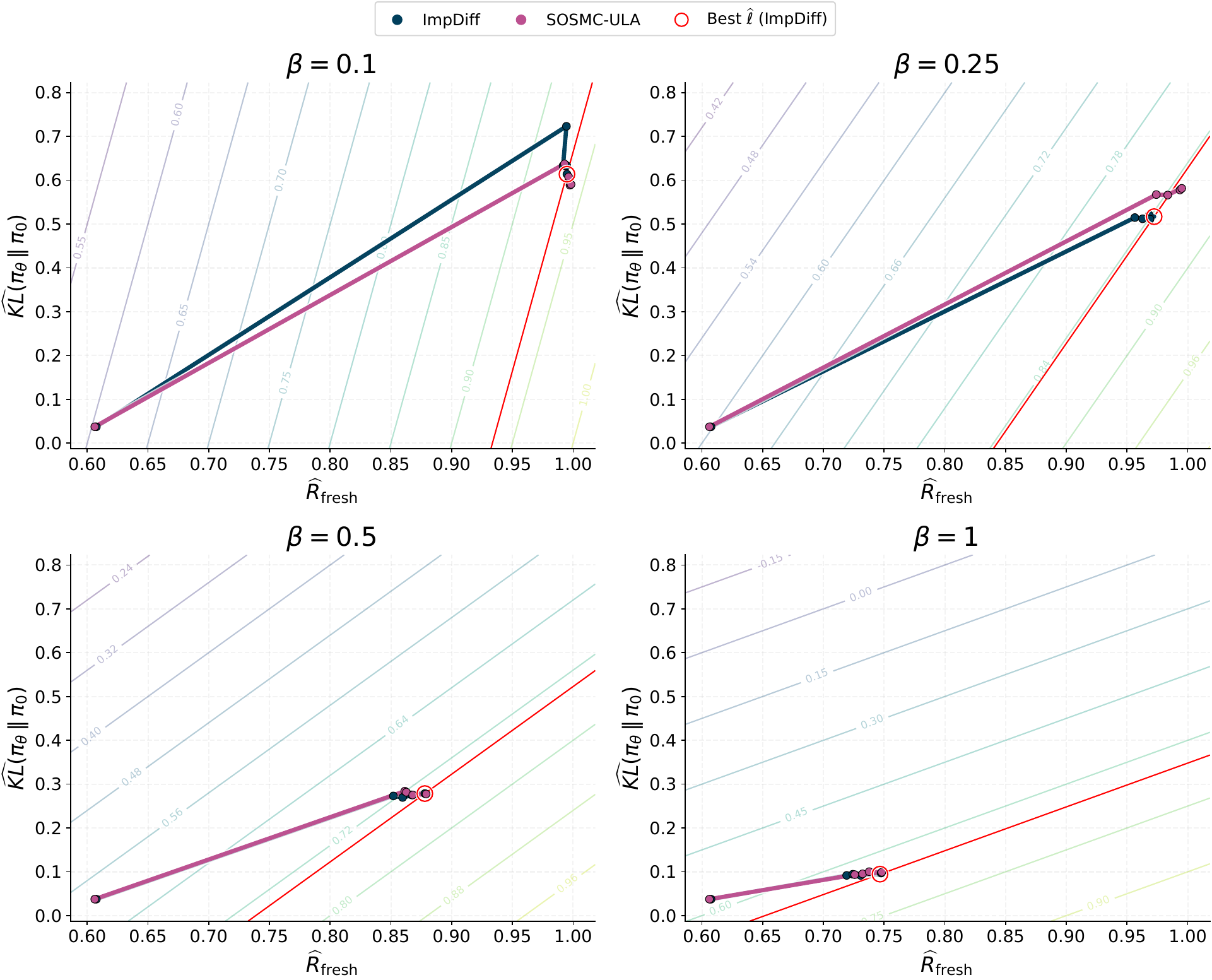}
  \caption{Tuning trajectories for \gls*{ImpDiff} and \gls*{SOSMC}-\gls*{ULA} in the $(\widehat R_{\mathrm{fresh}},\widehat{\mathrm{KL}})$ plane, across values of $\beta_{\mathrm{KL}}$. Contours denote level sets of $\ell$, with the best \gls*{ImpDiff} solution highlighted in red. Note this corresponds to a $\widehat{R}_{\mathrm{fresh}}$ budget of $20,000$ Langevin steps, with $15,000$ of those being burn-in steps.}
  \label{fig:ebm-objective-contours-across-beta}
\end{figure}

\begin{figure}[t!]
  \centering
  \includegraphics[width=0.7\linewidth]{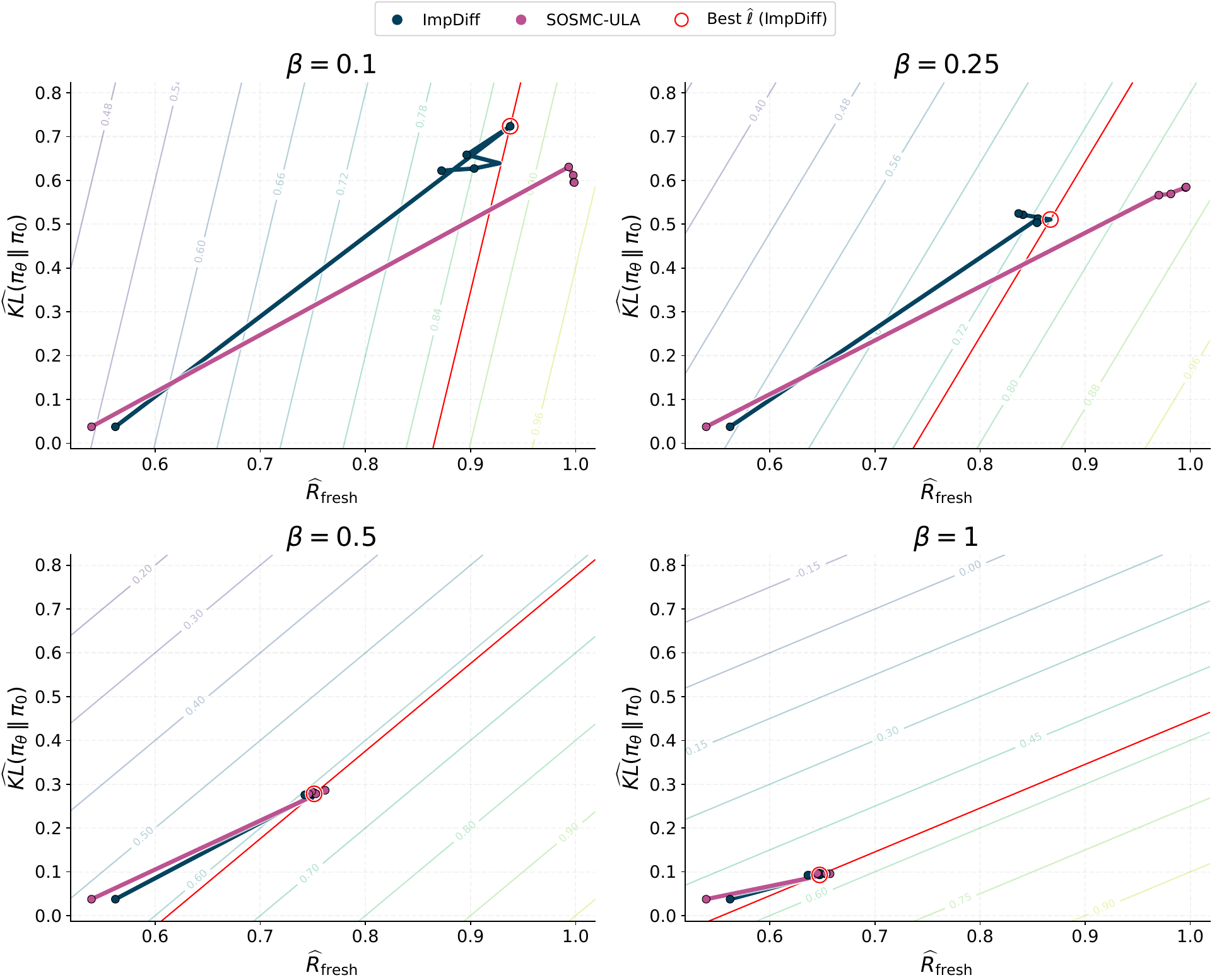}
  \caption{Tuning trajectories for \gls*{ImpDiff} and \gls*{SOSMC}-\gls*{ULA} in the $(\widehat R_{\mathrm{fresh}},\widehat{\mathrm{KL}})$ plane, across values of $\beta_{\mathrm{KL}}$. Contours denote level sets of $\ell$, with the best \gls*{ImpDiff} solution highlighted in red. Note this corresponds to a $\widehat{R}_{\mathrm{fresh}}$ budget of $5,000$ Langevin steps, with $500$ of those being burn-in steps.}
  \label{fig:ebm-objective-contours-across-beta-cheap}
\end{figure}

\clearpage

\subsection{Reward tuning of Energy Based Models - MNIST} \label{appdx:mnist-tuning-details}

Here, we consider the reward tuning of a convolutional EBM pre-trained on MNIST, in the setting where the sampler utilised during pre-training differs from that of the Markov kernel utilised during reward tuning. Indeed, this experiment serves as not only a higher-dimensional setting, but one that is practically relevant, since pre-trained models, such as an EBM, are often only available together with a specific sampling procedure used to visualise samples, whilst reward tuning proceeds under a simpler, analytically tractable \gls*{ULA} transition kernel.

\noindent\textbf{Setup.}
Here, we utilise a publicly available checkpoint of an EBM pre-trained on MNIST, where the underlying network is a convolutional architecture equipped with Swish activations. To be explicit, this network consists of four strided convolution blocks subsequently followed by two fully connected layers, with a hidden dimension of $D_{h} = 32$. To emphasise the dimensionality of the problem, recall inputs to this network are single-channel images, $x \in \mathbb{R}^{28 \times 28 \times 1}$.

\noindent\textbf{Pre-training details.} 
As was the case for in the setting of 2D datasets, pre-training of the EBM proceeds via \gls*{PCD}, albeit with a sampler that differs from our tuning kernel. Since many details are the same, we refer to Appendix \ref{appdx:2d-datasets-pretraining-details} for the features characteristic of \gls*{PCD}, and instead focus on the aforementioned sampler. Define the clip function as $\mathrm{clip}_a(X) := \mathrm{clamp}(X, -a, a)$ for a positive real number $a > 0$. Specifically, the following update is performed,
\begin{equation*}
    x_{t} 
    = 
    \textrm{clip}_{1} 
    \left\{ 
    \mathrm{clip}_{1} \left(x_{t-1} + \sigma\varepsilon_{t-1} \right) 
    - 
    \alpha \, \mathrm{clip}_{c} \left( \nabla_{x} E_{\theta} \, \mathrm{clip}_{1} \left(x_{t-1} + \sigma\varepsilon_{t-1} \right) \right)
    \right\},
\end{equation*}
where $\varepsilon_{t-1} \sim \mathcal{N} (0, I)$, $c = 0.03$ is a gradient clipping threshold, and $a = 1$ clamps resulting transitions to $[-1, 1]$, that is the expected pixel space for how the MNIST data is normalised. Here, $\sigma = 0.005$ denotes the scale of the additive \emph{jitter} noise, and $\alpha = 10$ is a step-size parameter controlling the magnitude of the gradient descent step. Note, $\alpha$ should not be interpreted as a step size in the normal sense here, since element-wise gradient clipping occurs, the per-pixel update magnitude, as a result of the gradient term, is bounded by $\alpha c = 0.3$. To be clear, the noise injection occurs before the gradient step, and so the resulting Markov kernel does not admit a simple Gaussian transition density, nor does it satisfy detailed balance with respect to $\pi_{\theta}(x) \propto \exp(-E_{\theta}(x))$.

Therefore, although the sampler outlined above is utilised for warm-start initialisation, as well as evaluation of any tuned models, reward tuning itself is carried out using a \emph{pure} (Gaussian) \gls*{ULA} kernel. Indeed, this ensures that the kernel-density ratios required by \gls*{SOSMC}-\gls*{ULA} are well-defined during optimisation, whilst our evaluation remains well aligned with the established sampling behaviour of the pre-trained EBM.

\noindent\textbf{Reward tuning details.} 

Starting from the pre-trained energy model $E_{\theta_{0}}$, we perform reward tuning by optimising the reverse-$\mathrm{KL}$ regularised objective, as in \eqref{eq:J-obj-main}, where again the reference distribution $\pi_{0}(x) \propto \exp(-E_{\theta_{0}}(x))$ is fixed throughout reward tuning. 

Here, we consider three bounded rewards, evaluated after clamping pixel values to $[-1, 1]$, namely
\begin{align*} \label{eq:mnist-rewards}
    R_{\mathrm{bright}}(x)
    &=
    \frac{1}{28 \cdot 28} \sum_{u,v} x_{u,v},
    \\
    R_{\mathrm{dark}}(x)
    &=
    -
    \frac{1}{28\cdot 28} \sum_{u,v} x_{u,v},
    \\
    R_{\mathrm{half}}(x)
    &=
    \frac{1}{2}\left(
    \frac{1}{14\cdot 28}\sum_{u=15}^{28}\sum_{v=1}^{28} x_{u,v}
    -
    \frac{1}{14\cdot 28}\sum_{u=1}^{14}\sum_{v=1}^{28} x_{u,v}
    \right),
\end{align*}
where we note $R_{\mathrm{bright}}$ and $R_{\mathrm{dark}}$ promote globally light and dark images respectively, whilst $R_{\mathrm{half}}$ promotes mass to concentrate in the lower half of the image, since $u$ denotes pixel indexes in the vertical axis ranging from top to bottom.

Reward tuning proceeds in an identical manner to as detailed in Appendix \ref{para:2d-datasets-reward-tuning-details}, where we take care to emphasise that no clamping, gradient clipping, or additional jitter noise is applied within the proposal kernel for tuning. Indeed, the loss utilised for tuning is in fact the sum of the reward surrogate and reverse-KL surrogate (see Appendix \ref{appdx:ebm-reward-tuning-loss-surrogate}). Once again, we distinguish between $\widehat{R}_{\mathrm{particle}}$ and $\widehat{R}_{\mathrm{fresh}}$, however note the latter provides the reward estimate aligned with the established sampling procedure of the pre-trained model. Consequently, we opt to utilise $\widehat{R}_{\mathrm{fresh}}$ for performance evaluation.

Unless stated otherwise, the reward tuning is run for $K = 1,000$ outer iterations, with $K_{\mathrm{inner}} = 1$, for $N=1,000$ particles. Parameter updates are carried out using \textsc{Adam}, with a learning rate of $\eta = 1 \times 10^{-4}$ and $\beta_{1} = 0.9$, which notably matches the learning rate utilised in the pre-training procedure. Here, the step size is initialised to $\gamma_{k} = 5 \times 10^{-3}$, whilst for \gls*{SOSMC}-\gls*{ULA} an \gls*{ESS} threshold of $\tau=0.9$ is leveraged, whilst $\tau_{\gamma}=0.95$. 

\noindent\textbf{Results.} 
Across all rewards and $\beta_{\mathrm{KL}}$ values considered, both \gls*{ImpDiff} and \gls*{SOSMC}-\gls*{ULA} consistently increase $\widehat{R}_{\mathrm{fresh}}$ relative to the pre-trained baseline. Although we are unable to numerically estimate $\mathrm{KL} (\pi_{\theta} \Vert \pi_{0} )$ in this high dimensional setting, we do crucially observe the digit-like structure to be preserved under the respective reward tuning, although some mode collapse is notably observed at low values of $\beta_{\mathrm{KL}}$, particularly for $R_{\mathrm{bright}}$ and $R_{\mathrm{dark}}$, corroborating findings in \citet{marion2025implicit}. Although performance is generally comparable for the $R_{\mathrm{bright}}$ and $R_{\mathrm{dark}}$ while \gls*{SOSMC}-\gls*{ULA} exhibits slightly stronger improvements for $R_{\mathrm{half}}$, as evident in Figure \ref{fig:mnist-trajectories-main} through more efficient sampling.

\begin{figure}[b!]
  \centering
  \begin{subfigure}[b]{0.65\textwidth}
    \centering
    \includegraphics[width=\linewidth]{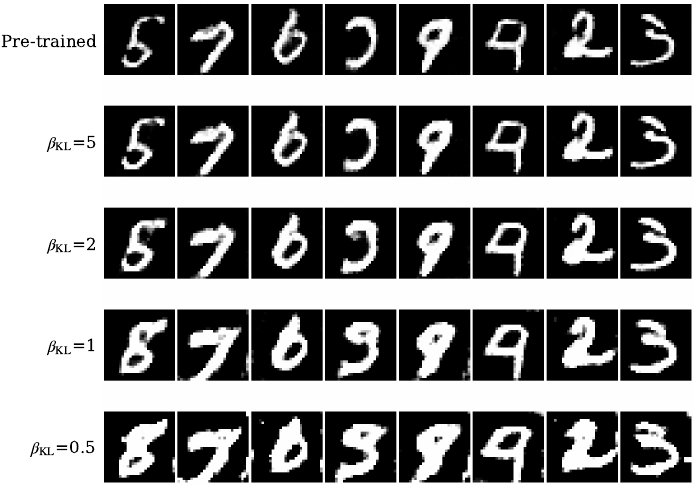}
    \caption{}
    \label{fig:mnist-samples-across-beta-bright}
  \end{subfigure}
  \hfill
  \begin{subfigure}[b]{0.65\textwidth}
    \centering
    \includegraphics[width=\linewidth]{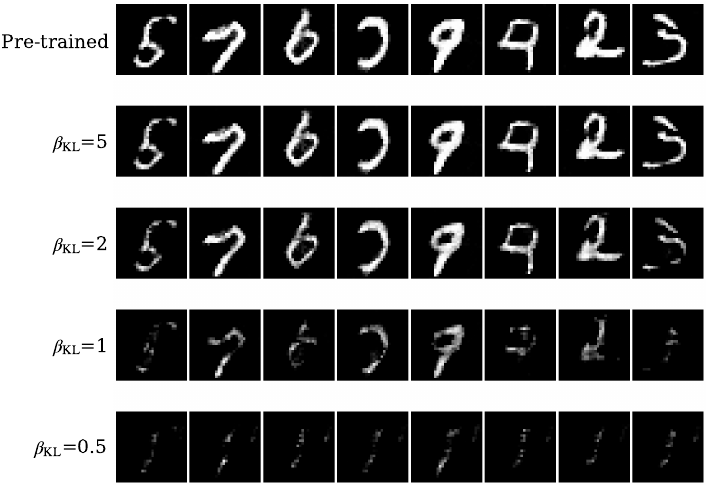}
    \caption{}
    \label{fig:mnist-samples-across-beta-dark}
  \end{subfigure}

  \caption{Samples obtained from the frozen $\pi_{\theta_{0}}$ (top), and $\pi_{\theta_{K}}$ across various values of $\beta_\mathrm{KL}$. In \textit{(a)} $R_{\mathrm{bright}}$ favours \textbf{brighter} images, whereas in \textit{(b)} $R_{\mathrm{dark}}$ favours \textbf{darker} images. All sampling trajectories are identically initialised and share noise.}
  \label{fig:mnist-samples-across-beta}
\end{figure}

\subsection{MMLE - Image Deblurring} \label{app:experimental_details-image_deblurring}
Here, we consider a Bayesian image deblurring problem, in which the goal is to recover a latent clean image $x$ from a blurred and noisy observation $y$. In fact, this is an ill-conditioned inverse problem, and consequently the model is equipped with a total-variation prior, whose strength is governed by the unknown scalar parameter $\theta$, yielding a high-dimensional \gls*{MMLE} problem with a non-differentiable posterior over the latent image. This setup closely follows that of Section 5.3 in \citet{encinar2025proximal}, and is similar to the image reconstruction problems studied in \citet{durmus_image_reconstruction} and \citet{Goldman_image_reconstruction}.

\noindent \textbf{Setup.}
Let ${x}^{\star} \in \mathbb{R}^{d_{x}}$ denote the ground-truth clean image, with image dimension $d_{x} = n_{1} \times n_{2}$, and the observed image by
\begin{equation*}
    y = B {x}^{\star} + \varepsilon, \qquad \varepsilon \sim \mathcal{N} \left(0, \sigma^{2} I_{d_{x}}\right),
\end{equation*}
where $B \in \mathbb{R}^{d_{x} \times d_{x}}$ is a known blurring operator. As in \citet{encinar2025proximal}, $B$ is a sparse local averaging operator, that, for a specified patch size $P$, replaces interior pixels $x^{\star}_{i, j}$ by the respective uniform average over a $P \times P$ neighbourhood of pixels, whilst near image boundaries truncation and renormalisation is applied accordingly. Indeed, $B$ is information destroying and hence direct inversion is unstable, and the likelihood alone is insufficient to recover a visually meaningful reconstruction. Consequently, we regularise the latent image using an isotropic total-variation prior, which note not only favours piecewise-smooth images, but also helps to preserve sharp edges,
\begin{align*}
    p_{\theta}(x) = C(\theta) \exp \left( - e^{\theta} \mathrm{TV}(x)\right),
\end{align*}
where $C(\theta) \propto e^{d_{x} \theta}$ is the normalising constant and $e^{\theta} > 0$ is the TV precision, which, as noted in \citet{encinar2025proximal}, often requires manual tuning, as is done in \citet{durmus_image_reconstruction} and \citet{Goldman_image_reconstruction}. Here, $\mathrm{TV}(x) = \Vert \nabla_{d} x \Vert_{1}$ denotes the total variation, with $\nabla_{d}$ denoting the two-dimensional discrete gradient operator and is crucially non-differentiable. 

\begin{figure}[b!]
  \centering
  \includegraphics[width=0.8\linewidth]{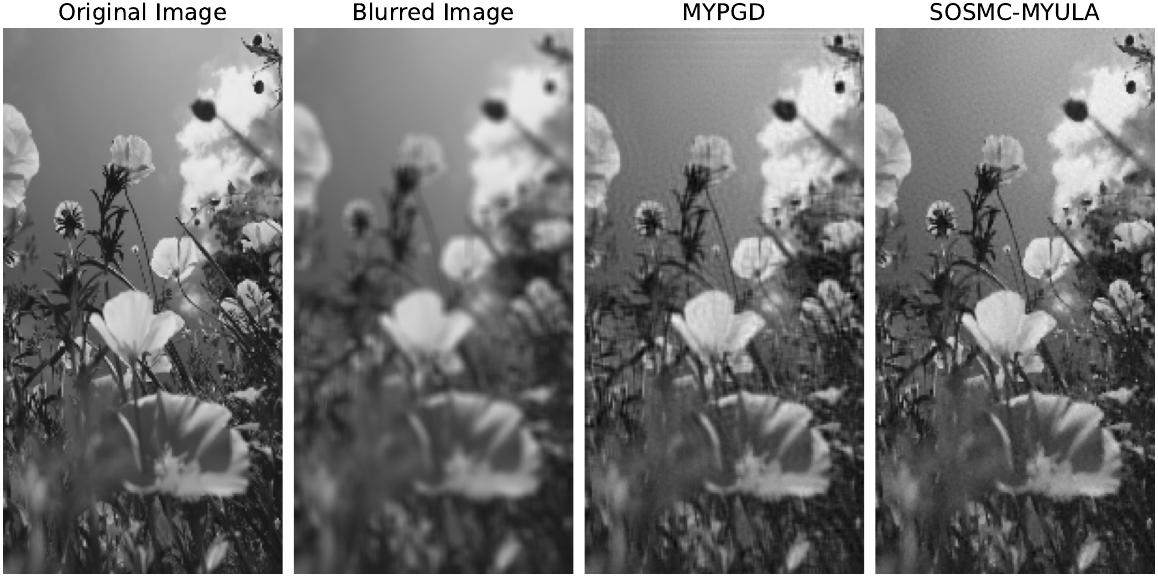}
  \caption{The ground truth image (left), blurred image (middle-left) and posterior mean reconstructions for \textsc{MYPGD} (middle-right) and \textsc{SOSMC-MYULA} (right) particle clouds, for image deblurring Setup B.}
  \label{fig:experimental_details-image_deblurring-setup_B-images}
\end{figure}

Combining the aforementioned $\mathrm{TV}$ prior with the Gaussian likelihood results in a posterior of the form
\begin{align*}
    \pi_{\theta} (x)
    = 
    p_{\theta}(x \vert y) \propto \exp \left( - \frac{1}{2 \sigma^{2}} \Vert y - Bx \Vert_{2}^{2} - e^{\theta} \mathrm{TV}(x) + \log C(\theta) \right).
\end{align*}

Therefore, the posterior may be expressed as $\pi_\theta(x)\propto \exp \left(-U_\theta(x) \right)$, up to constants independent of $\theta$ and $x$, where
\begin{align*}
    U_{\theta}(x)
    =
    \frac{1}{2 \sigma^{2}}
    \Vert y - Bx \Vert_{2}^{2}
    +
    e^{\theta} \mathrm{TV}(x)
    -
    d_{x} \theta,
\end{align*}
and so the gradient of the negative log-marginal likelihood can be expressed as
\begin{align*}
    \nabla_{\theta} \ell(\theta)
    &= 
    - \mathbb{E}_{x \sim p_{\theta}(x \vert y)} \left[ \nabla_{\theta} \log p_{\theta} (x, y) \right] \\
    &=
    - \mathbb{E}_{x \sim p_{\theta}(x \vert y)} \left[ \nabla_{\theta} \left \{ \log p (y \vert x) +  \log p_{\theta}(x) \right \} \right] \\
    &=
    - \mathbb{E}_{x \sim p_{\theta}(x \vert y)} \left[ \nabla_{\theta} \left \{  - \frac{1}{2 \sigma^{2}} \Vert y - Bx \Vert^{2}_{2} - e^{\theta} \mathrm{TV}(x) + d_{x} \theta \right \}  \right] \\
    &= 
    \mathbb{E}_{x \sim p_{\theta}(x \vert y)} \left[ e^{\theta} \mathrm{TV}(x) - d_{x} \right],
\end{align*}
where in the second equality we have used the fact that $p_{\theta}(x, y) = p (y \vert x) p_{\theta}(x)$, whilst in the third equality we have used the fact that $p(y \vert x) = \left( 2 \pi \sigma^{2} \right)^{-d_{x} / 2} \exp \left( - \frac{1}{2 \sigma^{2}} \Vert y - Bx \Vert^{2}_{2} \right)$. Therefore, the minimisation objective can indeed be expressed as an expectation over $\pi_{\theta}$, as in \eqref{eq:gradient_form}, with $H_{\theta}(\cdot) = e^{\theta} \mathrm{TV}(\cdot) - d_{x}$.

\begin{figure}[t!]
  \centering
  \includegraphics[width=1.0\linewidth]{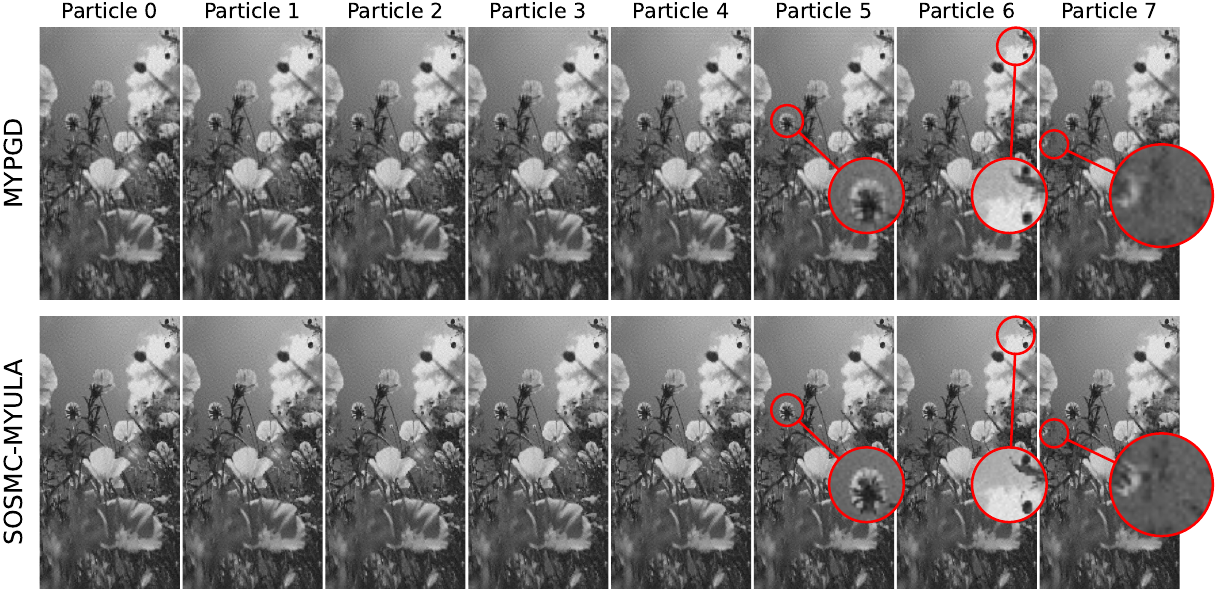}
  \caption{Subset of the final particle clouds for \textsc{MYPGD} (top) and \textsc{SOSMC-MYULA} (bottom), in the image deblurring Setup B, where note $N=20$. Selected areas are enlarged for detailed visual comparison between the two methods.}
  \label{fig:experimental_details-image_deblurring-setup_B-final_particles_comparison}
\end{figure}

\noindent \textbf{Proximal Dynamics.}
For fixed $\theta$, sampling from $\pi_{\theta}$ is challenging, since, as previously noted, $\mathrm{TV}$ is non-differentiable. Nevertheless, as outlined in \citet{encinar2025proximal}, particle algorithms applicable to standard \gls*{MMLE} problems can be extended to cases in which $p_{\theta}(x, y)$ may be non-differentiable. In particular, such extensions are based on various discretisation schemes of
\begin{align*}
    \mathrm{d} \boldsymbol{\theta}_{t}^{N}
    &= 
    - \frac{1}{N} \sum_{i=1}^{N} \nabla_{\theta} U^{\lambda} (\boldsymbol{\theta}_{t}^{N}, \boldsymbol{\mathrm{X}}_{t}^{i, N}) \mathrm{d}t + \sqrt{\frac{2}{N}} \mathrm{d} \boldsymbol{\mathrm{B}}_{t}^{0, N} \\
    \mathrm{d} \boldsymbol{\mathrm{X}}_{t}^{i, N}
    &=
    - \nabla_{x} U^{\lambda}(\boldsymbol{\theta}_{t}^{N}, \boldsymbol{\mathrm{X}}_{t}^{i, N}) \mathrm{d}t + \sqrt{2} \mathrm{d} \boldsymbol{\mathrm{B}}_{t}^{i, N},
\end{align*}
where $U^{\lambda}$ is the $\lambda$-\gls*{MY} approximation of $U$. In fact, through removing the noise term from the $\theta$ dynamics, we obtain the following stochastic differential equations,
\begin{align*}
    \mathrm{d} \boldsymbol{\theta}_{t}^{N}
    &= 
    - \frac{1}{N} \sum_{i=1}^{N} \nabla_{\theta} U^{\lambda} (\boldsymbol{\theta}_{t}^{N}, \boldsymbol{\mathrm{X}}_{t}^{i, N}) \mathrm{d}t \\
    \mathrm{d} \boldsymbol{\mathrm{X}}_{t}^{i, N}
    &=
    - \nabla_{x} U^{\lambda}(\boldsymbol{\theta}_{t}^{N}, \boldsymbol{\mathrm{X}}_{t}^{i, N}) \mathrm{d}t + \sqrt{2} \mathrm{d} \boldsymbol{\mathrm{B}}_{t}^{i, N},
\end{align*}
which we subsequently discretise to obtain the \gls*{MYPGD} baseline, as introduced in \citet{encinar2025proximal}. 

Here, the smooth component of $\nabla_{x} U_{\theta}(x)$ is given by $\nabla_{x} \Phi(x) = \frac{B^{\top}(Bx - y)}{\sigma^{2}}$,
and so, for $\lambda >0$ and image step-size $\gamma > 0$, the \gls*{MYULA} transition is defined as
\begin{align*}
    X_{k+1}^{(i)} 
    = 
    \left(1 - \frac{\gamma}{\lambda} \right) X_{k}^{(i)} 
    - 
    \gamma \nabla_{x} \Phi(X_{k}^{(i)} )
    +
    \frac{\gamma}{\lambda} \mathrm{prox}_{\lambda e^{\theta} \mathrm{TV}}(X_{k}^{(i)} )
    +
    \sqrt{2 \gamma} \xi_{k}^{(i)}, 
    \qquad \xi_{k}^{(i)} \sim \mathcal{N}(0, I_{d_{x}}),
\end{align*}
where 
\begin{align*}
    \mathrm{prox}_{\lambda e{^\theta} \mathrm{TV}}(z)
    =
    \arg\min_{u}
    \left\{
        \lambda e^{\theta} \mathrm{TV}(u)
        +
        \frac{1}{2} \Vert u-z \Vert_{2}^{2}
    \right\}.
\end{align*}
Indeed, both \gls*{MYPGD} and \gls*{SOSMC}-\gls*{MYULA} evolve the particle cloud $\{ X_{k}^{(i)} \}^{N}_{i=1}$ by a single \gls*{MYULA} step per outer iteration, where, in the former method an equally weighted persistent particle cloud is evolved, whilst the parameter is updated using an unweighted empirical particle average estimate. To be clear, following \citet{encinar2025proximal}, we avoid computing the joint proximal operator over $\theta$ and $x$ in the $\theta$-update, and instead utilise a hybrid scheme, so that we use proximal updates for the particles and standard gradient based updates for the parameter,
\begin{align*}
    \theta_{k+1} 
    = 
    \theta_{k} - \gamma \left[\frac{1}{N} \sum_{i=1}^{N} \left( e^{\theta_{k}} \mathrm{TV}(X_{k}^{(i)}) - d_{x}  \right) \right],
\end{align*}
or, equivalently using the dimension-normalised gradient estimator $H_\theta(\cdot)$, this may be written as
\begin{align*}
    \theta_{k+1} 
    = 
    \theta_{k} - \gamma \left[\frac{1}{N} \sum_{i=1}^{N} \left( \frac{e^{\theta_{k}} \mathrm{TV}(X_{k}^{(i)})}{d_{x}} - 1 \right) \right].
\end{align*}
In contrast, and as seen previously, in the case of \gls*{SOSMC}-\gls*{MYULA} a weighted estimate is utilised, 
\begin{align*}
    g_{k} 
    = 
     \sum_{i=1}^{N} w_{k}^{(i)}\left( \frac{e^{\theta_{k}} \mathrm{TV}(X_{k}^{(i)})}{d_{x}} - 1 \right).
\end{align*}

\begin{figure}[t!]
  \centering
  \includegraphics[width=0.99\linewidth]{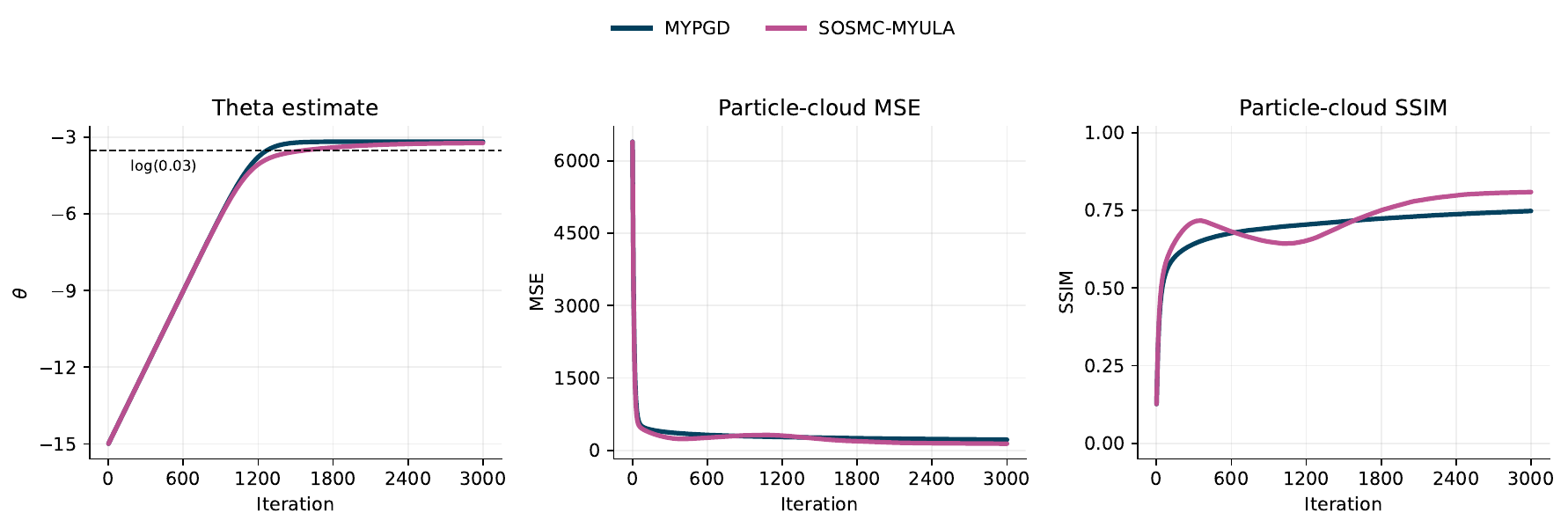}
  \caption{Evolution of parameter estimates (left), MSE (middle) and structural similarity index measure (right) for \textsc{MYPGD} and \textsc{SOSMC-MYULA} particle clouds, for the image deblurring Setup B, with $N = 20$ particles and $\theta_{0} = -15$. The horizontal dashed line indicates the strength of the total variation prior manually set in \citet{durmus_image_reconstruction} and \citet{Goldman_image_reconstruction}.}
  \label{fig:experimental_details-image_deblurring-setup_B-evaluation_curves}
\end{figure}

\noindent \textbf{Implementation Details.}
We evaluate \gls*{MYPGD} and \gls*{SOSMC}-\gls*{MYULA} on three distinct deblurring setups, designed to separate the effect of image content from the effect of blur severity. Setups A and B use different black and white ground-truth images $x^{\star}$ under the same blurring operator $B$, while Setup C increases the blur severity for the same image used in Setup A. Specifically, Setup A considers the acoustic-guitar image outlined in the left panel of Figure \ref{fig:experimental_details-image_deblurring-setup_A-images} as $x^{\star}$, where note $d_{x} = n_{1} \times n_{2} = 292 \times 119$, and applies $B$ with patch size $P=5$, while we set $\sigma = 0.47$. Setup B considers the flower image outlined in the left panel of Figure \ref{fig:experimental_details-image_deblurring-setup_B-images} as $x^{\star}$, where now $d_{x} = n_{1} \times n_{2} = 292 \times 150$, and applies the same $B$ as in Setup A. In Setup C, we again utilise the acoustic-guitar image from Setup A, however, to create a more challenging inverse problem, increase the patch size to $P=10$ and the noise level to $\sigma=1.0$. We note this effect can be understood by comparing the middle-left panels of Figures \ref{fig:experimental_details-image_deblurring-setup_A-images} and \ref{fig:experimental_details-image_deblurring-setup_C-images}. Furthermore, in Setups A and B particles are initialised according to $\{ X_0^{(i)} \}^{N}_{i=1} \sim \mathcal{N}(50, 10^2)$, whereas in Setup C we initialise according to $\{ X_0^{(i)} \}^{N}_{i=1} \sim \mathcal{N}(50, 30^2)$. Across all setups, we set $N=20$, $K=3000$, $\theta_{0} = -15$, and $\gamma_{k}$ is initialised to $0.01$ for both methods, as is a common learning rate of $\eta = 0.01$ utilised. Following \citet{encinar2025proximal}, we set $\lambda = 0.4$ and leverage a Douglas-Rachford $\mathrm{TV}$ method \citep{douglas_and_rachford} throughout, via the \texttt{proxTV} Python package, for the computation of $\mathrm{prox}_{\lambda e{^\theta} \mathrm{TV}}(z)$.

\begin{figure}[t!]
  \centering
  \includegraphics[width=0.8\linewidth]{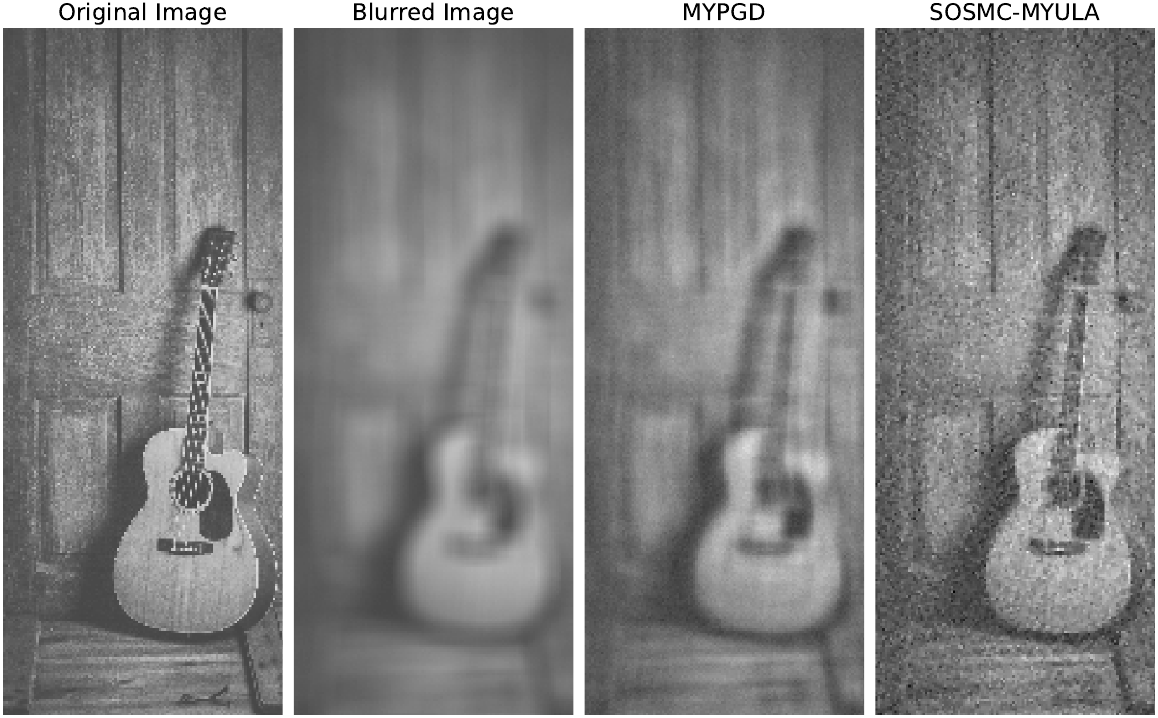}
  \caption{The ground truth image (left), blurred image (middle-left) and posterior mean reconstructions for \textsc{MYPGD} (middle-right) and \textsc{SOSMC-MYULA} (right) particle clouds, for image deblurring Setup C.}
  \label{fig:experimental_details-image_deblurring-setup_C-images}
\end{figure}

\begin{figure}[t!]
  \centering
  \includegraphics[width=0.99\linewidth]{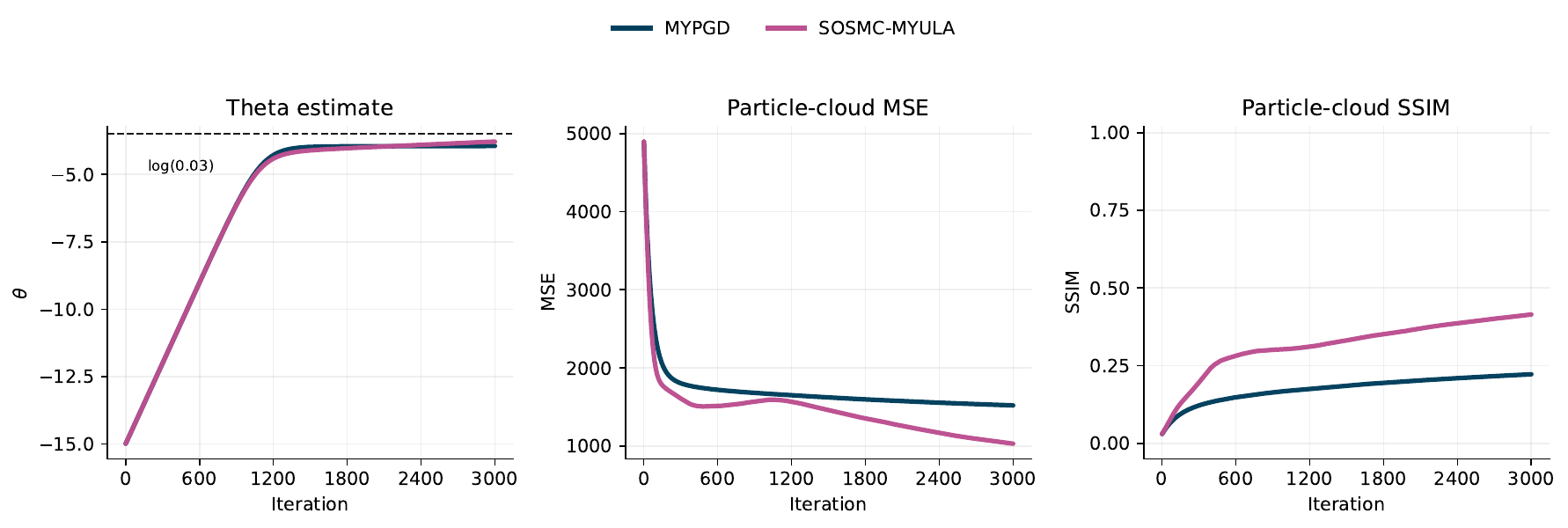}
  \caption{Evolution of parameter estimates (left), MSE (middle) and structural similarity index measure (right) for \textsc{MYPGD} and \textsc{SOSMC-MYULA} particle clouds, for the image deblurring Setup C, with $N = 20$ particles and $\theta_{0} = -15$. The horizontal dashed line indicates the strength of the total variation prior manually set in \citet{durmus_image_reconstruction} and \citet{Goldman_image_reconstruction}.}
  \label{fig:experimental_details-image_deblurring-setup_C-evaluation_curves}
\end{figure}

\noindent \textbf{Results.}
In accordance with \citet{encinar2025proximal}, we report the evolution of the $\theta$ estimate, mean squared error (MSE) and structural similarity index (SSIM) between the respective particle clouds and ground-truth images across setups. The latter metric can be understood to capture similarity in local intensity, contrast and structural agreement between the reconstruction and $x^{\star}$, and is notably sensitive to the preservation of edges, making it particularly natural for deblurring problems. In Figures \ref{fig:experimental_details-image_deblurring-setup_A-images}, \ref{fig:experimental_details-image_deblurring-setup_B-images} and \ref{fig:experimental_details-image_deblurring-setup_C-images} we illustrate $x^{\star}$ and $y$, along with the posterior mean reconstructions for \gls*{MYPGD} and \gls*{SOSMC}-\gls*{MYULA}, while the corresponding evolutions of the aforementioned metrics can be seen in Figures \ref{fig:experimental_details-image_deblurring-setup_A-evaluation_curves}, \ref{fig:experimental_details-image_deblurring-setup_B-evaluation_curves} and \ref{fig:experimental_details-image_deblurring-setup_C-evaluation_curves}, for Setups A, B and C respectively. Across all setups, we observe superior performance for \gls*{SOSMC}-\gls*{MYULA} in terms of image reconstruction quality, through both the MSE and SSIM. Furthermore, in Figures \ref{fig:experimental_details-image_deblurring-setup_A-final_particles_comparison}, \ref{fig:experimental_details-image_deblurring-setup_B-final_particles_comparison} and \ref{fig:experimental_details-image_deblurring-setup_C-final_particles_comparison} we visualise a subset of the final particle clouds for Setups A, B and C respectively, and notably observe visibly sharper particle clouds for \gls*{SOSMC}-\gls*{MYULA}, as highlighted in various locations of the corresponding image. We also note that the final value of $\theta$ estimates, for both \gls*{MYPGD} and \gls*{SOSMC}-\gls*{MYULA}, across all setups, is remarkably close to the value set manually in \citet{durmus_image_reconstruction} and \citet{Goldman_image_reconstruction}, as indicated in the left panel Figures \ref{fig:experimental_details-image_deblurring-setup_A-evaluation_curves}, \ref{fig:experimental_details-image_deblurring-setup_B-evaluation_curves} and \ref{fig:experimental_details-image_deblurring-setup_C-evaluation_curves}.

\begin{figure}[b!]
  \centering
  \includegraphics[width=0.95\linewidth]{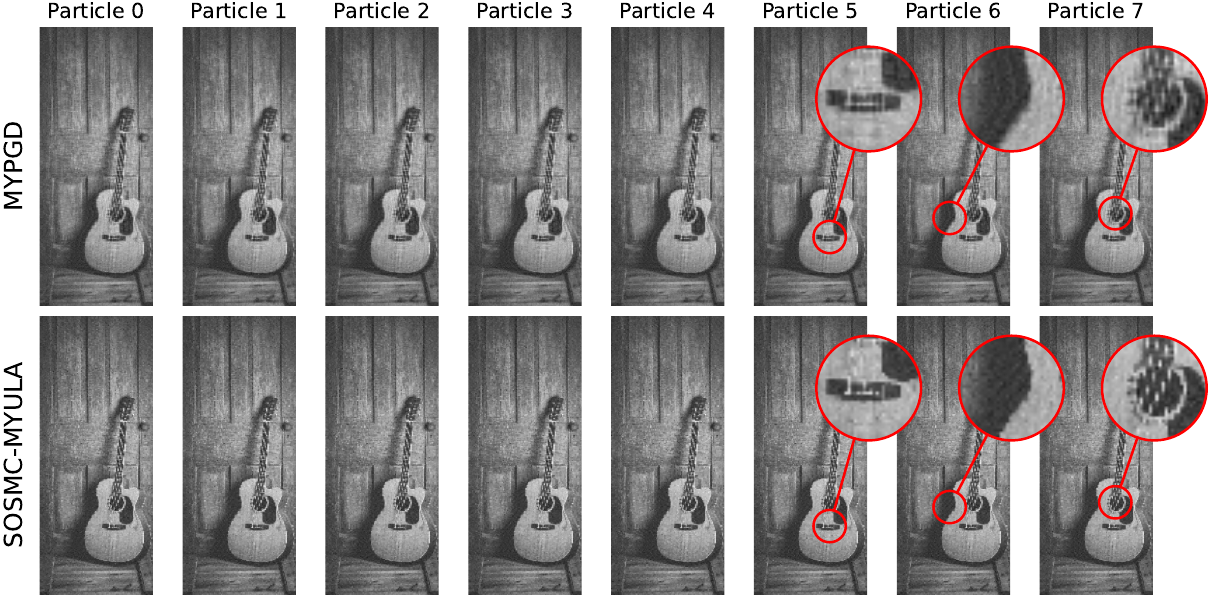}
  \caption{Subset of the final particle clouds for \textsc{MYPGD} (top) and \textsc{SOSMC-MYULA} (bottom), in the image deblurring Setup A, where note $N=20$. Selected areas are enlarged for detailed visual comparison between the two methods.}
  \label{fig:experimental_details-image_deblurring-setup_A-final_particles_comparison}
\end{figure}

\begin{figure}[b!]
  \centering
  \includegraphics[width=0.95\linewidth]{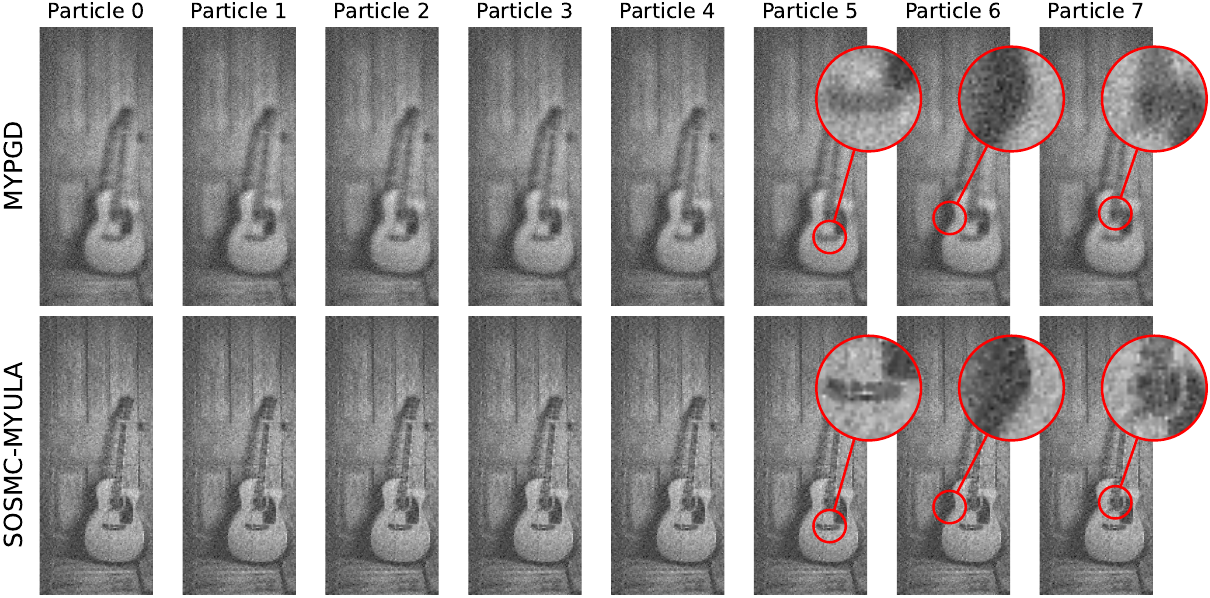}
  \caption{Subset of the final particle clouds for \textsc{MYPGD} (top) and \textsc{SOSMC-MYULA} (bottom), in the image deblurring Setup C, where note $N=20$. Selected areas are enlarged for detailed visual comparison between the two methods.}
  \label{fig:experimental_details-image_deblurring-setup_C-final_particles_comparison}
\end{figure}

\clearpage

\section{Experimental Details - Theoretical Results} \label{appdx:experimental-details-theoretical-results}

\subsection{Derivation of $\nabla_{\theta} \ell$ for reward tuning of Langevin Processes}
\label{appdx:langevin-processes-gradient-derivation}

We begin by letting $V(\cdot, \theta) : \mathbb{R}^{d_x} \to \mathbb{R}$ denote a smooth potential with parameters $\theta\in\mathbb{R}^{d_\theta}$, and thus define the corresponding Gibbs distribution
\begin{equation}
\label{eq:gibbs-family}
    \pi_\theta(x) = \frac{\exp(-V(x,\theta))}{Z(\theta)},
    \qquad
    Z(\theta):=\int_{\mathbb{R}^{d_x}}\exp(-V(x,\theta))\,dx.
\end{equation}
We use $\pi_{\theta}$ to denote the Gibbs distribution, and in our setup the Gaussian-mixture distribution defined by \eqref{eq:gmm-potential}. In particular, for \eqref{eq:gmm-potential} the normalising constant is notably independent of $\theta$. In the interest of completeness, and as done in \citet{marion2025implicit}, we in fact present the general case and remark on simplifications that occur for our specific case.

Given a potentially non-differentiable reward $R: \mathbb{R}^{d_x} \to \mathbb{R}$ and a fixed reference distribution $\pi_{\mathrm{ref}}$, which is taken to be a frozen Gibbs distribution of the same form in our case, induced by $\theta_{\mathrm{ref}}$, we consider the following KL-regularised objective
\begin{equation}
\label{eq:langevin-objective-appdx}
    \ell(\theta)
    =
    - \mathbb{E}_{\pi_{\theta}}[R(X)]
    +
    \beta_{\mathrm{KL}} \, \mathrm{KL} \left(\pi_{\mathrm{ref}} \Vert \pi_\theta\right), \qquad \beta_{\mathrm{KL}}>0.
\end{equation}
We expand the forward KL term,
\begin{equation}
    \mathrm{KL}(\pi_{\mathrm{ref}} \Vert \pi_{\theta})
    =
    \mathbb{E}_{\pi_{\mathrm{ref}}}\!\left[\log \pi_{\mathrm{ref}}(X) - \log \pi_\theta(X)\right],
\end{equation}
so that $\theta$ appears only through $\log \pi_{\theta}$.

\begin{lemma}[Gibbs score identity] 
\label{lem:gibbs-score}
    For $\pi_\theta$ as outlined in \eqref{eq:gibbs-family},
    \begin{equation} 
    \label{eq:gibbs-score}
        \nabla_{\theta} \log \pi_{\theta}(X)
        =
        -\nabla_{\theta} V(X, \theta)
        +
        \mathbb{E}_{\pi_{\theta}} \left[ \nabla_{\theta} V(X, \theta) \right].
    \end{equation}
\end{lemma}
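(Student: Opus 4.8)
The plan is a direct computation via the log-partition function, with the only subtlety being the interchange of differentiation and integration. First I would write the log-density explicitly from \eqref{eq:gibbs-family}: $\log \pi_\theta(x) = -V(x,\theta) - \log Z(\theta)$, so that, differentiating in $\theta$ for fixed $x$,
\begin{equation*}
\nabla_\theta \log \pi_\theta(x) = -\nabla_\theta V(x,\theta) - \nabla_\theta \log Z(\theta).
\end{equation*}
It then remains to identify $\nabla_\theta \log Z(\theta) = -\mathbb{E}_{\pi_\theta}[\nabla_\theta V(X,\theta)]$, which yields the claim after substitution.

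Next I would compute $\nabla_\theta \log Z(\theta) = Z(\theta)^{-1} \nabla_\theta Z(\theta)$ and, assuming we may differentiate under the integral sign,
\begin{equation*}
\nabla_\theta Z(\theta) = \int_{\mathbb{R}^{d_x}} \nabla_\theta e^{-V(x,\theta)}\, dx = -\int_{\mathbb{R}^{d_x}} \nabla_\theta V(x,\theta)\, e^{-V(x,\theta)}\, dx.
\end{equation*}
Dividing by $Z(\theta)$ gives $\nabla_\theta \log Z(\theta) = -\int \nabla_\theta V(x,\theta)\, \pi_\theta(x)\, dx = -\mathbb{E}_{\pi_\theta}[\nabla_\theta V(X,\theta)]$, and combining with the display above completes the proof.

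The one step requiring care is the Leibniz interchange $\nabla_\theta \int e^{-V(x,\theta)}\,dx = \int \nabla_\theta e^{-V(x,\theta)}\,dx$. This is the main (and only real) obstacle; it is justified under standard regularity — e.g. $V(x,\cdot)$ continuously differentiable with $\|\nabla_\theta V(x,\theta)\| e^{-V(x,\theta)}$ locally dominated by an integrable function of $x$ uniformly for $\theta$ in a neighbourhood — which holds in particular for the Gaussian-mixture potential \eqref{eq:gmm-potential} (where moreover $Z(\theta)$ is constant in $\theta$, so $\nabla_\theta \log Z(\theta)=0$, consistent with $\mathbb{E}_{\pi_\theta}[\nabla_\theta V] = 0$ for that family). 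I would state this dominated-integrability condition as the standing assumption and invoke the dominated convergence theorem to justify the interchange, then present the three-line computation above.
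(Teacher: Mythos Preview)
Your proposal is correct and follows exactly the same route as the paper: write $\log\pi_\theta(x)=-V(x,\theta)-\log Z(\theta)$, differentiate, and identify $\nabla_\theta\log Z(\theta)=-\mathbb{E}_{\pi_\theta}[\nabla_\theta V]$ by differentiating under the integral. The only difference is that you explicitly flag and justify the Leibniz interchange via a dominated-integrability condition, whereas the paper simply invokes the chain rule without comment; your added care is appropriate but not a departure in approach.
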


\begin{proof}
We have that $\log\pi_\theta(X)=-V(X, \theta)-\log Z(\theta)$, and so
\begin{equation*}
    \nabla_{\theta} \log \pi_{\theta}(X) 
    = 
    -\nabla_{\theta} V(X, \theta) - \nabla_{\theta} \log Z(\theta).
\end{equation*}
Furthermore,
\begin{equation*}
    \nabla_{\theta} \log Z(\theta)
    =
    \frac{1}{Z(\theta)}\nabla_{\theta} Z(\theta)
    =
    \frac{1}{Z(\theta)} \int -\exp(-V(x, \theta)) \ \nabla_{\theta} V(x, \theta)\,dx
    =
    -\mathbb{E}_{\pi_{\theta}}[\nabla_{\theta} V(X, \theta)],
\end{equation*}
where in the second equality we utilise the chain rule. Substituting the above into the previous equation gives the desired result.
\end{proof}

Indeed, the mixture potential \eqref{eq:gmm-potential} gives a $Z(\theta)$ independent of $\theta$, and so $\nabla_{\theta} \log Z(\theta) = 0$, which by Lemma \ref{lem:gibbs-score} means that $\mathbb{E}_{\pi_{\theta}}[\nabla_{\theta} V(X, \theta)] = 0$.

Now, we note that
\begin{align}
    \nabla_\theta \mathbb{E}_{\pi_\theta}[R(X)]
    &=
    \mathbb{E}_{\pi_\theta}\!\left[R(X)\,\nabla_\theta\log\pi_\theta(X)\right] \nonumber\\
    &=
    -\mathbb{E}_{\pi_\theta}\!\left[R(X)\,\nabla_\theta V(X,\theta)\right]
    +
    \mathbb{E}_{\pi_\theta}[R(X)]\,
    \mathbb{E}_{\pi_\theta}[\nabla_\theta V(X,\theta)].
    \label{eq:langevin-process-reward-grad-general}
\end{align}
where, in the second equality, we have substituted in \eqref{eq:gibbs-score}. Although the second term is indeed zero here, we leave this in the covariance form to match \citet{marion2025implicit} and the respective code implementation.

Now, since $\pi_{\mathrm{ref}}$ does not depend on $\theta$, we have that
\begin{align}
\nabla_\theta \mathrm{KL}(\pi_{\mathrm{ref}}\Vert\pi_\theta)
&=
\nabla_\theta \mathbb{E}_{\pi_{\mathrm{ref}}}\!\left[\log \pi_{\mathrm{ref}}(X)-\log\pi_\theta(X)\right] \nonumber\\
&=
-\mathbb{E}_{\pi_{\mathrm{ref}}}\!\left[\nabla_\theta \log\pi_\theta(X)\right] \nonumber\\
&=
\mathbb{E}_{\pi_{\mathrm{ref}}}\!\left[\nabla_\theta V(X,\theta)\right]
-
\mathbb{E}_{\pi_\theta}\!\left[\nabla_\theta V(X,\theta)\right],
\label{eq:langevin-process-kl-grad-general}
\end{align}
where again in final equality we have substituted in \eqref{eq:gibbs-score}.

Thus, from \eqref{eq:langevin-process-reward-grad-general} and \eqref{eq:langevin-process-kl-grad-general}, we have that
\begin{align}
    \nabla_{\theta} \ell(\theta)
    &=
    \left(
    \mathbb{E}_{\pi_{\theta}}[R \, \nabla_{\theta} V]
    -
    \mathbb{E}_{\pi_{\theta}}[R] \mathbb{E}_{\pi_{\theta}}[\nabla_{\theta} V]
    \right)
    +
    \beta_{\mathrm{KL}}
    \left(
    \mathbb{E}_{\pi_{\mathrm{ref}}}[\nabla_{\theta} V]
    -
    \mathbb{E}_{\pi_\theta}[\nabla_{\theta} V]
    \right).
\label{eq:langevin-process-objective-grad-final}
\end{align}

As discussed in detail within Appendix \ref{appdx:ebm-reward-tuning-loss-surrogate-motivation}, for the closely related EBM setting, in practice we explicitly treat particles as \emph{stop-gradient} quantities and in fact utilise scalar \emph{surrogate} losses to realise \eqref{eq:langevin-process-objective-grad-final}.

\subsection{Derivation of $\nabla_{\theta} \ell(\theta)$ for reward tuning of EBMs} \label{appdx:ebm-reward-tuning-loss-actual}

Let $\pi_\theta$ denote an energy-based model (EBM), where $x \in \mathbb{R}^{d_x}$ and $\theta \in \mathbb{R}^{d_\theta}$, so that
\begin{equation} \label{eq:ebm-definition}
    \pi_\theta(x) = \frac{\exp(-E_\theta(x))}{Z_\theta},
    \qquad
    Z_\theta := \int_{\mathbb{R}^{d_x}} \exp(-E_\theta(x)) \,dx.
\end{equation}

Given a reward function $R:\mathbb{R}^{d_x}\to\mathbb{R}$, and a fixed reference distribution $\pi_0$,
we consider the reverse-KL regularised objective,
\begin{equation} \label{eq:J-obj}
    \ell(\theta)
    =
    - \mathbb{E}_{\pi_\theta}[R(X)]
    +
    \beta_{\mathrm{KL}} \,\mathrm{KL}(\pi_\theta \Vert \pi_0),
    \qquad \beta_{\mathrm{KL}} >0,
\end{equation}
where, to be clear, the parameter $\beta_{\mathrm{KL}}$ controls how strongly we penalise deviation from the reference distribution.

Notably, for EBM's, the score $\nabla_{\theta} \log \pi_{\theta}(x)$ takes on a straightforward form, as outlined in Lemma \ref{lemma:ebm-score}.
\\
\begin{lemma}[EBM score identity] 
\label{lemma:ebm-score}
For $\pi_\theta$, as outlined in \eqref{eq:ebm-definition},
\begin{equation} \label{eq:ebm-score}
    \nabla_\theta \log \pi_{\theta}(x)
    =
    - \nabla_{\theta} E_{\theta}(x)
    +
    \mathbb{E}_{x'\sim \pi_{\theta}} \left[ \nabla_\theta E_\theta(x') \right].
\end{equation}
\end{lemma}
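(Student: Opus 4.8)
The plan is to mirror the proof of the Gibbs score identity (Lemma~\ref{lem:gibbs-score}) verbatim, with the potential $V$ replaced by the energy $E_\theta$. First I would take logarithms in the definition \eqref{eq:ebm-definition} to write $\log\pi_\theta(x) = -E_\theta(x) - \log Z_\theta$, and differentiate in $\theta$ to get $\nabla_\theta\log\pi_\theta(x) = -\nabla_\theta E_\theta(x) - \nabla_\theta\log Z_\theta$; note that the first term is the only one depending on $x$.

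The key step is then to identify $\nabla_\theta\log Z_\theta$. Writing $\nabla_\theta\log Z_\theta = Z_\theta^{-1}\nabla_\theta Z_\theta$ and differentiating under the integral sign, I would compute $\nabla_\theta Z_\theta = \int \nabla_\theta\exp(-E_\theta(x'))\,dx' = -\int \exp(-E_\theta(x'))\,\nabla_\theta E_\theta(x')\,dx'$ by the chain rule, so that $\nabla_\theta\log Z_\theta = -\int \pi_\theta(x')\,\nabla_\theta E_\theta(x')\,dx' = -\mathbb{E}_{x'\sim\pi_\theta}[\nabla_\theta E_\theta(x')]$. Substituting this back into the previous display yields exactly \eqref{eq:ebm-score}.

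The only point requiring care — and the step I would treat as the main (mild) obstacle — is justifying the interchange of $\nabla_\theta$ and $\int\,dx'$ in the computation of $\nabla_\theta Z_\theta$. This needs a local domination hypothesis: on a neighbourhood of the point $\theta$, the map $x'\mapsto \|\nabla_\theta E_\theta(x')\|\exp(-E_\theta(x'))$ should be bounded by a fixed integrable function, which is guaranteed by the standing assumptions ($Z_\theta<\infty$ for all $\theta$, together with smoothness of $E_\theta$ in $\theta$) and is exactly the condition already used, implicitly, in the proof of Lemma~\ref{lem:gibbs-score}. Invoking the dominated convergence theorem (equivalently, the differentiation-under-the-integral / Leibniz rule) then closes the argument; no genuinely hard estimate is involved.
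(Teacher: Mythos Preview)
Your proposal is correct and follows exactly the same route as the paper: take logs in \eqref{eq:ebm-definition}, differentiate to isolate $-\nabla_\theta E_\theta(x)-\nabla_\theta\log Z_\theta$, then identify $\nabla_\theta\log Z_\theta=-\mathbb{E}_{\pi_\theta}[\nabla_\theta E_\theta]$ by differentiating under the integral sign and substitute. The only difference is that you make the dominated-convergence justification explicit, whereas the paper leaves it implicit.
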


\begin{proof}
From \eqref{eq:ebm-definition} we have that $\pi_\theta(x) = \frac{\exp(-E_\theta(x))}{Z_\theta} \Rightarrow \log \pi_\theta(x) = -E_\theta(x)-\log Z_\theta$, and so
\begin{equation*}
    \nabla_\theta \log \pi_\theta(x)= -\nabla_\theta E_\theta(x) - \nabla_\theta \log Z_\theta.
\end{equation*}
Furthermore,
\begin{equation*}
    \nabla_\theta \log Z_\theta
    =
    \frac{1}{Z_\theta}\nabla_\theta Z_\theta
    =
    \frac{1}{Z_\theta}\int -\exp(-E_\theta(x))\,\nabla_\theta E_\theta(x)\,dx
    =
    -\mathbb{E}_{\pi_\theta}[\nabla_\theta E_\theta(x)].
\end{equation*}
Substituting the above two relations results in \eqref{eq:ebm-score}.
\end{proof}

This allows us to express the gradients of an expectation as follows.
\\
\begin{lemma}[Gradient of an expectation]
\label{lemma:gradient-of-expectation}
If $f:\mathbb{R}^{d_x}\to\mathbb{R}$ is independent of $\theta$ and integrable under $\pi_\theta$, then we have
\begin{equation} \label{eq:gradient-of-expectation}
    \nabla_\theta \mathbb{E}_{\pi_\theta}[f(x)]
    =
    -\left(
    \mathbb{E}_{\pi_\theta}[f(x)\,\nabla_\theta E_\theta(x)]
    -
    \mathbb{E}_{\pi_\theta}[f(x)]\,\mathbb{E}_{\pi_\theta}[\nabla_\theta E_\theta(x)]
    \right).
\end{equation}
\end{lemma}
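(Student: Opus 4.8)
The plan is to apply the log-derivative (score-function) trick and then substitute the EBM score identity already established in Lemma~\ref{lemma:ebm-score}. First I would write the expectation as an integral, $\mathbb{E}_{\pi_\theta}[f(x)] = \int f(x)\,\pi_\theta(x)\,\mathrm{d}x$, and differentiate under the integral sign to get $\nabla_\theta \mathbb{E}_{\pi_\theta}[f(x)] = \int f(x)\,\nabla_\theta \pi_\theta(x)\,\mathrm{d}x$; this is where the assumption that $f$ does not depend on $\theta$ is used, so that no extra $\nabla_\theta f$ term appears.

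Next I would rewrite $\nabla_\theta \pi_\theta(x) = \pi_\theta(x)\,\nabla_\theta \log \pi_\theta(x)$, the elementary identity underlying the trick, giving $\nabla_\theta \mathbb{E}_{\pi_\theta}[f(x)] = \mathbb{E}_{\pi_\theta}\!\left[f(x)\,\nabla_\theta \log \pi_\theta(x)\right]$. Then I would plug in the EBM score identity \eqref{eq:ebm-score}, namely $\nabla_\theta \log \pi_\theta(x) = -\nabla_\theta E_\theta(x) + \mathbb{E}_{\pi_\theta}[\nabla_\theta E_\theta(x')]$, and distribute the expectation over the two terms. The second term is constant in $x$, so it pulls out of the outer expectation as $\mathbb{E}_{\pi_\theta}[f(x)]\,\mathbb{E}_{\pi_\theta}[\nabla_\theta E_\theta(x)]$, and collecting signs yields exactly \eqref{eq:gradient-of-expectation}.

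The only genuine subtlety is the interchange of differentiation and integration in the first step, which requires a dominated-convergence-type justification: one needs $E_\theta(x)$ to be $C^1$ in $\theta$ and a local integrable envelope for $x \mapsto f(x)\,\pi_\theta(x)\,\nabla_\theta E_\theta(x)$ uniformly for $\theta$ in a neighbourhood. I expect this to be the main (if routine) obstacle, and I would either invoke the standing smoothness/integrability hypotheses implicit in the statement ("$f$ integrable under $\pi_\theta$", $E_\theta$ smooth as used throughout for Langevin dynamics) or state the envelope condition explicitly as part of the lemma's hypotheses. Everything else is a one-line algebraic manipulation once Lemma~\ref{lemma:ebm-score} is in hand.
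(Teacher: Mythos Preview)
Your proposal is correct and matches the paper's proof essentially step for step: write the expectation as an integral, differentiate under the integral, rewrite $\nabla_\theta\pi_\theta=\pi_\theta\nabla_\theta\log\pi_\theta$, and then substitute the EBM score identity from Lemma~\ref{lemma:ebm-score} and expand. The paper does not discuss the dominated-convergence justification you flag, so your treatment is, if anything, slightly more careful on that point.
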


\begin{proof}
We have that
\begin{align*}
    \nabla_{\theta} \mathbb{E}_{\pi_{\theta}}[f(x)] 
    = \nabla_{\theta} \int_{\mathbb{R}^{d_{x}}} f(x) \pi_{\theta}(x) \ dx 
    &= \int_{\mathbb{R}^{d_{x}}} f(x) \nabla_{\theta} \pi_{\theta}(x) \ dx \\
    &= \int_{\mathbb{R}^{d_{x}}} f(x)  \pi_{\theta}(x) \nabla_{\theta} \log \pi_{\theta}(x) \ dx \\
    &= \mathbb{E}_{\pi_{\theta}} \left[ f(x) \nabla_{\theta} \log \pi_{\theta}(x) \right] \\
    &= \mathbb{E}_{\pi_{\theta}} \left[ f(x) \left( - \nabla_{\theta} E_{\theta}(x) + \mathbb{E}_{\pi_{\theta}} \left[ \nabla_\theta E_\theta(x) \right] \right) \right] \\
    &= \mathbb{E}_{\pi_{\theta}} \left[ - f(x)\nabla_{\theta} E_{\theta}(x) \right] + \mathbb{E}_{\pi_{\theta}} \left[ f(x) \right] \mathbb{E}_{\pi_{\theta}} \left[ \nabla_\theta E_\theta(x) \right] \\
    &= - \left( \mathbb{E}_{\pi_{\theta}} \left[ f(x)\nabla_{\theta} E_{\theta}(x) \right] -\mathbb{E}_{\pi_{\theta}} \left[ f(x) \right] \mathbb{E}_{\pi_{\theta}} \left[ \nabla_\theta E_\theta(x) \right] \right),
\end{align*}
where we have utilised Lemma \ref{lemma:ebm-score} in the fourth equality, giving us the desired identity.
\end{proof}

First we consider the gradient of the reward term, $\nabla_{\theta} \mathbb{E}_{\pi_{\theta}} \left[ R(x) \right ]$, which we obtain through leveraging Lemma \ref{lemma:gradient-of-expectation}, with $f(x) = R(x)$, resulting in
\begin{equation}  \label{eq:reward-gradient-full}
    \nabla_{\theta} \mathbb{E}_{\pi_{\theta}} \left[ R(x) \right ]
    = 
    - \left(
    \mathbb{E}_{\pi_\theta}[R(x) \, \nabla_\theta E_\theta(x)]
    -
    \mathbb{E}_{\pi_\theta}[R(x)] \, \mathbb{E}_{\pi_\theta}[\nabla_\theta E_\theta(x)]
    \right).
\end{equation}

Then, to obtain the gradient of the $\mathrm{KL}$ term, $\nabla_{\theta} \mathrm{KL} \left( \pi_{\theta} \Vert \pi_{0} \right)$, we first note that
\begin{align*}
    \mathrm{KL} \left( \pi_{\theta} \Vert \pi_{0} \right) 
    &= 
    \mathbb{E}_{\pi_{\theta}} \left[ \log \pi_{\theta}(x) - \log \pi_{0}(x) \right], \\ \\
    \Rightarrow 
    \nabla_{\theta} \mathrm{KL} \left( \pi_{\theta} \Vert \pi_{0} \right)
    &= 
    \nabla_{\theta} \mathbb{E}_{\pi_{\theta}} \left[ \log \pi_{\theta}(x) - \log \pi_{0}(x) \right] \\
    &= 
    \mathbb{E}_{\pi_{\theta}} \left[ \nabla_{\theta} \log \pi_{\theta}(x) \right] 
    + 
    \mathbb{E}_{\pi_{\theta}} \left[ ( \log \pi_{\theta}(x) - \log \pi_{0}(x) ) \, \nabla_{\theta} \log \pi_{\theta}(x) \right] \\
    &= \mathbb{E}_{\pi_{\theta}} \left[ ( \log \pi_{\theta}(x) - \log \pi_{0}(x) ) \, \nabla_{\theta} \log \pi_{\theta}(x) \right],
\end{align*}
where we have used the product rule for the gradient of the expectation in the third equality, whilst in the fourth equality we have used the fact that $\mathbb{E}_{\pi_{\theta}} \left[ \nabla_{\theta} \log \pi_{\theta}(x) \right] = \int \pi_{\theta} (x) \frac{\nabla_{\theta}\pi_{\theta}(x)}{\pi_{\theta}(x)} \, dx = \nabla_{\theta} \int \pi_{\theta}(x) \, dx = \nabla_{\theta} (1) = 0$, that is, the score has zero mean.

Now, subbing in the logarithmic form of the first part of \eqref{eq:ebm-definition}, into the above, yields
\begin{align} \label{eq:kl-gradient-reverse-kl-halfway}
    \nabla_{\theta} \mathrm{KL} \left( \pi_{\theta} \Vert \pi_{0} \right) 
    &= 
    \mathbb{E}_{\pi_{\theta}} \left[ \left\{ \left( - E_{\theta}(x) - \log Z_{\theta} \right ) - \left( - E_{0}(x) - \log Z_{0} \right ) \right\} \, \nabla_{\theta} \log \pi_{\theta}(x) \right] \nonumber \\
    &=
    - \mathbb{E}_{\pi_{\theta}} \left[ \left( E_{\theta}(x) - E_{0}(x) \right) \, \nabla_{\theta} \log \pi_{\theta}(x) \right]
    -
    \left( \log Z_{\theta} - \log Z_{0} \right) \mathbb{E}_{\pi_{\theta}} \left[ \nabla_{\theta} \log \pi_{\theta}(x) \right] \nonumber \\
    &= 
    - \mathbb{E}_{\pi_{\theta}} \left[ \left( E_{\theta}(x) - E_{0}(x) \right) \, \nabla_{\theta} \log \pi_{\theta}(x) \right],
\end{align}
where we have again used the fact that the score has zero mean.

Subsequently, utilising Lemma \ref{lemma:ebm-score} and expanding terms results in
\begin{equation} \label{eq:kl-gradient-full}
    \nabla_{\theta} \mathrm{KL} \left( \pi_{\theta} \Vert \pi_{0} \right)
    =
    \mathbb{E}_{\pi_{\theta}} \left[ \left( E_{\theta}(x) - E_{0}(x) \right) \, \nabla_{\theta} E_{\theta}(x) \right]
    -
    \mathbb{E}_{\pi_{\theta}} \left[ E_{\theta}(x) - E_{0}(x) \right] \, \mathbb{E}_{\pi_{\theta}} \left[ \nabla_\theta E_{\theta}(x) \right].
\end{equation}

Indeed, subbing in \eqref{eq:reward-gradient-full} and \eqref{eq:kl-gradient-full} into the gradient of \eqref{eq:J-obj} gives
\begin{align} \label{eq:J-obj-gradient}
    \nabla_{\theta} \ell (\theta) 
    &= - \nabla_{\theta} \mathbb{E}_{x \sim \pi_\theta}[R(x)] \nonumber
    +
    \beta_{\mathrm{KL}} \, \nabla_{\theta} \, \mathrm{KL}(\pi_\theta \,\|\, \pi_0) \\
    &= \left(
    \mathbb{E}_{\pi_\theta}[R(x) \, \nabla_\theta E_\theta(x)]
    -
    \mathbb{E}_{\pi_\theta}[R(x)] \, \mathbb{E}_{\pi_\theta}[\nabla_\theta E_\theta(x)]
    \right) \nonumber \\
    &\phantom{= \ } + 
    \beta_{\mathrm{KL}} \left( 
        \mathbb{E}_{\pi_{\theta}} \left[ \left( E_{\theta}(x) - E_{0}(x) \right) \, \nabla_{\theta} E_{\theta}(x) \right]
        -
        \mathbb{E}_{\pi_{\theta}} \left[ E_{\theta}(x) - E_{0}(x) \right] \, \mathbb{E}_{\pi_{\theta}} \left[ \nabla_\theta E_{\theta}(x) \right] 
    \right),
\end{align}
which can also be concisely expressed using the quantity $A_{\theta}(x) = R(x) + \beta_{\mathrm{KL}} \left( E_{\theta}(x) - E_{0}(x) \right)$,
\begin{equation} \label{eq:J-obj-gradient-concise}
    \nabla_{\theta} \ell (\theta)
    =
        \mathbb{E}_{\pi_{\theta}} \left[ A_{\theta}(x) \, \nabla_{\theta} E_{\theta}(x) \right] 
        - 
        \mathbb{E}_{\pi_{\theta}} \left[ A_{\theta}(x) \right] \, \mathbb{E}_{\pi_{\theta}} \left[ \nabla_{\theta} E_{\theta}(x) \right]
    .
\end{equation}

\subsection{Motivation of surrogate losses for reward tuning of EBMs} \label{appdx:ebm-reward-tuning-loss-surrogate-motivation}

Although conceptually straightforward, particularly when presented in the concise covariance form outlined in \eqref{eq:J-obj-gradient-concise}, directly optimising the objective $\ell (\theta)$ via automatic differentiation in an efficient manner is not trivial. Specifically, the difficulty is not in deriving $\nabla_{\theta} \ell (\theta)$, but rather in realising this gradient when expectations under $\pi_{\theta}$ are approximated from MCMC.

Indeed, $\nabla_{\theta} \ell (\theta)$ is a distributional derivative, that is it describes how the (stationary) model density $\pi_{\theta}$ changes w.r.t. $\theta$. In particular, not only are local changes in the energy $E_{\theta}(x)$, at fixed $x$, accounted for, but so is the global re-weighting of the probability mass induced by the $\theta$ dependence of the partition function $Z(\theta)$. In fact, this dependence exists implicitly through the score $\nabla_\theta \log \pi_\theta(x)$, where $\nabla_\theta \log \pi_\theta(x)=-\nabla_\theta E_\theta(x)-\nabla_\theta\log Z_\theta$, and since $\nabla_\theta\log Z_\theta=-\mathbb{E}_{\pi_\theta}[\nabla_\theta E_\theta(x)]$, the intractable partition function contributes implicitly as a mean-centering term in the score, and is consequently responsible for the covariance structure of the exact gradient, $\nabla_{\theta} \ell (\theta)$. 

In contrast, samples from $\pi_{\theta}$ are typically generated by a finite number of steps of a Markov chain, where unrolling the respective sampler, and differentiating through the resulting computation graph, results in a path-wise gradient of the finite-step sampling distribution. Indeed, this depends upon the step size and number of steps specified. Crucially, such gradients do not, in general, correspond to the score-function gradient of the (stationary) distribution $\pi_{\theta}$, but instead optimise a different, sampler-dependent objective. It is for this reason that MCMC samples are treated as stop-gradient inputs and we instead rely on analytical score identities to obtain gradients w.r.t. $\theta$.

Given particles $\{ X_i \}_{i=1}^N$, approximately distributed as $\pi_{\theta}$, a natural Monte Carlo
estimator of $\nabla_{\theta} \ell (\theta)$ is thus
\begin{align} \label{eq:J-obj-gradient-concise-MC}
    \widehat{\nabla_\theta \ell}
    =
        \frac{1}{N} \sum_{i=1}^{N} A_{\theta}(X_i) \, \nabla_{\theta} E_{\theta}(X_i)
        -
        \left( \frac{1}{N} \sum_{i=1}^{N} A_{\theta}(X_{i}) \right)
        \left( \frac{1}{N} \sum_{i=1}^{N} \nabla_{\theta} E_{\theta}(X_i) \right),
\end{align}
where we note that the same-sample plug-in estimator is consistent, given exact i.i.d. samples from $\pi_{\theta}$, up to usual finite-sample covariance bias. Implementing this estimator efficiently in current automatic differentiation frameworks is not straightforward. First, note that \eqref{eq:J-obj-gradient-concise-MC} requires per-sample gradients, $\nabla_{\theta} E_{\theta}(X_i)\in \mathbb{R}^{d_{\theta}}$, which can involve a significantly greater number of backward passes and is thus more computationally expensive. Furthermore, a more nuanced subtlety arises from the fact that the covariance multiplier, $A_{\theta}(x) = R(x) + \beta_{\mathrm{KL}} (E_{\theta}(x) - E_{0}(x))$ depends explicitly on $\theta$. Indeed, whilst in the analytical gradient $\nabla_{\theta} \ell(\theta)$, $A_\theta(x)$ is considered a scalar weight multiplying the score $\nabla_\theta \log \pi_\theta(x)$, a naive implementation of automatic differentiation that propagates through both factors would introduce additional terms from differentiating $A_\theta(x)$ itself. To be clear, for the
reverse-KL term, differentiating through the factor $E_\theta(x)-E_0(x)$ results in additional contributions that are absent from the true score-function gradient and correspond to optimising a distinctly different objective. That is, the reverse-KL term is particularly prone to incorrect differentiation in practice.

\subsection{Derivation of surrogate losses for reward tuning of EBMs} \label{appdx:ebm-reward-tuning-loss-surrogate}
Instead of optimising the objective outlined in \eqref{eq:J-obj}, using the exact gradient derived in Appendix \ref{appdx:ebm-reward-tuning-loss-actual}, we instead derive scalar \textit{surrogate} losses, whose gradients match that of the intended particle approximation of $\nabla_{\theta} \mathbb{E}_{\pi_{\theta}} \left[ R(x) \right]$ and $\nabla_{\theta} \mathrm{KL} \left( \pi_{\theta} \Vert \pi_{0} \right)$ respectively, whilst permitting standard automatic differentiation. The key to deriving these surrogates is through recognising that the weighted sums, of per-sample energy gradients, can be equivalently expressed as the gradient of a scalar energy surrogate, provided that the respective multipliers are considered as constants during differentiation. 

\begin{lemma}[Surrogate loss form]
\label{lem:weighted-energy-surrogate}
Let \(a_\theta(x)\) be a scalar multiplier function and $\left\{ (X_i, w_i ) \right\}^{N}_{i=1}$ denote particles, with respective normalised weights, approximating $\pi_{\theta}$. We define the surrogate loss as
\begin{equation} \label{eq:generic-surrogate}
    \widehat{\mathcal{L}}_{a}(\theta)
    =
    \sum_{i=1}^{N} w_{i} \, \mathrm{stopgrad}\left\{ a_{\theta}(X_i) - \sum_{j=1}^{N} w_j a_{\theta}(X_j) \right\} \, E_{\theta}(X_i),
\end{equation}
where $\mathrm{stopgrad}(\cdot)$ denotes the treatment of the argument as constant during differentiation.

Consequently,
\begin{equation} \label{eq:generic-surrogate-gradient}
    \nabla_{\theta} \widehat{\mathcal{L}}_{a}(\theta)
    =
    \sum_{i=1}^{N} w_{i} \, a_{\theta}(X_i) \, \nabla_{\theta} E_{\theta}(X_i) - \left(\sum_{j=1}^{N} w_j a_{\theta}(X_j)\right) \left(\sum_{i=1}^{N} w_{i} \, \nabla_{\theta} E_{\theta}(X_i) \right)
\end{equation}
\end{lemma}

\begin{proof}
    Since $\mathrm{stopgrad}\left\{ a_{\theta}(X_i) - \sum_{j=1}^{N} w_j a_{\theta}(X_j) \right\}$ is detached, the only remaining $\theta$-dependence exists in $E_{\theta}$,
    \begin{align*}
        \nabla_{\theta} \widehat{\mathcal{L}}_{a}(\theta)
        &= 
        \nabla_{\theta} \sum_{i=1}^{N} w_{i} \, \mathrm{stopgrad}\left\{ a_{\theta}(X_i) - \sum_{j=1}^{N} w_j a_{\theta}(X_j) \right\} \, E_{\theta}(X_i), \\
        &= 
        \sum_{i=1}^{N} w_{i} \, \mathrm{stopgrad}\left\{ a_{\theta}(X_i) - \sum_{j=1}^{N} w_j a_{\theta}(X_j) \right\} \, \nabla_{\theta} E_{\theta}(X_i), \\
        &= 
        \sum_{i=1}^{N} w_{i} \, a_{\theta}(X_i) \, \nabla_{\theta} E_{\theta}(X_i) - \left(\sum_{j=1}^{N} w_j a_{\theta}(X_j)\right) \left(\sum_{i=1}^{N} w_{i} \, \nabla_{\theta} E_{\theta}(X_i) \right),
    \end{align*}
    as required.
\end{proof}

Now, setting $a_{\theta}(x) = R(x)$ explicitly gives, by Lemma \ref{lem:weighted-energy-surrogate},
\begin{align} \label{eq:loss-surrogate-reward}
    \widehat{\mathcal{L}}_{\text{Rew}}(\theta) 
    &= 
    \sum_{i=1}^{N} w_{i} \, \mathrm{stopgrad}\left\{ R(X_i) - \sum_{j=1}^{N} w_j R(X_j) \right\} \, E_{\theta}(X_i), \\
    \nonumber \\
    \Rightarrow 
    \nabla_{\theta} \widehat{\mathcal{L}}_{\text{Rew}}(\theta) 
    &=
    \sum_{i=1}^{N} w_{i} \, R(X_i) \, \nabla_{\theta} E_{\theta}(X_i) - \left(\sum_{j=1}^{N} w_j R(X_j)\right) \left(\sum_{i=1}^{N} w_{i} \, \nabla_{\theta} E_{\theta}(X_i) \right),
    \label{eq:loss-surrogate-reward-gradient}
\end{align}
where $\nabla_{\theta} \widehat{\mathcal{L}}_{\text{Rew}}(\theta)$ is thus the particle approximation of $- \nabla_{\theta} \mathbb{E}_{\pi_{\theta}} \left[ R(x) \right ]$.

Next, setting $a_{\theta}(x) = E_{\theta}(x) - E_{0}(x)$ explicitly gives, again by Lemma \ref{lem:weighted-energy-surrogate},
\begin{align} \label{eq:loss-surrogate-kl}
    \widehat{\mathcal{L}}_{\mathrm{KL}}(\theta)
    &= 
    \sum_{i=1}^{N} w_{i} \, \mathrm{stopgrad}\left\{ E_{\theta}(X_i) - E_{0}(X_i) - \left(\sum_{j=1}^{N} w_j \left(E_{\theta}(X_j) - E_{0}(X_j)\right) \right) \right\} \, E_{\theta}(X_i), \\
    \nonumber \\
    \Rightarrow 
    \nabla_{\theta} \widehat{\mathcal{L}}_{\mathrm{KL}}(\theta) 
    &=
    \sum_{i=1}^{N} w_{i} \, \left(E_{\theta}(X_i) - E_{0}(X_i)\right) \, \nabla_{\theta} E_{\theta}(X_i) - \left(\sum_{j=1}^{N} w_j \left(E_{\theta}(X_j) - E_{0}(X_j)\right) \right) \left(\sum_{i=1}^{N} w_{i} \, \nabla_{\theta} E_{\theta}(X_i) \right), \label{eq:loss-surrogate-kl-gradient}
\end{align}
where, to be clear, $\nabla_{\theta} \widehat{\mathcal{L}}_{\mathrm{KL}}(\theta)$ is the particle approximation of $\nabla_{\theta} \mathrm{KL} \left( \pi_{\theta} \Vert \pi_{0} \right)$.

Thus, if $\widehat{\mathcal{L}}_{\text{total}} = \widehat{\mathcal{L}}_{\text{Rew}}(\theta) + \beta_{\mathrm{KL}} \widehat{\mathcal{L}}_{\mathrm{KL}}(\theta)$, then we have, by \eqref{eq:loss-surrogate-reward-gradient} and \eqref{eq:loss-surrogate-kl-gradient}, that $\nabla_{\theta} \widehat{\mathcal{L}}_{\text{total}}$ is the particle approximation of $\nabla_{\theta} \ell(\theta)$, which means that descent on $\widehat{\mathcal{L}}_{\text{total}}(\theta)$ implements descent using the intended particle approximation of $\nabla_\theta \ell(\theta)$.

\subsection{Derivation of $\nabla_{\theta} \ell$ as expectation over $\pi_{\theta}$ for reward tuning} \label{app:form-of-H-reward-tuning}
For completeness, we outline how the gradient of the loss, for both the forward-$\mathrm{KL}$ and reverse-$\mathrm{KL}$ regularised objectives, can be expressed as an expectation over $\pi_{\theta}$, as in \eqref{eq:gradient_form}. For clarity, we follow the notation of Appendix \ref{appdx:langevin-processes-gradient-derivation} and \ref{appdx:ebm-reward-tuning-loss-actual} respectively.

In the case of the forward-$\mathrm{KL}$ regularised objective, recall we have that
\begin{equation*}
    \ell(\theta)
    =
    - \mathbb{E}_{\pi_{\theta}}[R(x)]
    +
    \beta_{\mathrm{KL}} \, \mathrm{KL} \left(\pi_{\mathrm{ref}} \Vert \pi_\theta\right), \qquad \beta_{\mathrm{KL}}>0,
\end{equation*}
where, from the first equality of \eqref{eq:langevin-process-reward-grad-general}, we note
\begin{align*}
    \nabla_\theta \mathbb{E}_{\pi_\theta}[R(x)]
    &=
    \mathbb{E}_{\pi_\theta}\!\left[R(x)\,\nabla_\theta\log\pi_\theta(x)\right],
\end{align*}
whilst, from the last equality of \eqref{eq:langevin-process-kl-grad-general},
\begin{align*}
\nabla_\theta \mathrm{KL}(\pi_{\mathrm{ref}}\Vert\pi_\theta)
&=
\mathbb{E}_{\pi_{\mathrm{ref}}}\!\left[\nabla_\theta V(x,\theta)\right]
-
\mathbb{E}_{\pi_\theta}\!\left[\nabla_\theta V(x,\theta)\right],
\end{align*}
and thus combine accordingly, to obtain
\begin{align*}
    \nabla_\theta \ell(\theta)
    &=
    - \mathbb{E}_{\pi_\theta}\!\left[R(x)\,\nabla_\theta\log\pi_\theta(x)\right]
    + \beta_{\mathrm{KL}} \left( \mathbb{E}_{\pi_{\mathrm{ref}}}\!\left[\nabla_\theta V(x,\theta)\right] - \mathbb{E}_{\pi_\theta}\!\left[\nabla_\theta V(x,\theta)\right] \right),
\end{align*}
that is, we have
\begin{align*}
    H_\theta (\cdot)
    &=
    - R(\cdot)\,\nabla_\theta\log\pi_\theta(\cdot)
    + \beta_{\mathrm{KL}} \left( \mathbb{E}_{y \sim \pi_{\mathrm{ref}}}\!\left[\nabla_\theta V(y,\theta)\right] - \nabla_\theta V(\cdot,\theta) \right).
\end{align*}

In the case of the reverse-$\mathrm{KL}$ regularised objective, recall we instead have
\begin{equation*}
    \ell(\theta)
    =
    - \mathbb{E}_{\pi_\theta}[R(X)]
    +
    \beta_{\mathrm{KL}} \,\mathrm{KL}(\pi_\theta \Vert \pi_0),
    \qquad \beta_{\mathrm{KL}} >0,
\end{equation*}
and again note 
\begin{align*}
    \nabla_\theta \mathbb{E}_{\pi_\theta}[R(x)]
    &=
    \mathbb{E}_{\pi_\theta}\!\left[R(x)\,\nabla_\theta\log\pi_\theta(x)\right].
\end{align*}
In contrast, we instead have, from \eqref{eq:kl-gradient-reverse-kl-halfway}, that
\begin{align*}
    \nabla_{\theta} \mathrm{KL} \left( \pi_{\theta} \Vert \pi_{0} \right) 
    &= 
    - \mathbb{E}_{\pi_{\theta}} \left[ \left( E_{\theta}(x) - E_{0}(x) \right) \, \nabla_{\theta} \log \pi_{\theta}(x) \right],
\end{align*}
and so obtain
\begin{align*}
    \nabla_\theta \ell(\theta)
    &=
    - \mathbb{E}_{\pi_\theta}\!\left[R(x)\,\nabla_\theta\log\pi_\theta(x)\right]
    - \beta_{\mathrm{KL}} \mathbb{E}_{\pi_{\theta}} \left[ \left( E_{\theta}(x) - E_{0}(x) \right) \, \nabla_{\theta} \log \pi_{\theta}(x) \right],
\end{align*}
that is, we have
\begin{align*}
    H_\theta (\cdot)
    &=
    - \left[ R(\cdot)
    + \beta_{\mathrm{KL}} \left( E_{\theta}(\cdot) - E_{0}(\cdot) \right) \right] \, \nabla_{\theta} \log \pi_{\theta}(\cdot).
\end{align*}
Therefore, both the forward-$\mathrm{KL}$ and reverse-$\mathrm{KL}$ regularised objectives fit the form of \eqref{eq:gradient_form} as required.

\newpage

\section{Experimental Details - Quantitative Results} \label{appdx:quantitative_results}

Here, we collate quantitative results for the reward tuning of Langevin processes experiment (see Appendix \ref{appdx:exp-langevin-process-details}) and the Bayesian image deblurring experiment (see Appendix \ref{app:experimental_details-image_deblurring}). For full experimental details, refer to the respective appendices.

\begin{table}[htbp]
\centering
\caption{Terminal reward values, across Langevin processes problem settings, with $\beta_{\mathrm{KL}} = 0$ in all cases.}
\label{tab:beta-kl-zero-results}
\begin{tabular}{llcccc}
\toprule
\multirow{2}{*}{\textbf{Method}} 
& \multirow{2}{*}{\textbf{OPT}} 
& \multicolumn{4}{c}{\textbf{Problem Setting}} \\
\cmidrule(lr){3-6}
& 
& $V_{\mathrm{dual}}$ \& $R_{\mathrm{smooth}}$
& $V_{\mathrm{dual}}$ \& $R_{\mathrm{hard}}$
& $V_{\mathrm{sparse}}$ \& $R_{\mathrm{hard}}$
& $V_{\mathrm{tight}}$ \& $R_{\mathrm{tight}}$ \\
\midrule

\multirow{2}{*}{\gls*{ImpDiff}} 
& \textsc{SGD}  
& $0.6364 \pm 0.0009$ & $0.0095 \pm 0.0001$ & $0.0173 \pm 0.0001$ & $0.0115 \pm 0.0008$ \\
& \textsc{Adam} 
& $0.6026 \pm 0.0013$ & $0.0169 \pm 0.0002$ & $0.0363 \pm 0.0001$ & $0.0142 \pm 0.0012$ \\

\midrule

\multirow{2}{*}{\gls*{SOUL}} 
& \textsc{SGD}  
& $0.5714 \pm 0.1270$ & $0.0059 \pm 0.0055$ & $0.0173 \pm 0.0044$ & $0.0000 \pm 0.0000$ \\
& \textsc{Adam} 
& $0.6477 \pm 0.0037$ & $0.0187 \pm 0.0003$ & $0.0365 \pm 0.0008$ & $0.0000 \pm 0.0000$ \\

\midrule

\multirow{2}{*}{\gls*{SOSMC}-\gls*{ULA}}
& \textsc{SGD}  
& $0.6434 \pm 0.0010$ & $0.0133 \pm 0.0002$ & $0.0224 \pm 0.0003$ & $0.1327 \pm 0.0994$ \\
& \textsc{Adam} 
& $0.6484 \pm 0.0007$ & $0.0186 \pm 0.0001$ & $0.0369 \pm 0.0001$ & $0.2497 \pm 0.0869$ \\ \addlinespace[0.3em]

\multirow{2}{*}{\gls*{SOSMC}-\gls*{MALA}}
& \textsc{SGD}  
& $0.6435 \pm 0.0013$ & $0.0128 \pm 0.0001$ & $0.0204 \pm 0.0002$ & $0.3788 \pm 0.0006$ \\
& \textsc{Adam} 
& $0.6491 \pm 0.0006$ & $0.0185 \pm 0.0000$ & $0.0368 \pm 0.0002$ & $0.3994 \pm 0.0000$ \\ \addlinespace[0.3em]

\multirow{2}{*}{\gls*{SOSMC}-\gls*{RWM}}
& \textsc{SGD}  
& $0.6424 \pm 0.0015$ & $0.0129 \pm 0.0002$ & $0.0207 \pm 0.0003$ & $0.3803 \pm 0.0004$ \\
& \textsc{Adam} 
& $0.6476 \pm 0.0010$ & $0.0186 \pm 0.0000$ & $0.0368 \pm 0.0002$ & $0.3994 \pm 0.0000$ \\ \addlinespace[0.3em]

\multirow{2}{*}{\gls*{SOSMC}-\gls*{HMC}}
& \textsc{SGD}  
& $0.6416 \pm 0.0008$ & $0.0127 \pm 0.0001$ & $0.0202 \pm 0.0003$ & $0.3775 \pm 0.0009$ \\
& \textsc{Adam} 
& $0.6485 \pm 0.0011$ & $0.0183 \pm 0.0004$ & $0.0366 \pm 0.0001$ & $0.3994 \pm 0.0000$ \\

\bottomrule
\end{tabular}
\end{table}

\begin{table}[htbp]
\centering
\caption{Terminal reward values, across Langevin processes problem settings, with $\beta_{\mathrm{KL}} = 0.1$ in the cases of $V_{\mathrm{dual}}$ \& $R_{\mathrm{smooth}}$ and $V_{\mathrm{tight}}$ \& $R_{\mathrm{tight}}$, whilst $\beta_{\mathrm{KL}} = 0.001$ in the cases of $V_{\mathrm{dual}}$ \& $R_{\mathrm{hard}}$ and $V_{\mathrm{sparse}}$ \& $R_{\mathrm{hard}}$.}
\label{tab:beta-kl-selected-results}
\begin{tabular}{llcccc}
\toprule
\multirow{2}{*}{\textbf{Method}} 
& \multirow{2}{*}{\textbf{OPT}} 
& \multicolumn{4}{c}{\textbf{Problem Setting}} \\
\cmidrule(lr){3-6}
& 
& $V_{\mathrm{dual}}$ \& $R_{\mathrm{smooth}}$
& $V_{\mathrm{dual}}$ \& $R_{\mathrm{hard}}$
& $V_{\mathrm{sparse}}$ \& $R_{\mathrm{hard}}$
& $V_{\mathrm{tight}}$ \& $R_{\mathrm{tight}}$ \\
\midrule

\gls*{ImpDiff}
& \textsc{Adam} 
& $0.5766 \pm 0.0013$ & $0.0173 \pm 0.0002$ & $0.0357 \pm 0.0002$ & $0.0102 \pm 0.0003$ \\

\midrule

\gls*{SOUL}
& \textsc{Adam} 
& $0.3680 \pm 0.1997$ & $0.0132 \pm 0.0057$ & $0.0367 \pm 0.0010$ & $0.0000 \pm 0.0000$ \\

\midrule

\gls*{SOSMC}-\gls*{ULA}
& \textsc{Adam} 
& $0.6020 \pm 0.0010$ & $0.0181 \pm 0.0000$ & $0.0361 \pm 0.0002$ & $0.2002 \pm 0.0854$ \\ \addlinespace[0.1em]

\gls*{SOSMC}-\gls*{MALA}
& \textsc{Adam} 
& $0.6036 \pm 0.0012$ & $0.0181 \pm 0.0001$ & $0.0361 \pm 0.0001$ & $0.3220 \pm 0.0004$ \\ \addlinespace[0.1em]

\gls*{SOSMC}-\gls*{RWM}
& \textsc{Adam} 
& $0.6016 \pm 0.0012$ & $0.0181 \pm 0.0001$ & $0.0362 \pm 0.0001$ & $0.3216 \pm 0.0004$ \\ \addlinespace[0.1em]

\gls*{SOSMC}-\gls*{HMC}
& \textsc{Adam} 
& $0.6033 \pm 0.0009$ & $0.0181 \pm 0.0000$ & $0.0360 \pm 0.0002$ & $0.3218 \pm 0.0004$ \\

\bottomrule
\end{tabular}
\end{table}

\begin{table}[htbp]
\centering
\caption{Terminal evaluation metrics, across Bayesian image deblurring setups, with $\lambda= 0.4$ in all cases.}
\label{tab:todo}
    \begin{tabular}{l ccc ccc ccc}
    \toprule
    \multirow{2}{*}{\textbf{Method}} & \multicolumn{3}{c}{\textbf{Setup A}} & \multicolumn{3}{c}{\textbf{Setup B}} & \multicolumn{3}{c}{\textbf{Setup C}} \\
    \cmidrule(lr){2-4} \cmidrule(lr){5-7} \cmidrule(lr){8-10}
     & $\hat{\theta}$ & MSE & SSIM & $\hat{\theta}$ & MSE & SSIM & $\hat{\theta}$ & MSE & SSIM \\
    \midrule
    \gls*{MYPGD} & $-3.30$ & $312.71$ & $0.74$ & $-3.18$ & $220.42$ & $0.75$ & $-3.94$ & $1520.36$ & $0.22$ \\
    \gls*{SOSMC}-\gls*{MYULA} & $-3.48$ & $215.47$ & $0.80$ & $-3.23$ & $136.63$ & $0.81$ & $-3.78$ & $1030.20$ & $0.41$ \\
    \bottomrule
    \end{tabular}
\end{table}

\end{document}